\documentclass[final,12pt]{colt2026} 

\usepackage{enumitem}
\usepackage{booktabs}
\usepackage{makecell}
\usepackage[table]{xcolor}
\usepackage{graphicx}
\graphicspath{ {./imgs/} }
\usepackage{subcaption}
\usepackage{thmtools}

\usepackage{bm}

\def\vzero{{\bm{0}}}
\def\ve{{\bm{e}}}

\def\vv{{\bm{v}}}
\def\vw{{\bm{w}}}
\def\vx{{\bm{x}}}

\def\mI{{\bm{I}}}

\DeclareMathOperator*{\argmax}{argmax}
\DeclareMathOperator*{\argmin}{argmin}

\def\Eqref#1{Equation~(\ref{#1})}
\newcommand{\Eqmark}[2]{\stackrel{(#1)}{#2}}



\newtheorem{theorem}{Theorem}[section]
\newtheorem{lemma}{Lemma}[section]
\newtheorem{assumption}{Assumption}[section]
\newtheorem{definition}{Definition}[section]
\newtheorem{proposition}{Proposition}[section]
\newtheorem{corollary}{Corollary}[section]
\newtheorem{remark}{Remark}[section]

\title[Large Stepsize Gradient Descent for Logistic Regression in Low Dimension]{Tight Bounds for Logistic Regression with Large Stepsize Gradient Descent in Low Dimension}
\usepackage{times}

\coltauthor{
 \Name{Michael Crawshaw} \Email{mcrawsha@gmu.edu}\\
 \Name{Mingrui Liu} \Email{mingruil@gmu.edu}\\
 \addr Department of Computer Science, George Mason University, Fairfax, VA 22030
}

\begin{document}

\maketitle

\begin{abstract}
We consider the optimization problem of minimizing the logistic loss with gradient descent to train a linear model for binary classification with separable data. With a budget of $T$ iterations, it was recently shown that an accelerated $1/T^2$ rate is possible by choosing a large stepsize $\eta = \Theta(\gamma^2 T)$ (where $\gamma$ is the dataset's margin) despite the resulting non-monotonicity of the loss. In this paper, we provide a tighter analysis of gradient descent for this problem when the data is two-dimensional: we show that GD with a sufficiently large learning rate $\eta$ finds a point with loss smaller than $\mathcal{O}(1/(\eta \gamma^2 T))$, as long as $T \geq \Omega(n/\gamma + 1/\gamma^2)$, where $n$ is the dataset size. Our improved rate comes from a tighter bound on the time $\tau$ that it takes for GD to transition from unstable (non-monotonic loss) to stable (monotonic loss), via a fine-grained analysis of the oscillatory dynamics of GD in the subspace orthogonal to the max-margin classifier. We also provide a lower bound of $\tau$ matching our upper bound up to logarithmic factors, showing that our analysis is tight.
\end{abstract}


\section{Introduction}
In modern machine learning, optimization algorithms tend to operate in ``unstable"
regimes, where the loss does not monotonically decrease over time, even with full-batch
gradients \citep{cohen2021gradient}. However, the theory of optimization for machine
learning largely considers only stable regimes, where sufficiently small stepsizes for
gradient descent (GD) and its variants will safely ensure monotonic loss decrease
\citep{nesterov2013introductory}. Our limited understanding of unstable optimization has
created a significant gap between optimization algorithms that work, and optimization
algorithms that we understand theoretically.

In this paper, we consider the problem of training linear models for binary
classification by minimizing the logistic loss using gradient descent with large
stepsizes. Despite this problem's simplicity and its fundamental role in machine
learning, the optimization dynamics of gradient descent with large step sizes in this
setting is still not entirely understood. We aim to provide upper and lower bounds on
the iteration complexity required for gradient descent to find an approximate solution
to the optimization problem in question.

Given a dataset of $n$ samples $\{(\vx_i, y_i)\}_{i=1}^n$, where each sample consists of
an input $\vx_i \in \mathbb{R}^d$ and a label $y_i \in \{-1, 1\}$, we denote $\ell(z) =
\log(1 + \exp(-z))$ and consider the optimization problem:
\begin{equation} \label{eq:opt_prob}
    \min_{\vw \in \mathbb{R}^d} \left\{ F(\vw) := \frac{1}{n} \sum_{i=1}^n \ell(y_i \langle \vw, \vx_i \rangle) \right\}.
\end{equation}

We make the following assumptions on the dataset.

~\\
\begin{assumption} \label{ass:dataset}
\begin{enumerate}[label=(\alph*)]
    \item Linear separability: There exists $\vw \in \mathbb{R}^d$ such that
        $y_i \langle \vw, \vx_i \rangle > 0$ for all $i \in [n]$.
    \item Bounded data norm: $\|\vx_i\| \leq 1$ for all $i \in [n]$.
    \item Identical label: $y_i = 1$ for all $i \in [n]$.
\end{enumerate}
\end{assumption}

Assumption \ref{ass:dataset}(b) can be satisfied in practice by replacing all data
$\vx_i$ with $\vx_i / \max_{j \in [n]} \|\vx_j\|$, and Assumption \ref{ass:dataset}(c)
can be made without loss of generality: the objective $F$ only depends on the label
$y_i$ through the product $y_i \vx_i$, so that any sample $(\vx_i, -1)$ can be replaced
by $(-\vx_i, 1)$ while preserving the objective. The same assumptions are made in
previous work \citep{wu2023implicit, wu2024large}.

For a given dataset, we define the maximum margin and the associated classifier by
\begin{equation}
    \gamma = \max_{\|\vw\| = 1} \min_{i \in [n]} y_i \langle \vw, \vx_i \rangle, \quad \vw_* = \argmax_{\|\vw\| = 1} \min_{i \in [n]} y_i \langle \vw, \vx_i \rangle.
\end{equation}
Note $0 < \gamma < 1$, since separable data implies $\gamma > 0$ and bounded data
implies $\gamma \leq \|\vx_i\| \leq 1$.

We consider Gradient Descent (GD) with a constant stepsize $\eta > 0$ for minimizing
\Eqref{eq:opt_prob}:
\begin{equation}
    \vw_{t+1} = \vw_t - \eta \nabla F(\vw_t).
\end{equation}
Similar to previous work \citep{wu2023implicit}, we fix the initialization $\vw_0 =
\vzero$, though our results can be extended for general initialization.

This optimization problem is convex, smooth, and Lipschitz, so classical theory
\citep{nesterov2013introductory} provides myriad guarantees for GD with sufficiently
small stepsize. In particular, since this problem is $L$-smooth with $L=1/4$, we can
show that $F(\vw_t) \leq \widetilde{\mathcal{O}}(1/(\gamma^2 t))$ when $\eta = 1/L = 4$.

However, the classical rate is not the end of the story. Recently, \citet{wu2024large}
showed that GD for $T$ iterations satisfies $F(\vw_T) \leq \mathcal{O}(1/(\gamma^4
T^2))$ if $\eta = \Theta(\gamma^2 T)$, so a large stepsize accelerates optimization for
this problem, despite the resulting non-monotonicity of the loss. In this work, we show
that for the low-dimensional case $d=2$, this rate can be further improved to
$\mathcal{O}(1/(\eta \gamma^2 T))$ for any $\eta \geq \widetilde{\Omega}(n +
1/\gamma^2)$, as long as $T \geq \Omega(n/\gamma + \log(1/\gamma)/\gamma^2)$. We achieve
this improved rate via a sharper bound of the time it takes to transition into a stable
phase, based on a fine-grained analysis of the \textit{oscillatory dynamics} of $\vw_t$
in the subspace orthogonal to $\vw_*$. We also provide a lower bound on the transition
time that matches our upper bound up to logarithmic factors.

From a higher level, we should point out that our goal here is not to achieve the
fastest optimization guarantees by any means necessary. Rather, our primary motivation
is to develop a fine-grained understanding of the unstable dynamics of GD with large
stepsizes. Given that optimization in machine learning tends to operate in unstable
regimes in practice \citep{cohen2021gradient}, we believe that it is important to
develop a rigorous mathematical understanding of unstable optimization in machine
learning with fundamental algorithms like GD, and our results here are a step in this
direction.

\paragraph{Notation} After the abstract, we use $\mathcal{O}, \Omega, \Theta$ to omit
only universal constants, and $\widetilde{\mathcal{O}}, \widetilde{\Omega}$, and
$\widetilde{\Theta}$ to omit only universal constants and factors logarithmic in $n,
1/\gamma, t, 1/\epsilon$. We denote $[n] = \{1, \ldots, n\}$.

\begin{table*}[t]
\begin{center}
\resizebox{\textwidth}{!}{
\begin{tabular}{@{}cccc@{}}
\toprule
& Complexity & Stepsize & Setting \\
\toprule
Gradient Descent \citep{nesterov2013introductory} & $\mathcal{O} \left( LB^2/\epsilon \right)$ & $\eta = 1/L$ & Convex, $L$-smooth \\
\hline
\makecell{First-Order Algorithms \\ \citep{nesterov2013introductory}} & $\Omega \left( B \sqrt{L/\epsilon} \right)$ & - & Convex, $L$-smooth \\
\hline
Gradient Descent \citep{wu2024large} & $\widetilde{\mathcal{O}} \left( \frac{1}{\gamma^2 \sqrt{\epsilon}} \right)$ & $\eta = \Theta \left( 1/\sqrt{\epsilon} \right)$ & Logistic regression \\
\hline
\makecell{Adaptive Gradient \\ Descent \citep{zhang2025gradient}} & $\mathcal{O} \left( 1/\gamma^2 \right)$ & - & Logistic regression \\
\hline
\makecell{First-Order Algorithms \\ \citep{zhang2025gradient}} & $\Omega \left( \min \left( \log n, 1/\gamma^2 \right) \right)^{(a)}$ & - & Logistic regression \\
\midrule
\rowcolor{pink} Gradient Descent (Theorem \ref{thm:upper_bound}) & $\mathcal{O} \left( \frac{n}{\gamma} + \frac{\log(1/\gamma)}{\gamma^2} \right)$ & $\eta \geq \Omega \left( \frac{1}{\epsilon (\gamma n + 1)} \right)$ & \makecell{Logistic regression \\ $d=2$} \\
\hline
\rowcolor{pink} Gradient Descent (Theorem \ref{thm:lower_bound}) & $\Omega \left( \frac{n}{\gamma} + \frac{1}{\gamma^2} \right)^{(b)}$ & $\eta \geq \widetilde{\Omega} \left( n + \frac{1}{\gamma^2} \right)$ & Logistic regression \\
\bottomrule
\end{tabular}
}
\end{center}
\caption{Iteration complexity to find an $\epsilon$-approximate solution of
\Eqref{eq:opt_prob}. Note that $(a)$ and $(b)$ show the time to find a linear separator
and time to reach $\mathcal{O}(1/\eta)$ loss, respectively, however both of these
conditions are necessary for finding an $\epsilon$-approximate solution for sufficiently
small $\epsilon$ and sufficiently large $\eta$.}
\label{tab:complexity}
\end{table*}

\subsection{Technical Overview} \label{sec:overview}
Characterizing the instability induced by large stepsizes is a fundamental difficulty of
analyzing GD in our setting. We know from previous work \citep{wu2024large} that if
$F(\vw_t) \leq 2/\eta$ for some $t$, then the loss $F(\vw_s)$ decreases monotonically
for $s \geq t$. So defining $\tau = \min \{t \geq 0 : F(\vw_t) \leq
1/8\eta\}$\footnote{\citet{wu2024large} defined $\tau$ as the first time at which
$F(\vw_t) \leq 1/\eta$, compared to our $1/8\eta$, though this difference only affects
universal constants in the analysis.}, we divide the trajectory into two phases: the
unstable phase $t \leq \tau$, where the loss may be non-monotonic, and the stable phase
$t \geq \tau$, where it is known that $F(\vw_t) \leq \widetilde{\mathcal{O}}(1/(\gamma^2
\eta t))$. The challenge is proving that GD must enter the stable phase, and bounding
the time when this happens.

\citet{wu2024large} used a ``split comparator" technique to prove that $\tau \leq
\widetilde{\mathcal{O}}(\eta/\gamma^2)$ (for sufficiently large $\eta$), which aligns
with the intuitive feeling that the length of the unstable phase might increase under
larger stepsizes. A key part of their argument is the connection between $\hat{w}_t :=
\langle \vw_t, \vw_* \rangle$ and a quantity called the gradient potential:
\begin{equation}
    G(\vw) = \frac{1}{n} \sum_{i=1}^n |\ell'(\langle \vw, \vx_i \rangle)| = \frac{1}{n} \sum_{i=1}^n \frac{1}{\exp(\langle \vw, \vx_i \rangle) + 1}.
\end{equation}
To rephrase their argument, the component $\hat{w}_t$ of $\vw_t$ in the direction of the
max-margin classifier increases at least proportionally to the gradient potential:
\begin{align} \label{eq:margin_pot}
    \hat{w}_{t+1} - \hat{w}_t = \langle \vw_{t+1} - \vw_t, \vw_* \rangle = \frac{\eta}{n} \sum_{i=1}^n \frac{\langle \vx_i, \vw_* \rangle}{\exp(\langle \vw_t, \vx_i \rangle) + 1} \geq \eta \gamma G(\vw_t),
\end{align}
where the last inequality uses that $\langle \vx_i, \vw_* \rangle \geq \gamma$. Now, to
bound $\tau$, we note that $t < \tau$ means $F(\vw_t) \geq \Omega(1/\eta)$, so that
$G(\vw_t) \geq \Omega(1/\eta)$ (by Lemma \ref{lem:pot_lb}); this means $\hat{w}_{t+1} -
\hat{w}_t \geq \Omega(\gamma)$, so $\hat{w}_t$ must increase at least \textit{linearly}
during the unstable phase. Combining with $\hat{w}_t \leq \|\vw_t\| \leq
\widetilde{\mathcal{O}}(\eta/\gamma)$ (the second inequality is proven by
\citet{wu2024large} by other means), we conclude that the unstable phase cannot last
more than $\widetilde{\mathcal{O}}(\eta/\gamma^2)$ iterations.

However, it is not known if the bound $\tau \leq \widetilde{\mathcal{O}}(\eta/\gamma^2)$
is tight, and experimental observations suggest that it is not: Figure
\ref{fig:mnist_loss_curves} shows that for MNIST data, $\tau$ does not seem to increase
with $\eta$.

\begin{figure}
\centering
\subfigure[MNIST loss and transition times.]{
\includegraphics[width=0.48\linewidth]{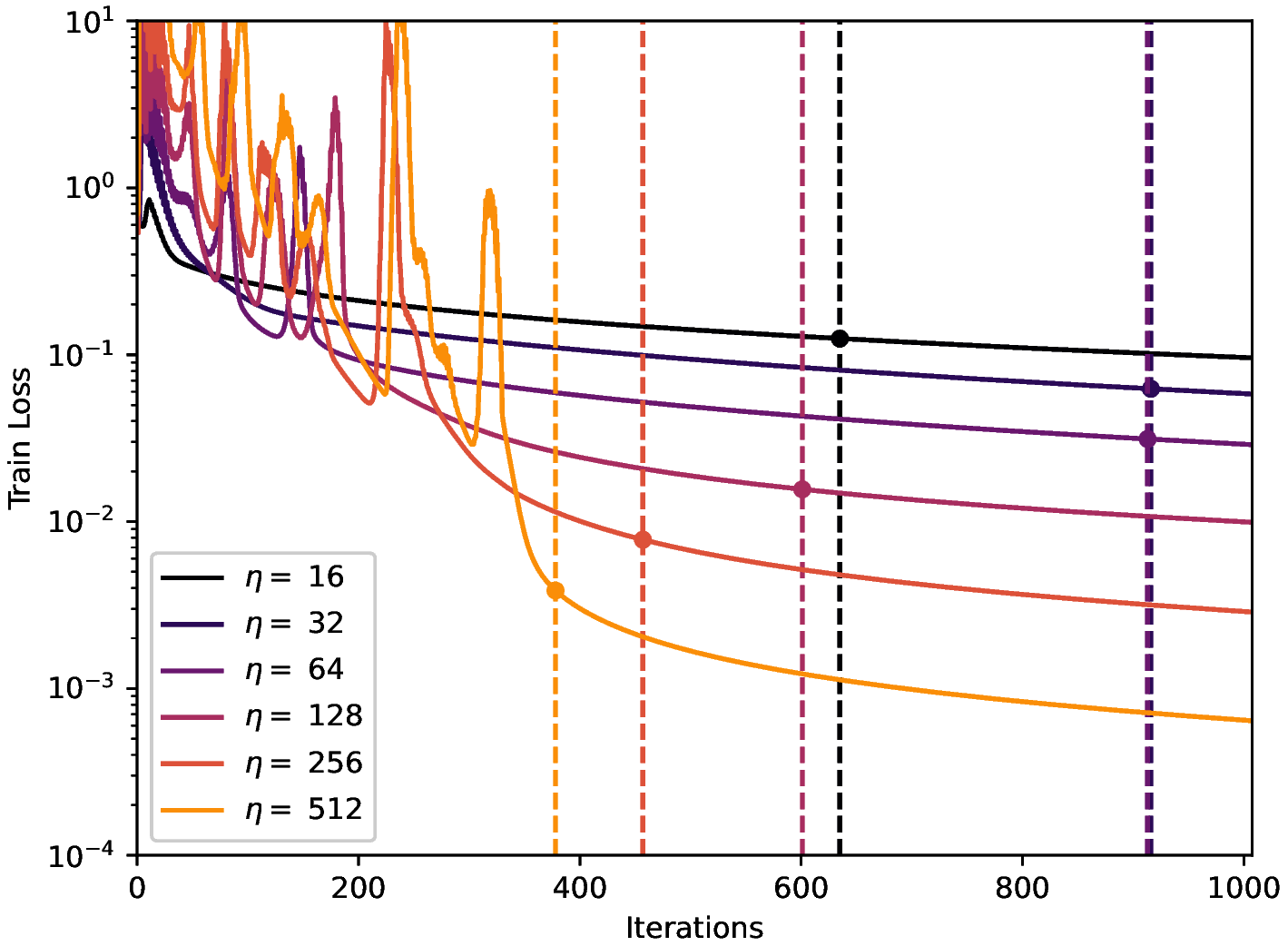}
\label{fig:mnist_loss_curves}
}
\subfigure[Example GD trajectory in two dimensions.]{
\includegraphics[width=0.48\linewidth]{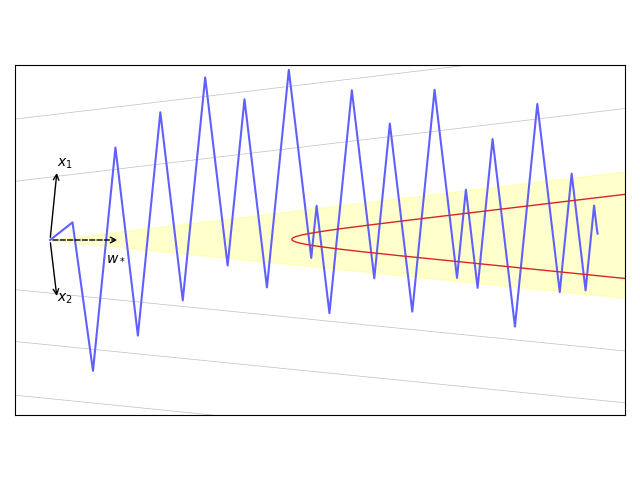}
\label{fig:example_traj}
}
\label{fig:overview}
\caption{\textbf{(a)} For GD on a subset of MNIST ($n=256$, binarized labels), larger
learning rates create instability and faster optimization. As the stepsize increases
exponentially, the stable transition time (i.e. the first timestep at which $F(\vw_t)
\leq 2/\eta$) does not increase. \textbf{(b)} Data $\vx_i$ and max-margin direction
$\vw_*$ are shown in black, GD trajectory in blue, the contour line $F(\vw) = 2/\eta$ in
red, and the region where $\langle \vw, \vx_i \rangle \geq 0$ for both $\vx_i$ in
yellow. Our proof uses the idea that avoiding the region where $F(\vw) \leq
\mathcal{O}(1/\eta)$ requires large oscillations in the subspace orthogonal to $\vw_*$,
and every such oscillation necessarily yields fast progress in the direction of
$\vw_*$.}
\end{figure}

\paragraph{Our Upper Bound}
The argument above essentially takes a \textit{static} view of the gradient potential:
we pessimistically allowed the possibility that $G(\vw_t) \approx 1/\eta$ for every $t <
\tau$. It might hold that $G(\vw_t) \approx 1/\eta$ for a single iteration, but can this
really occur at every iteration before $\tau$ along the trajectory of GD? To get a
tighter bound of $\tau$, we will take a \textit{dynamical} viewpoint by considering the
gradient potential along a trajectory that avoids the sublevel set where $F(\vw_t) \leq
1/8\eta$.

The intuition behind our proof is demonstrated in Figure \ref{fig:example_traj}. By
convexity of $F$, the gradient points into the sublevel set where $F(\vw) \leq 1/8\eta$,
so that after a certain point, the only way for GD to avoid the sublevel set is to
``jump" over it and land on the other side. Notice that the ``height" of the sublevel
set (that is, the length of each cross-section orthogonal to $\vw_*$) tends to increase
linearly along the $\vw_*$ axis, so a jump over the sublevel set requires a parameter
update with magnitude proportional to $\hat{w}_t$. More precisely, if $t$ is a step
where such a jump occurs, we show that $\hat{w}_{t+1} - \hat{w}_t \geq \gamma^2
\hat{w}_t$, that is, $\hat{w}_t$ increases \textit{exponentially}! We then show that
these oscillations across the sublevel set must happen with a certain frequency.
Compared to the linear rate of $\hat{w}_t$ from the argument of \citet{wu2024large}, our
exponential rate of $\hat{w}_t$ implies the tighter bound $\tau \leq
\widetilde{\mathcal{O}}(n/\gamma + 1/\gamma^2)$. This shows that the transition time can
be bounded independently of $\eta$, so that the stable, accelerated second phase can be
quickly reached even with arbitrarily large stepsize.

\paragraph{Our Lower Bound}
We further show that our bound $\tau \leq \widetilde{\mathcal{O}}(n/\gamma +
1/\gamma^2)$ is tight up to logarithmic factors in the worst-case (for GD) over datasets
satisfying Assumption \ref{ass:dataset}. We provide two hard instances. The first
instance requires $\Omega(n/\gamma)$ iterations until all $n$ data points are correctly
classified (which is a necessary condition for $F(\vw_t) \leq 1/8\eta$ when $\eta$ is
sufficiently large), which is a slight generalization of a lower bound construction from
\citet{tyurin2025logistic}. On the second instance, GD correctly classifies all points
quickly, but requires $\Omega(1/\gamma^2)$ iterations until $F(\vw_t) \leq 2/\eta$.

\subsection{Related Work}

\paragraph{Logistic Regression}
Many works in recent years have studied logistic regression as a fundamental testbed for
optimization in machine learning. The role played by gradient-based optimization methods
in generalization was studied through implicit bias, first for GD under separable data
\citep{soudry2018implicit} and non-separable data \citep{ji2019implicit}, and later for
SGD \citep{nacson2019stochastic} and steepest descent with momentum
\citep{gunasekar2018characterizing}.

Recently, it was shown that GD for logistic regression with separable data can converge
with any stepsize \citep{wu2023implicit}, and it was later shown that a large stepsize
could induce an accelerated convergence rate, despite the resulting instability
\citep{wu2024large}. The techniques used by \citet{wu2024large} were subsequently
applied to achieve accelerated rates in various settings, such as for two-layer networks
\citep{cai2024large}, regularized logistic regression \citep{wu2025large}, and GD with
adaptive stepsizes \citep{zhang2025gradient}. For non-separable data, the behavior of
GD with large stepsizes was explored by \citet{meng2024gradient, meng2025gradient}, who
provided negative results showing that global convergence is not guaranteed when the
stepsize is larger than a critical threshold.

\citet{zhang2025gradient} also provided lower bounds, showing that any first-order
optimization algorithm that minimizes the logistic loss requires $\Omega(\min(\log n,
1/\gamma^2))$ iterations to correctly classify all data. Also,
\citet{kornowski2024oracle} use a game-theoretic formulation to provide a lower bound of
$\Omega(1/\gamma^2)$ for finding a linear separator for algorithms that use a
``one-sided" oracle (which includes first-order optimization algorithms that minimize
the logistic loss), and another lower bound of $\Omega(1/\gamma^{2/3})$ for a broader
class of algorithms. Note that the lower bounds of \citet{kornowski2024oracle} are
formulated as the worst-case over all dataset sizes $n$, whereas our formulation (and
that of \citet{zhang2025gradient}) considers a fixed $n$.

\paragraph{Edge of Stability}
Our study is motivated by the ubiquity of unstable optimization in
practical machine learning. \citet{cohen2021gradient} discovered the Edge of Stability
(EoS) phenomenon, where GD in deep learning operates in unstable regimes, with loss not
decreasing monotonically but still tending to decrease in the long term. EoS was also
observed for adaptive optimization algorithms \citep{cohen2022adaptive}, and was later
elaborated by the central flows framework \citep{cohen2025understanding}.

Many follow up works have studied EoS theoretically. \citet{arora2022understanding}
showed that, under certain general conditions on the loss, GD at the edge of stability
follows a deterministic flow on the manifold of global minimizers. Several works have
studied surrogate models of EoS dynamics, such as 4 layer scalar networks
\citep{zhu2023understanding}, two-layer, one-neuron neural networks
\citep{chen2023beyond}, diagonal linear networks \citep{even2023s}, and a two-parameter
model of two-layer ReLU networks \citep{ahn2023learning}. \citet{damian2023self} proved
that GD at EoS has a self-stabilization property for objective functions with a
progressive sharpening property.

In this work, our goal is not to study EoS in deep learning, but rather to provide a
tight, mathematically rigorous characterization of gradient descent under unstable
regimes of a natural learning problem. See \citet{cohen2025understanding} for a
comprehensive review of the literature around EoS.

\paragraph{Large Stepsizes in Convex Optimization} GD for smooth, convex optimization
can also be accelerated by allowing large steps/non-monotonic loss. Classical methods
such as mirror descent \citep{bubeck2015convex} and Nesterov acceleration
\citep{nesterov2013introductory} do not require monotonic decrease of the loss.
\citet{malitsky2020adaptive} proposed an adaptive stepsize for gradient descent based on
local smoothness rather than global smoothness, and which does not enforce monotonic
loss decrease. \citet{altschuler2018greed} showed that the classical convergence rate of
gradient descent can be improved by constant factors, at least for a couple of
iterations, with a particular stepsize schedule that occasionally includes very big
steps. \citet{grimmer2024provably} showed that a similar improvement by constant factors
can be achieved for longer horizons (up to 127 steps). Concurrently,
\citet{altschuler2024acceleration} and \citet{grimmer2025accelerated} showed accelerated
convergence rates with stepsize schedules that include occasional large steps.
\citet{zhang2025anytime} achieved similar acceleration with an ``anytime" guarantee,
where the stepsize schedule is not defined in terms of a prior stopping time, and the
convergence guarantee holds for any stopping time. See
\citet{altschuler2024acceleration} for a thorough discussion of this line of work.

\section{Upper Bounding the Stable Transition Time} \label{sec:upper_bound}
In this section, we present our improved convergence analysis of GD for
\Eqref{eq:opt_prob} in two dimensions, which is based on a sharper analysis of the time
required for GD to transition from unstable to stable. For the entirety of this section,
we fix $d=2$.

\subsection{Statement of Results}

\begin{restatable}{theorem}{thmupperbound} \label{thm:upper_bound}
If $d=2$ and $\eta \geq \eta_0 := \max(n, 32/\gamma^2 \log(256/\gamma^2))$, then the
transition time $\tau := \min \{t \geq 0: F(\vw_t) \leq 1/8\eta\}$ of GD for
\Eqref{eq:opt_prob} satisfies
\begin{equation}
    \tau \leq \mathcal{O} \left( \frac{n}{\gamma} + \frac{\log(1/\gamma)}{\gamma^2} \right).
\end{equation}
Further, $F(\vw_t) \leq \mathcal{O} \left( 1/(\eta \gamma^2 (t - \tau)) \right)$ for all
$t > \tau$.
\end{restatable}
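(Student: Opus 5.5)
We write $\vw_t = \hat{w}_t \vw_* + \bar{w}_t \vv$, where $\vv$ is the unit vector orthogonal to $\vw_*$ in $\mathbb{R}^2$, and write each datum as $\vx_i = a_i \vw_* + b_i \vv$ with $a_i \geq \gamma$ and $|b_i| \leq 1$. The second claim, the stable-phase rate $F(\vw_t) \leq \mathcal{O}(1/(\eta\gamma^2(t-\tau)))$, we plan to import from the stable-phase analysis of \citet{wu2024large}. Once $F(\vw_\tau) \leq 1/8\eta$, the loss is monotone, and the standard comparator argument started at $\vw_\tau$ gives the rate. So all the work is in bounding $\tau$, and we use the dynamical picture from Section~\ref{sec:overview}.

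\textbf{Step 1 (linear phase).} By \Eqref{eq:margin_pot} and Lemma~\ref{lem:pot_lb}, every $t<\tau$ has $\hat{w}_{t+1}-\hat{w}_t \geq \eta\gamma G(\vw_t) \geq c\gamma$. Hence after $\mathcal{O}(1/\gamma^2)$ steps $\hat{w}_t \gtrsim 1/\gamma$, so we may assume $\hat{w}_t$ is at least of order $1/\gamma$ and increasing.

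\textbf{Step 2 (geometry of the sublevel set).} In the plane, the set $\{F \leq 1/8\eta\}$ at height $\hat{w}$ along $\vw_*$ contains an interval in $\bar{w}$ of length $\gtrsim \gamma\hat{w}$. It is the set where $a_i\hat{w}+b_i\bar{w} \geq \log(8\eta)$ for all $i$, and the direction $\vw_*$ has margin $\gamma$. The condition $\eta \geq 32\gamma^{-2}\log(256/\gamma^2)$ should guarantee that the cross-section at $\hat{w}_t$ is nonempty once $\hat{w}_t \gtrsim \log(\eta)/\gamma$.

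\textbf{Step 3 (oscillation).} Two cases can occur at each step $t<\tau$ once the cross-section is nonempty.
\begin{enumerate}
\item[(i)] $\bar{w}_t$ and $\bar{w}_{t+1}$ lie on opposite sides of the sublevel set (a ``jump''). Then $|\bar{w}_{t+1}-\bar{w}_t| \gtrsim \gamma\hat{w}_t$. Since the update is $\frac{\eta}{n}\sum_i|\ell'_i|\vx_i$ and every $\vx_i$ has $\vw_*$-component at least $\gamma|\vx_i|$, the $\vw_*$-component of the update is at least $\gamma$ times its norm. This gives $\hat{w}_{t+1}-\hat{w}_t \geq \gamma^2\hat{w}_t$.
\item[(ii)] $\vw_t$ stays on one side. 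Then only the data on the misclassifying side contribute large gradients, and the iterate is pushed toward the set.
\end{enumerate}
We must show that case (ii) cannot persist for more than $\mathcal{O}(n/\gamma)$ steps in total, or $\mathcal{O}(1/\gamma)$ between jumps. Since $\eta\geq n$, a single misclassified point with margin deficit yields a step of size $\gtrsim \eta/n \geq 1$. Our idea is to order the points by angle and track which points are misclassified on the current side. Each non-jump step either jumps next, or moves $\bar{w}$ by an amount that, relative to the growth of $\hat{w}$, sweeps past a data point's angular boundary. Across the run this gives an $\mathcal{O}(n/\gamma)$ total count of non-jump steps. This is consistent with the $\Omega(n/\gamma)$ construction of \citet{tyurin2025logistic}.

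\textbf{Step 4 (counting).} The exponential growth $\hat{w}_{t+1} \geq (1+\gamma^2)\hat{w}_t$ at jumps gives, after $k$ jumps, $\hat{w} \geq (1+\gamma^2)^k/\gamma$. Combining this with the a priori bound $\hat{w}_t \leq \|\vw_t\| \leq \widetilde{\mathcal{O}}(\eta/\gamma)$ from \citet{wu2024large} bounds the number of jumps by $\mathcal{O}(\gamma^{-2}\log(\eta))$. To remove the $\eta$-dependence, we instead argue directly that once $\hat{w}_t \gtrsim \log(\eta)/\gamma^2$, the set is so wide that a jump would overshoot only if the gradient were tiny, which contradicts $t<\tau$. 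This should leave $\mathcal{O}(\log(1/\gamma)/\gamma^2)$ jumps.

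The main obstacle is Step 3(ii) together with this last refinement: showing rigorously, in $d=2$, that non-jump steps are rare (the $n/\gamma$ term), and that the number of jumps is bounded independently of $\eta$. We would attack both using the two-dimensional structure. The boundary of the sublevel set is governed by just two extreme data directions, which turns the $\bar{w}$-dynamics into an essentially one-dimensional oscillation around a cone.
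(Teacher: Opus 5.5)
Your geometric picture is right: a jump across $\{F\le 1/8\eta\}$ forces $\hat w_{t+1}-\hat w_t\ge\gamma^2\hat w_t$, which is exactly Lemma \ref{lem:oscillation_movement}. The gap is in the time accounting of Steps 3(ii) and 4.

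\textbf{Non-jump steps.} Your target claim, that non-jump steps number $\mathcal{O}(n/\gamma)$ in total (or $\mathcal{O}(1/\gamma)$ between jumps), is false. Take the instance of Lemma \ref{lem:stable_lb} with $n=2$. For $1\le t<t_s$, its proof shows $\langle\vw_t,\ve_2\rangle>\eta/21$, so $\bar w_t$ never changes sign, and all margins are positive. Yet $F(\vw_t)>2/\eta$ throughout, and $t_s\ge 1+1/(59\gamma^2)$, which exceeds any $\mathcal{O}(n/\gamma)$ bound as $\gamma\to0$. Tracking misclassified points misses this second sub-phase. There the potential must still fall from $\mathcal{O}(1/(\eta\gamma^2))$ to $\mathcal{O}(1/\eta)$ via $G(\vw_{s+1})\le(1-\eta\gamma^2G(\vw_s)/12)\,G(\vw_s)$, costing up to $\mathcal{O}(1/\gamma^2)$ steps. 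Moreover, even a correct per-gap bound of $\mathcal{O}(n/\gamma+1/\gamma^2)$ would have to be multiplied by the number of jumps, which you can only bound by $\mathcal{O}(\log(1/\gamma)/\gamma^2)$.

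The missing idea is Lemma \ref{lem:in_between}. Between consecutive oscillations the sign of $\bar w_s$ is fixed, so the set $D_-$ of points with $b_i\bar w_s<0$ is frozen. By Lemma \ref{lem:a_recursion}, every margin $\langle\vw_s,\vx_i\rangle$ with $i\in D_-$ then increases. This is where $d=2$ enters: points of $D_-$ have pairwise inner products at least $\gamma^2$, while points of $D_+$ have margin at least $\gamma\hat w_s\ge 8\log(8\eta)$ and hence negligible gradient weight. Consequently, the progress $\hat w_{t+1}-\hat w_t$ at any step before the next oscillation $t_k$ is at least $\frac12(\hat w_{t_k+1}-\hat w_{t_k})\ge\frac12\gamma^2\hat w_{t_k}\ge\frac12\gamma^2\hat w_t$. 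So every step before the last oscillation, jump or not, multiplies $\hat w$ by at least $1+\gamma^2/2$, and only the final gap costs $\mathcal{O}(n/\gamma+1/\gamma^2)$ (Lemma \ref{lem:time_to_oscillate}). Within that gap, a step of size about $\eta/n$ is not enough by itself. You also need $|\bar w_t|\le\eta$ (Lemma \ref{lem:complement_ub}), so that the deficit $-\langle\vw_t,\vx_i\rangle\le\eta\|\vx_i\|$ is overcome by the per-step gain $\eta\|\vx_i\|^2/(4n)$ within $4n/\gamma$ steps.

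\textbf{Removing $\eta$.} Your $\eta$-removal in Step 4 also fails. At height $\hat w$, a jump needs an update of norm $\Omega(\gamma\hat w)$, i.e.\ a large gradient, not a tiny one. The condition $t<\tau$ gives only $G(\vw_t)\ge 1/(16\eta)$, so it does nothing to exclude a large gradient when $\hat w\sim\log(\eta)/\gamma^2$. What works is the cap $\hat w_t\le\eta/\gamma$ at any jump, which follows from $\|\nabla F\|\le 1$ (Lemma \ref{lem:oscillate_bounded_margin}). Combine it with the anchor you overlooked: since $\vw_0=\vzero$, the first step already gives $\hat w_1=\frac{\eta}{2n}\sum_i\langle\vx_i,\vw_*\rangle\ge\eta\gamma/2$. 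The cap-to-anchor ratio $2/\gamma^2$ is independent of $\eta$. Hence $(1+\gamma^2/2)^{t_k-1}\hat w_1\le\hat w_{t_k}\le\eta/\gamma$ yields $t_k\le 1+3\log(2/\gamma^2)/\gamma^2$ for every oscillation $t_k$. One application of Lemma \ref{lem:time_to_oscillate} after the last oscillation then bounds $\tau$.

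\textbf{Stable phase.} Importing the comparator-based stable-phase bound of \citet{wu2024large} leaves logarithmic factors that the statement excludes. To get the stated rate, apply Lemma \ref{lem:stable_descent}; its condition $\|\vw_{t+1}-\vw_t\|\le1$ follows from $\eta\|\nabla F\|\le\eta F\le 1/8$. Use it together with $\|\nabla F\|\ge\gamma F/2$, which is valid because $F\le 1/8\eta\le\log 2/n$ forces all margins to be nonnegative. This gives $1/F(\vw_{t+1})\ge 1/F(\vw_t)+\eta\gamma^2/8$.
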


The key feature of the above theorem is the fact that the transition time $\tau$ is
bounded by a quantity independent of $\eta$, whereas the previously best known bound was
$\tau \leq \widetilde{\mathcal{O}}(\eta/\gamma^2)$ \citep{wu2024large}, and this
dependence is a crucial bottleneck for the overall convergence rate. Indeed, if we only
know $\tau \leq \widetilde{\mathcal{O}}(\eta/\gamma^2)$, then for a budget of $T$
iterations the largest acceptable stepsize which ensures that GD enters the stable
phase is $\eta = \widetilde{\Theta}(\gamma^2 T)$, which leads to the
$\widetilde{\mathcal{O}}(1/\gamma^4 T^2)$ rate of \citet{wu2024large}. With our improved
transition time $\tau \leq \widetilde{\mathcal{O}}(n/\gamma + 1/\gamma^2)$, GD will
definitely enter the stable phase as long as $T \geq \widetilde{\Omega}(n/\gamma +
1/\gamma^2)$, even with arbitrary large $\eta$, and such a large $\eta$ accelerates
convergence during the stable phase. This difference is shown in the complexities of
Table \ref{tab:complexity}; Theorem \ref{thm:upper_bound} implies that GD finds an
$\epsilon$-approximate solution in time independent of $\epsilon$!

Note from Table \ref{tab:complexity} that the complexity of GD with a constant stepsize
matches that of GD with the adaptive stepsize of \citet{zhang2025gradient} in terms of
the dependence on $\epsilon$ and $\gamma$, and is worse in terms of $n$. This
establishes exactly when Adaptive GD outperforms GD for this problem: if $n \leq
\mathcal{O}(1/\gamma)$, then the two algorithms have the same worst-case complexity, and
otherwise Adaptive GD is provably faster by a factor of $n \gamma$.

We can also compare GD against the lower bounds on all first-order algorithms from
\citet{zhang2025gradient}, who showed two separate lower bounds on the time to find a
linear separator: $\Omega(\min(1/\gamma^2, \log n))$ and $\Omega(\min(1/\gamma^{2/3},
n))$. In the regime of a large dataset $n \geq \Omega(\exp(1/\gamma^2))$, their combined
lower bounds simplify to $\Omega(1/\gamma^2)$, so that GD is suboptimal by a factor of
$n \gamma$, while Adaptive GD is optimal. For a small dataset $n \leq
\mathcal{O}(1/\gamma^{2/3})$, the combined lower bounds simplify to $\Omega(n)$, which
is not met by any first-order algorithm. Finally, in the intermediate regime
$\Omega(1/\gamma^{2/3}) \leq n \leq \mathcal{O}(\exp(1/\gamma^2))$, the combined lower
bounds simplify to $\Omega(1/\gamma^{2/3} + \log n)$, for which GD is suboptimal in both
$n$ and $\gamma$, and Adaptive GD is suboptimal in $\gamma$. In all regimes, for the
problem of making the loss smaller than $\epsilon$, both GD and Adaptive GD match the
optimal complexity in terms of the dependence on $\epsilon$ alone, namely both
algorithms can do so in time independent of $\epsilon$.

\subsection{Previous Bottleneck: Average-Iterate vs Last-Iterate}
In Section \ref{sec:overview}, we rephrased the argument of \citet{wu2024large} that
$\tau \leq \tilde{\mathcal{O}}(\eta/\gamma^2)$, which pessimistically allows for the
possibility that $G(\vw_t) \approx 1/\eta$ for every $t < \tau$. A related bottleneck of
their proof is the analysis of the average gradient potential $\frac{1}{t}
\sum_{s=0}^{t-1} G(\vw_s)$. Specifically, denoting $\tilde{\tau}$ as the first iteration
where $\frac{2}{t} \sum_{s=0}^{t-1} G(\vw_s) \leq 1/8\eta$, the analysis of
\citet{wu2024large} uses
\begin{equation}
    \min_{0 \leq s < t} F(\vw_s) \leq 2 \min_{0 \leq s < t} G(\vw_s) \leq \frac{2}{t} \sum_{s=0}^{t-1} G(\vw_s),
\end{equation}
(where the first inequality uses that $F(\vw) \leq 2 G(\vw)$ for all $\vw$ when $G(\vw)$
is small enough, see Lemma \ref{lem:obj_pot_lb}), and concludes that $\tau \leq
\tilde{\tau}$, then proceeds to bound $\tilde{\tau}$. However, a quick argument (which
we give below) shows that $\tilde{\tau}$ is necessarily linear in $\eta$ in the
worst-case. From our lower bound in Section \ref{sec:lower_bound}, we know that there is
a dataset for which at least one point $\vx_j$ is misclassified (i.e. $\langle \vw_t,
\vx_j \rangle \leq 0$) for the first $\Theta(n/\gamma)$ iterations. So
\begin{equation}
    G(\vw_t) = \frac{1}{n} \sum_{i=1}^n \frac{1}{\exp(\langle \vw_t, \vx_i \rangle) + 1} \geq \frac{1}{n} \frac{1}{\exp(\langle \vw_t, \vx_j \rangle) + 1} \geq \frac{1}{2n}.
\end{equation}
for the first $\Theta(n/\gamma)$ iterations. Therefore, for $t \geq \Theta(n/\gamma)$,
\begin{equation}
    \frac{2}{t} \sum_{s=0}^{t-1} G(\vw_s) \geq \frac{2}{t} \cdot \frac{1}{2n} \Theta \left( \frac{n}{\gamma} \right) = \Theta \left( \frac{1}{\gamma t} \right).
\end{equation}
Therefore, $\frac{2}{t} \sum_{s=0}^{t-1} G(\vw_s) \geq 1/8\eta$ whenever $t \leq
\mathcal{O}(\eta/\gamma)$, so $\tilde{\tau} \geq \Omega(\eta/\gamma)$. Therefore, using
$\tau \leq \tilde{\tau}$ and bounding the time-averaged potential is insufficient to
upper bound $\tau$ independently of $\eta$. To achieve such a bound, we need a more
fine-grained analysis that considers $G(\vw_t)$ for individual $t$.

\subsection{Proof of Theorem \ref{thm:upper_bound}}

\paragraph{Notation} Recall that $\vw_* = \argmax_{\|\vw\|=1} \min_{i \in [n]} \langle
\vw, \vx_i \rangle$. Choose $\vv_*$ with $\langle \vv_*, \vw_* \rangle = 0$ with
$\|\vv_*\| = 1$. We define $\hat{w}_t = \langle \vw_t, \vw_* \rangle$ and $\tilde{w}_t =
\langle \vw_t, \vv_* \rangle$. For each data $i \in [n]$, we define $\tilde{x}_i =
\langle \vx_i, \vv_* \rangle$ and $a_t^i = \langle \vw_t, \vx_i \rangle$. Note that the
loss for each data point $\ell(a_t^i)$ is decreasing in $a_t^i$, and
\begin{equation}
    a_t^i = \langle \vw_t, \vx_i \rangle = \hat{w}_t \langle \vw_*, \vx_i \rangle + \tilde{w}_t \tilde{x}_i \geq \gamma \hat{w}_t + \tilde{w}_t \tilde{x}_i,
\end{equation}
which we will use repeatedly.

Given $\eta > 0$, we define $\lambda = \log(1/(\exp(1/8\eta)-1))/\gamma$, so that
$F(\lambda \vw_*) \leq \ell(\gamma \lambda) = 1/8\eta$. We will sometimes use the
slightly larger but more convenient $\tilde{\lambda} = \log(8\eta)/\gamma$, and $\lambda
\leq \tilde{\lambda}$ can be seen by applying $\exp(1/8\eta) \geq 1 + 1/8\eta$ in the
definition of $\lambda$. Similarly, we have $F(\tilde{\lambda} \vw_*) \leq 1/8\eta$.

Denoting $\eta_0 = \max(n, 32/\gamma^2 \log(256/\gamma^2))$, we will often require $\eta
\geq \eta_0$.

At each iteration, we will split the dataset into two subsets depending on whether the
current iterate $\vw_t$ has positive or negative alignment with each $\vx_i$ in the
subspace orthogonal to $\vw_*$:
\begin{equation}
    D_t^+ = \{i \in [n] \;|\; \tilde{x}_i \tilde{w}_t \geq 0 \}, \quad D_t^- = \{i \in [n] \;|\; \tilde{x}_i \tilde{w}_t < 0 \}
\end{equation}
So for $i \in D_t^+$, we have $a_t^i \geq \gamma \hat{w}_t + \tilde{w}_t \tilde{x}_i
\geq \gamma \hat{w}_t$, which we will show is large for all $t \geq 1$. Essentially, the
loss for each data point $i \in D_t^+$ is negligible for $t \geq 1$.

~\\
We start by establishing the linear growth of $\hat{w}_t$ as discussed in Section
\ref{sec:overview} (proof in Appendix \ref{app:upper_bound}).

\begin{restatable}{lemma}{lemmaxmargin} \label{lem:max_margin}
$\hat{w}_t$ is strictly increasing. Also, if $\eta \geq \eta_0$, then $\hat{w}_t \geq
\gamma \eta/2 + \gamma (t-1)/16$ for all $1 \leq t \leq \tau$. In particular, $\hat{w}_t
\geq 8 \tilde{\lambda}$ for all $t \geq 1$.
\end{restatable}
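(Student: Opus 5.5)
The plan is to compute the increments of $\hat{w}_t$ directly, as in \Eqref{eq:margin_pot}, and then split into three pieces: monotonicity, the first step, and the steps $1 \leq t < \tau$. For monotonicity, note that every summand in \Eqref{eq:margin_pot} is strictly positive, because $\langle \vx_i, \vw_* \rangle \geq \gamma > 0$ and $1/(\exp(a_t^i)+1) > 0$. Hence $\hat{w}_{t+1} - \hat{w}_t \geq \eta\gamma G(\vw_t) > 0$ for every $t$, so $\hat{w}_t$ is strictly increasing with no assumption on $\eta$.

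For the first step, $\vw_0 = \vzero$ gives $G(\vw_0) = 1/2$, and therefore $\hat{w}_1 \geq \eta\gamma/2$. This settles the case $t=1$ of the linear bound. For $1 \leq t < \tau$ we know $F(\vw_t) > 1/8\eta$, and Lemma \ref{lem:pot_lb} (the gradient potential is lower bounded in terms of the loss) should give $G(\vw_t) \geq c/\eta$. I expect $c = 1/16$; this is where the constant $1/16$ in the statement comes from. Plugging in yields $\hat{w}_{t+1} - \hat{w}_t \geq \gamma/16$, and telescoping from $t=1$ gives $\hat{w}_t \geq \gamma\eta/2 + \gamma(t-1)/16$ for all $1 \leq t \leq \tau$.

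The main obstacle is the precise form of Lemma \ref{lem:pot_lb}. We need $G \geq \Omega(F)$ in the regime where $F$ is of order $1/\eta$. Since $\ell(z) \leq$ roughly $|\ell'(z)|$ when $z \geq 0$, and $|\ell'(z)| \geq 1/2$ when $z < 0$, one gets $G(\vw) \geq \frac{1}{2}\min(F(\vw), 1)$ or something similar. It will be enough to take whatever constant the lemma provides and check that it yields $1/16$ after the threshold $1/8\eta$, which requires $\eta \geq 1$ or so.

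For the last claim, $\hat{w}_t \geq 8\tilde{\lambda}$ for all $t \geq 1$, monotonicity gives $\hat{w}_t \geq \hat{w}_1 \geq \gamma\eta/2$. It therefore suffices to show
\[
\gamma\eta/2 \geq 8\log(8\eta)/\gamma, \quad\text{that is,}\quad \eta \geq \frac{16}{\gamma^2}\log(8\eta).
\]
This is the standard "$x \geq a\log(bx)$" inequality. Take $\eta \geq 32/\gamma^2 \log(256/\gamma^2)$ and use that the map $\eta \mapsto \eta - (16/\gamma^2)\log(8\eta)$ is increasing for $\eta \geq 16/\gamma^2$. It then suffices to verify the inequality at $\eta = \eta_1 := (32/\gamma^2)\log(256/\gamma^2)$. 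There,
\[
\log(8\eta_1) = \log(256/\gamma^2) + \log\log(256/\gamma^2) \leq 2\log(256/\gamma^2),
\]
so $(16/\gamma^2)\log(8\eta_1) \leq \eta_1$, which is what we need. This step is routine; the condition $\eta \geq n$ in $\eta_0$ is not needed here.
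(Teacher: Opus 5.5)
Your proof is correct and follows the paper's argument step for step: positive increments in \Eqref{eq:margin_pot} for monotonicity, $G(\vw_0)=1/2$ for $\hat{w}_1 \geq \eta\gamma/2$, Lemma~\ref{lem:pot_lb} for increments of at least $\gamma/16$ before $\tau$, and $\hat{w}_t \geq \hat{w}_1$ for the $8\tilde{\lambda}$ claim. Your last step, monotonicity of $\eta \mapsto \eta - (16/\gamma^2)\log(8\eta)$ checked at $(32/\gamma^2)\log(256/\gamma^2)$, is an equally valid substitute for the paper's tangent-line bound on $\log$.

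One claim in the proposal is wrong, though. The bound $G(\vw) \geq \frac{1}{2}\min(F(\vw),1)$ fails in general, and the implication $F(\vw_t) > 1/(8\eta) \Rightarrow G(\vw_t) \geq 1/(16\eta)$ needs $\eta$ of order $n$, not $\eta \geq 1$. Suppose one point has $\langle \vw, \vx_1 \rangle = -M$ with $M$ large and the others have negligible loss. Then $F \approx M/n$ but $G \approx 1/n$. Taking $M \approx n/(8\eta)$ makes $F$ just exceed $1/(8\eta)$, while $G \approx 1/n < 1/(16\eta)$ whenever $\eta < n/16$.

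Carried out per data point, your heuristic only gives $G(\vw_t) \geq \frac{1}{2n}\sum_i \min(\ell(a_t^i),1) \geq \min\bigl(F(\vw_t)/2, 1/(2n)\bigr)$. This is why Lemma~\ref{lem:pot_lb} yields $G \geq c/2$ only for $c \leq 1/n$, which here means $\eta \geq n/8$. So this step is exactly where the $\eta \geq n$ part of $\eta_0$ is used; your closing remark that it is not needed is right only for the $8\tilde{\lambda}$ claim. Since $\eta \geq \eta_0 \geq n$ is assumed, the argument goes through.
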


We will say that an \textit{oscillation} occurs at iteration $t \geq 0$ if all of the
following hold: \textbf{(1)} $\hat{w}_t \geq \lambda$. \textbf{(2)} $F(\vw_t) > 1/8\eta$
and $F(\vw_{t+1}) > 1/8\eta$. \textbf{(3)} $\tilde{w}_{t+1} \tilde{w}_t < 0$, that is,
$\tilde{w}_t$ changes sign from $t$ to $t+1$.

Note that condition 2 means $t < \tau - 1$. Essentially, an oscillation happens when the
trajectory jumps over the sublevel set where $F(\vw) \leq 1/8\eta$ (see Figure
\ref{fig:example_traj}).
First, we show that $\hat{w}_t$ increases geometrically whenever an oscillation occurs,
but an oscillation can only happen when $\hat{w}_t \leq \eta/\gamma$.

\begin{lemma} \label{lem:oscillation_movement}
If an oscillation happens at iteration $t$ and $\eta \geq \eta_0$, then $\hat{w}_{t+1}
\geq (1 + \gamma^2) \hat{w}_t$.
\end{lemma}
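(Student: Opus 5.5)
The plan is to compare the movement of GD in the two coordinates $\hat{w}$ and $\tilde{w}$ during a single step, and to use the fact that both endpoints of an oscillation lie outside the sublevel set $\{F \leq 1/8\eta\}$. That sublevel set contains a ``cone'' around the $\vw_*$ axis whose width grows linearly in $\hat{w}$.

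\textbf{Step 1 (comparing the two coordinates).} Write $g_i = 1/(\exp(a_t^i)+1) > 0$. From the GD update,
\begin{equation*}
\hat{w}_{t+1} - \hat{w}_t = \frac{\eta}{n}\sum_{i=1}^n \langle \vx_i, \vw_*\rangle g_i \geq \gamma \frac{\eta}{n} \sum_{i=1}^n g_i, \qquad |\tilde{w}_{t+1} - \tilde{w}_t| = \Bigl|\frac{\eta}{n}\sum_{i=1}^n \tilde{x}_i g_i\Bigr| \leq \frac{\eta}{n}\sum_{i=1}^n g_i,
\end{equation*}
where the second bound uses $|\tilde{x}_i| \leq \|\vx_i\| \leq 1$. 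Hence $\hat{w}_{t+1}-\hat{w}_t \geq \gamma |\tilde{w}_{t+1}-\tilde{w}_t|$. Since $\tilde{w}$ changes sign at an oscillation (condition 3), we also have $|\tilde{w}_{t+1}-\tilde{w}_t| = |\tilde{w}_t| + |\tilde{w}_{t+1}|$.

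\textbf{Step 2 (points outside the sublevel set are far from the axis).} For any $\vw = \hat{w}\vw_* + \tilde{w}\vv_*$, every $i$ satisfies $\langle \vw,\vx_i\rangle \geq \gamma \hat{w} - |\tilde{w}|$. If $|\tilde{w}| \leq \gamma(\hat{w} - \lambda)$, then every margin is at least $\gamma\lambda$, so $F(\vw) \leq \ell(\gamma\lambda) = 1/8\eta$. By condition 2, $F(\vw_t) > 1/8\eta$ and $F(\vw_{t+1}) > 1/8\eta$. Therefore $|\tilde{w}_t| > \gamma(\hat{w}_t - \lambda)$ and $|\tilde{w}_{t+1}| > \gamma(\hat{w}_{t+1}-\lambda)$.

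\textbf{Step 3 (combining).} Putting Steps 1 and 2 together and using that $\hat{w}$ is increasing (Lemma \ref{lem:max_margin}),
\begin{equation*}
\hat{w}_{t+1}-\hat{w}_t > \gamma^2(\hat{w}_t + \hat{w}_{t+1} - 2\lambda) \geq 2\gamma^2(\hat{w}_t - \lambda).
\end{equation*}
This is at least $\gamma^2\hat{w}_t$ provided $\hat{w}_t \geq 2\lambda$. To secure that, first rule out $t = 0$: since $\vw_0 = \vzero$, we have $\tilde{w}_0 = 0$, so $\tilde{w}_1\tilde{w}_0 = 0$ and condition 3 fails. So $t \geq 1$. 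Lemma \ref{lem:max_margin} (which is where $\eta \geq \eta_0$ enters) then gives $\hat{w}_t \geq 8\tilde{\lambda} \geq 8\lambda \geq 2\lambda$, and the claim $\hat{w}_{t+1} \geq (1+\gamma^2)\hat{w}_t$ follows.

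The only delicate point is getting the constant exactly $1$ in front of $\gamma^2$. Bounding only $|\tilde{w}_t|$ yields $\gamma^2(\hat{w}_t - \lambda)$, which falls short of $\gamma^2\hat{w}_t$ by the $\lambda$ term. Using both endpoints of the oscillation (the sign change gives $|\Delta\tilde{w}| = |\tilde{w}_t| + |\tilde{w}_{t+1}|$) doubles the bound and absorbs $\lambda$, via $\hat{w}_t \geq 8\lambda$.
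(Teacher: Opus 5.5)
Your proof is correct and follows the paper's argument. Like the paper, you bound $\hat{w}_{t+1}-\hat{w}_t \geq \gamma|\tilde{w}_{t+1}-\tilde{w}_t| = \gamma(|\tilde{w}_t|+|\tilde{w}_{t+1}|)$ and then lower bound both endpoints using that they lie outside the sublevel set. The only cosmetic difference is that the paper invokes its auxiliary lemma giving $|\tilde{w}_s| > \gamma\hat{w}_s/2$ (for $s \geq 1$ with $F(\vw_s) > 1/8\eta$, relying on $\hat{w}_s \geq 8\tilde{\lambda}$), whereas you derive the slightly sharper bound $|\tilde{w}_s| > \gamma(\hat{w}_s-\lambda)$ inline and absorb $\lambda$ afterward via $\hat{w}_t \geq 8\lambda$.
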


\begin{proof}
\begin{align}
    \|\vw_{t+1} - \vw_t\| &= \eta \left\| \frac{1}{n} \sum_{i=1}^n \frac{\vx_i}{\exp(\langle \vw_t, \vx_i \rangle) + 1} \right\| \leq \frac{\eta}{n} \sum_{i=1}^n \frac{\|\vx_i\|}{\exp(\langle \vw_t, \vx_i \rangle) + 1} \\
    &\leq \frac{\eta}{n} \sum_{i=1}^n \frac{1}{\exp(\langle \vw_t, \vx_i \rangle) + 1} = \eta G(\vw_t) \leq \frac{1}{\gamma} (\hat{w}_{t+1} - \hat{w}_t),
\end{align}
where the last inequality uses $\hat{w}_{t+1} - \hat{w}_t \geq \eta \gamma
G(\vw_t)$ from \Eqref{eq:margin_pot}. Therefore
\begin{align}
    \hat{w}_{t+1} - \hat{w}_t &\geq \gamma \|\vw_{t+1} - \vw_t\| \geq \gamma |\tilde{w}_{t+1} - \tilde{w}_t| \Eqmark{i}{=} \gamma(|\tilde{w}_{t+1}| + |\tilde{w}_t|) \\
    &\Eqmark{ii}{\geq} \gamma^2 \hat{w}_{t+1}/2 + \gamma^2 \hat{w}_t/2 \Eqmark{iii}{\geq} \gamma^2 \hat{w}_t,
\end{align}
where $(i)$ uses that $\tilde{w}_t$ and $\tilde{w}_{t+1}$ have different signs
(from the definition of oscillation), $(ii)$ uses Lemma
\ref{lem:small_complement}, and $(iii)$ uses $\hat{w}_{t+1} \geq \hat{w}_t$ from
Lemma \ref{lem:max_margin}.
\end{proof}

\begin{lemma} \label{lem:oscillate_bounded_margin}
If an oscillation happens at iteration $t$ and $\eta \geq \eta_0$, then $\hat{w}_t \leq
\eta/\gamma$.
\end{lemma}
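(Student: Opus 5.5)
The plan is to reuse the geometric picture from the proof of Lemma \ref{lem:oscillation_movement}, but now combine it with a trivial \emph{upper} bound on the step length instead of a lower bound on $\hat{w}_{t+1} - \hat{w}_t$. An oscillation forces GD to jump across the sublevel set $\{F \leq 1/8\eta\}$. That set has height proportional to $\hat{w}_t$ in the $\vv_*$ direction. A single GD step, however, has length at most of order $\eta$. So a jump is only possible while $\hat{w}_t \lesssim \eta/\gamma$.

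The steps, in order, are as follows.

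First, bound the step length from above. As in the proof of Lemma \ref{lem:oscillation_movement}, $\|\vw_{t+1} - \vw_t\| \leq \eta G(\vw_t)$. Each summand of $G$ is $1/(\exp(a_t^i) + 1)$. By Lemma \ref{lem:max_margin}, for $t \geq 1$ every $i \in D_t^+$ has $a_t^i \geq \gamma \hat{w}_t > 0$, but points in $D_t^-$ may have $a_t^i < 0$. I will therefore use only the crude bound that each summand is less than $1$, so $G(\vw_t) \leq 1$ and $\|\vw_{t+1} - \vw_t\| \leq \eta$.

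Second, bound the step length from below using the sign change. Condition (3) of an oscillation says $\tilde{w}_t$ and $\tilde{w}_{t+1}$ have opposite signs. Hence
\[
\|\vw_{t+1} - \vw_t\| \geq |\tilde{w}_{t+1} - \tilde{w}_t| = |\tilde{w}_{t+1}| + |\tilde{w}_t|.
\]

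Third, invoke Lemma \ref{lem:small_complement} exactly as in step $(ii)$ of the previous proof. Conditions (1)--(2) of an oscillation give $\hat{w}_s \geq \lambda$ and $F(\vw_s) > 1/8\eta$ for $s = t, t+1$, with $\hat{w}_{t+1} \geq \hat{w}_t \geq \lambda$ by monotonicity. Under these conditions the lemma yields $|\tilde{w}_s| \geq \gamma \hat{w}_s/2$. Chaining the three steps, and using $\hat{w}_{t+1} \geq \hat{w}_t$ from Lemma \ref{lem:max_margin},
\[
\gamma \hat{w}_t \leq \tfrac{\gamma}{2}\hat{w}_t + \tfrac{\gamma}{2}\hat{w}_{t+1} \leq |\tilde{w}_t| + |\tilde{w}_{t+1}| \leq \|\vw_{t+1} - \vw_t\| \leq \eta,
\]
which gives $\hat{w}_t \leq \eta/\gamma$.

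The only step needing care is confirming that the hypotheses of Lemma \ref{lem:small_complement} hold at both $t$ and $t+1$ under the oscillation definition together with $\eta \geq \eta_0$. I expect this to be immediate, since the previous proof applied the lemma to the same pair of iterates under the same hypotheses. Otherwise the argument is a short chain of inequalities, with slack of a factor $2$ to spare, because in fact $G \leq 1/2$.
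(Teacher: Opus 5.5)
Your proof is correct and is essentially the paper's argument: the sign change and Lemma \ref{lem:small_complement} give $\gamma \hat{w}_t \leq |\tilde{w}_t| + |\tilde{w}_{t+1}| = |\tilde{w}_{t+1} - \tilde{w}_t| \leq \|\vw_{t+1} - \vw_t\| \leq \eta$. The paper gets the last inequality from $F$ being $1$-Lipschitz, while you get it from $G \leq 1$, which amounts to the same fact. One small caution: your closing aside that $G \leq 1/2$ is not true in general, since misclassified points in $D_t^-$ each contribute more than $1/2$; nothing in your argument depends on it.
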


\begin{proof}
Since $t+1 < \tau$, we know from Lemma \ref{lem:small_complement} that $|\tilde{w}_t|
\geq \gamma \hat{w}_t/2$ and $|\tilde{w}_{t+1}| \geq \gamma \hat{w}_{t+1}/2 \geq \gamma
\hat{w}_t/2$, where the last inequality uses Lemma \ref{lem:max_margin}. From the
definition of oscillation, we know that $\tilde{w}_t$ and $\tilde{w}_{t+1}$ have
different signs, so $|\tilde{w}_{t+1} - \tilde{w}_t| = |\tilde{w}_{t+1}| + |\tilde{w}_t|
\geq \gamma \hat{w}_t$. Therefore
\begin{equation} 
    \gamma \hat{w}_t \leq |\tilde{w}_{t+1} - \tilde{w}_t| \leq \|\vw_{t+1} - \vw_t\| = \eta \|\nabla F(\vw_t)\| \leq \eta,
\end{equation}
where the last line uses that $F$ is $1$-Lipschitz. So $\hat{w}_t \leq \eta/\gamma$.
\end{proof}

Now we want to show that oscillations must happen at a certain frequency. To do so, we
use the following two lemmas (proofs in Appendix \ref{app:upper_bound}), which give us
rates of progress on the loss for individual data points between oscillations (Lemma
\ref{lem:a_recursion}) and a global bound for $|\tilde{w}_t|$ (Lemma
\ref{lem:complement_ub}).

\begin{restatable}{lemma}{lemarecursion} \label{lem:a_recursion}
For every $t$ with $1 \leq t < \tau$ and $i \in D_t^-$, if $\eta \geq \eta_0$ then
\begin{equation} \label{eq:a_recursion}
    a_{t+1}^i - a_t^i \geq \max \left( \frac{\eta \|\vx_i\|^2}{2n (\exp(a_t^i) + 1)}, \frac{1}{2} \eta \gamma^2 G(\vw_t) \right) 
\end{equation}
\end{restatable}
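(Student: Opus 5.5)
The plan is to expand the increment exactly and split it along the partition $D_t^+ \cup D_t^-$. Since $\vw_{t+1} - \vw_t = \frac{\eta}{n}\sum_j \vx_j/(\exp(a_t^j)+1)$, we have
\begin{equation*}
    a_{t+1}^i - a_t^i = \frac{\eta}{n} \sum_{j=1}^n \frac{\langle \vx_i, \vx_j \rangle}{\exp(a_t^j) + 1}.
\end{equation*}
I will show that the $D_t^-$ part of this sum is large and nonnegative term by term. I will then show that the $D_t^+$ part, whatever its sign, is negligible.

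\textbf{Step 1 (the $D_t^-$ terms).} This is where $d=2$ is used. Write each $\vx_j = p_j \vw_* + \tilde{x}_j \vv_*$ with $p_j = \langle \vx_j, \vw_* \rangle \geq \gamma$. For $i, j \in D_t^-$, both $\tilde{x}_i$ and $\tilde{x}_j$ have the sign opposite to $\tilde{w}_t$, so $\tilde{x}_i \tilde{x}_j \geq 0$. Hence $\langle \vx_i, \vx_j \rangle = p_i p_j + \tilde{x}_i \tilde{x}_j \geq \gamma^2 > 0$. Every term indexed by $D_t^-$ is therefore nonnegative, and the sum over $D_t^-$ is bounded below in two ways:
\begin{itemize}
    \item keeping only $j = i$ gives $A := \frac{\eta \|\vx_i\|^2}{n(\exp(a_t^i)+1)}$;
    \item using $\langle \vx_i, \vx_j \rangle \geq \gamma^2$ for every $j$ gives $B := \eta \gamma^2 \cdot \frac{1}{n}\sum_{j \in D_t^-} \frac{1}{\exp(a_t^j)+1}$.
\end{itemize}
So the $D_t^-$ part is at least $\max(A, B)$.

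\textbf{Step 2 (the $D_t^+$ terms).} For $j \in D_t^+$ we have $a_t^j \geq \gamma \hat{w}_t$ and $|\langle \vx_i, \vx_j \rangle| \leq 1$. The total $D_t^+$ contribution is therefore at least $-s$, where $s := \eta \exp(-\gamma \hat{w}_t)$. By Lemma \ref{lem:max_margin}, $\hat{w}_t \geq 8\tilde{\lambda}$ for $t \geq 1$, so $\exp(-\gamma \hat{w}_t) \leq (8\eta)^{-8}$ and $s \leq \eta (8\eta)^{-8}$. The same estimate gives $\frac{1}{n}\sum_{j \in D_t^+} \frac{1}{\exp(a_t^j)+1} \leq (8\eta)^{-8}$. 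Consequently
\begin{equation*}
    B \geq \eta \gamma^2 \left( G(\vw_t) - (8\eta)^{-8} \right).
\end{equation*}

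\textbf{Step 3 (combining).} Since $t < \tau$, we have $F(\vw_t) > 1/8\eta$, and Lemma \ref{lem:pot_lb} gives $G(\vw_t) \geq c/\eta$ for a universal constant $c$. Using $\eta \geq \eta_0 \geq 32\log(256/\gamma^2)/\gamma^2$, the quantities $\eta(8\eta)^{-8}$ and $\eta\gamma^2(8\eta)^{-8}$ are polynomially smaller in $\eta$ than $\eta\gamma^2 G(\vw_t) \geq c\gamma^2$. So $s$ is at most a tiny fraction $\epsilon$ of $\eta\gamma^2 G(\vw_t)$, and $B \geq (1-\epsilon)\eta\gamma^2 G(\vw_t)$. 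Then
\begin{equation*}
    a_{t+1}^i - a_t^i \geq \max(A, B) - s \geq \tfrac12 A + \tfrac12 B - s.
\end{equation*}
This is at least $A/2$ whenever $s \leq B/2$. It is also at least $B - s \geq (1 - 2\epsilon)\eta\gamma^2 G(\vw_t) \geq \tfrac12 \eta \gamma^2 G(\vw_t)$. Together these give \Eqref{eq:a_recursion}.

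\textbf{Main obstacle.} The delicate point is Step 2. The cross terms $\langle \vx_i, \vx_j \rangle$ for $j \in D_t^+$ can be negative, and there is no a priori upper bound on $a_t^i$, so the own-term $A$ may itself be tiny. The negative contribution must therefore be absorbed by $B$, which is bounded below uniformly through the potential lower bound, rather than by $A$. This is exactly why the combination in Step 3 charges $s$ against $B$, and why the condition $\eta \geq \eta_0$ (via Lemma \ref{lem:max_margin}) is needed to make $\exp(-\gamma\hat{w}_t)$ small relative to $\gamma^2/\eta$. I expect only the constant-chasing to require care.
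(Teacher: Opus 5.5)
Your proof is correct and follows essentially the same route as the paper. Both split the increment over $D_t^-$ and $D_t^+$, use $\langle \vx_i, \vx_j \rangle \geq \gamma^2$ on $D_t^-$ to get both lower bounds, and show the $D_t^+$ contribution is negligible against $\eta\gamma^2 G(\vw_t) \geq \gamma^2/16$. The only cosmetic differences are that you bound the $D_t^+$ terms directly via $a_t^j \geq \gamma \hat{w}_t \geq 8\log(8\eta)$, and that you combine additively. The paper instead routes the $D_t^+$ bound through $B_t \subset D_t^-$ and Lemma~\ref{lem:active_pot}, and combines multiplicatively via $|A_2| \leq A_1/2^{16}$.
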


\begin{restatable}{lemma}{lemcomplementub} \label{lem:complement_ub}
If $\eta \geq \eta_0$, then $|\tilde{w}_t| \leq \eta$ for all $t < \tau$.
\end{restatable}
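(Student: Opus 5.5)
The plan is to track the one–dimensional recursion for $\tilde{w}_t$ directly, splitting each gradient step into a "restoring" part from $D_t^-$ and a negligible "error" part from $D_t^+$. Projecting the GD update onto $\vv_*$ gives
\begin{equation*}
    \tilde{w}_{t+1} = \tilde{w}_t + \frac{\eta}{n} \sum_{i \in D_t^-} \frac{\tilde{x}_i}{\exp(a_t^i)+1} + \frac{\eta}{n} \sum_{i \in D_t^+} \frac{\tilde{x}_i}{\exp(a_t^i)+1} =: \tilde{w}_t - s_t + e_t .
\end{equation*}
By definition of $D_t^-$, every term in the first sum has sign opposite to $\tilde{w}_t$, so $s_t$ has the sign of $\tilde{w}_t$. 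I will bound its size by
\[
|s_t| \leq \eta \max_i |\tilde{x}_i| .
\]
The key observation is that $\|\vx_i\| \leq 1$ together with $\langle \vx_i, \vw_* \rangle \geq \gamma$ forces $|\tilde{x}_i| \leq \sqrt{1-\gamma^2} \leq 1 - \gamma^2/2$. Hence $|s_t| \leq \eta(1-\gamma^2/2)$, and this leaves a slack of $\eta\gamma^2/2$.

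For the error term I will use Lemma \ref{lem:max_margin}. For $i \in D_t^+$ we have $a_t^i \geq \gamma \hat{w}_t$, so
\[
|e_t| \leq \eta \exp(-\gamma \hat{w}_t) \leq \eta \exp\bigl(-\gamma^2\eta/2 - \gamma^2(t-1)/16\bigr)
\quad \text{for } 1 \leq t \leq \tau .
\]
Since $\tilde{w}_t - s_t$ moves $\tilde{w}_t$ towards (and possibly past) zero by at most $|s_t|$, we get
\[
|\tilde{w}_t - s_t| \leq \max\bigl(|\tilde{w}_t|, |s_t| - |\tilde{w}_t|\bigr) \leq \max\bigl(|\tilde{w}_t|, \eta(1-\gamma^2/2)\bigr).
\]
This yields the recursion
\[
|\tilde{w}_{t+1}| \leq \max\bigl(|\tilde{w}_t|, \eta(1-\gamma^2/2)\bigr) + |e_t| .
\]
The base case is immediate: $\vw_0 = \vzero$ and $G(\vzero) = 1/2$, so $|\tilde{w}_1| \leq \eta/2$.

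Unrolling the recursion from $t=1$, I will show by induction that for all $t < \tau$,
\[
|\tilde{w}_t| \leq \eta(1-\gamma^2/2) + \sum_{s=1}^{t-1} |e_s| .
\]
It then remains to check that $\sum_{s \geq 1} |e_s| \leq \eta \gamma^2/2$. Summing the geometric series gives
\[
\sum_{s\geq 1}|e_s| \leq \eta\, e^{-\gamma^2\eta/2} \bigl(1 + 16/\gamma^2\bigr) \leq \eta\, e^{-\gamma^2\eta/2} \cdot 17/\gamma^2 .
\]
Using $\eta \geq \eta_0 \geq (32/\gamma^2)\log(256/\gamma^2)$, we have $e^{-\gamma^2\eta/2} \leq (\gamma^2/256)^{16}$, which is far smaller than $\gamma^4/34$. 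This closes the induction and gives $|\tilde{w}_t| \leq \eta$.

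The main obstacle is precisely this accumulation of the $D_t^+$ error terms. A naive invariant $|\tilde{w}_t| \leq \eta$ is not preserved step by step, because a step with $s_t \approx 0$ lets $e_t$ push $|\tilde{w}_t|$ slightly above $\eta$. The fix is the strict margin $|\tilde{x}_i| \leq 1-\gamma^2/2$ combined with the linear growth of $\hat{w}_t$ from Lemma \ref{lem:max_margin}: the former creates room of order $\eta\gamma^2$, and the latter makes the errors summable into that room. One further detail to verify is that the lower bound on $\hat{w}_t$ is applied only for indices $s \leq \tau$, which holds since we only need $s \leq t-1 < \tau$.
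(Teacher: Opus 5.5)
Your proof is correct, but it takes a different route from the paper's.

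\textbf{The paper's route.} The paper argues one step at a time with a case split.
\begin{itemize}
\item If $\tilde w_t$ changes sign, then $|\tilde w_{t+1}|\le|\tilde w_{t+1}-\tilde w_t|\le\eta\|\nabla F(\vw_t)\|\le\eta$.
\item If it does not, the paper splits the data by $B_t=\{i:\tilde x_i\tilde w_t\le-\frac{1}{2}\gamma\hat w_t\}$. It then combines Lemma~\ref{lem:active_pot} (available because $t<\tau$), the bound $\hat w_t\ge\eta\gamma/2$, and the inductive hypothesis $|\tilde w_t|\le\eta$ to show $\tilde w_t(\tilde w_{t+1}-\tilde w_t)<0$. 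Hence $|\tilde w_{t+1}|\le|\tilde w_t|$.
\end{itemize}
So, contrary to your closing remark, the naive invariant $|\tilde w_t|\le\eta$ \emph{is} preserved step by step before $\tau$. The error contribution can never beat the restoring one there. Lemma~\ref{lem:active_pot} says the points of $B_t\subset D_t^-$ carry all but a $16/\eta^3$ fraction of $G(\vw_t)$, each with $|\tilde x_i\tilde w_t|\ge\frac{1}{2}\gamma\hat w_t$. The obstruction you describe is an artifact of your cruder bounds.

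\textbf{What your route gains.} Those cruder bounds are also what makes your argument simpler. The bounds $|s_t|\le\eta\sqrt{1-\gamma^2}$ and $|e_t|\le\eta e^{-\gamma\hat w_t}$ hold at every step with no comparison between $D_t^-$ and $D_t^+$. So you need neither Lemma~\ref{lem:active_pot}, nor the case split, nor the inductive hypothesis inside the error estimate. The margin slack $\eta\gamma^2/2$ then absorbs the accumulated error.

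\textbf{What it costs.} You must sum errors over time. That requires the full linear growth $\hat w_s\ge\gamma\eta/2+\gamma(s-1)/16$ from Lemma~\ref{lem:max_margin}, whereas the paper only needs $\hat w_t\ge\eta\gamma/2$. You also lose the paper's stronger structural conclusion: between sign changes, every step moves $\tilde w_t$ toward zero. That conclusion fits the oscillation picture used in the rest of the analysis.
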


Lemma \ref{lem:a_recursion} tells us that between oscillations, if a data point $\vx_i$
has non-negligible loss (i.e. $i \in D_t^-$), then its loss must decrease. This
recurrence on $a_t^i$ can be combined with the initial condition $a_t^i \geq -\eta$
implied by Lemma \ref{lem:complement_ub}:
\begin{equation}
    a_t^i = \langle \vw_t, \vx_i \rangle = \hat{w}_t \langle \vw_*, \vx_i \rangle + \tilde{w}_t \langle \vv_*, \vx_i \rangle \geq -|\tilde{w}_t| \|\vv_*\| \|\vx_i\| \geq -\eta,
\end{equation}
in order to bound the time $s$ until $\ell(a_s^i) \leq 1/8\eta$ for $i \in D_t^-$. At
this point, there are two cases: either all data have low loss, implying $F(\vw_s) \leq
1/8\eta$ and we have transitioned to stability, or some points with negligible loss at
step $t$ experienced an increase in loss at step $s$, implying an oscillation has
occurred. The resulting conclusion is stated formally below and proved in Appendix
\ref{app:upper_bound}.

\begin{restatable}{lemma}{lemtimetooscillate} \label{lem:time_to_oscillate}
Suppose $\eta \geq \eta_0$ and $t < \tau$. Then for some $s \leq t + 1 + 4n/\gamma +
192/\gamma^2$, either $s = \tau$ or an oscillation happens at iteration $s$.
\end{restatable}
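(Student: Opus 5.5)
We argue by contradiction. Suppose that for every $s$ with $t+1 \le s \le t + 1 + 4n/\gamma + 192/\gamma^2$ we have $s < \tau$ and no oscillation occurs at $s$. Fix $t' = \max(t+1,1)$, so that Lemma \ref{lem:max_margin} gives $\hat{w}_s \ge 8\tilde\lambda \ge \lambda$ for all $s \ge t'$. Then condition (1) of an oscillation holds throughout the window. Condition (2) also holds for every $s$ with $s+1<\tau$, which covers the whole window except possibly its last step. So the absence of an oscillation forces condition (3) to fail: $\tilde{w}_s$ never changes sign on the window. Since no sign change occurs, the sets $D_s^+$ and $D_s^-$ are constant on the window; call them $D^+$ and $D^-$. (If $\tilde w_s = 0$ at some point, we treat that degenerate case separately: then every $a_s^i \ge \gamma\hat w_s \ge 8\gamma\tilde\lambda$, so every point already has loss well below $1/8\eta$, and $F(\vw_s)\le 1/8\eta$ contradicts $s<\tau$.)

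\textbf{Points in $D^+$.} For $i \in D^+$ we have $a_s^i \ge \gamma \hat w_s \ge 8\log(8\eta)$. Hence $\ell(a_s^i) \le 1/(8\eta)^8$, which is negligible for all $s$ in the window.

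\textbf{Points in $D^-$.} For $i\in D^-$, Lemma \ref{lem:a_recursion} applies at every step of the window. Let $\delta_i = \|\vx_i\|$. We need $\delta_i>0$ here; if $\delta_i = 0$ then $\tilde x_i=0$, which is impossible for $i \in D^-$. We use the two branches of \eqref{eq:a_recursion} in turn.

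First, starting from $a^i_{t'} \ge -\eta$ (Lemma \ref{lem:complement_ub}), the first branch gives an increment of at least $\eta\delta_i^2/(4n)$ while $a^i_s \le 0$. We want a bound independent of $\delta_i$. Note that $\gamma \le \langle \vx_i,\vw_*\rangle \le \delta_i$, and more usefully that $a^i_s \ge \gamma\hat w_s - |\tilde w_s|\,\delta_i$. The negative part $|\tilde w_s|\delta_i \le \eta\delta_i$ is therefore proportional to $\delta_i$. So it suffices to gain $\eta\delta_i$ in total, which takes about $4n/\delta_i \le 4n/\gamma$ steps. This is the source of the $4n/\gamma$ term.

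Second, once $a^i_s \ge 0$, we switch to the second branch. It gives increments of at least $\tfrac12\eta\gamma^2 G(\vw_s)$. Since $s<\tau$, Lemma \ref{lem:pot_lb} gives $G(\vw_s) \ge c/\eta$, so the increment is $\Omega(\gamma^2)$ per step. This alone is too slow to reach $\log(8\eta)$ in $O(1/\gamma^2)$ steps. The fix is to reuse the first branch: while $a^i_s \in [0,\log(8\eta)]$, the first branch gives increments $\gtrsim \eta\delta_i^2 e^{-a}/n$. With $\eta\ge n$, the potential $e^{a}$ then grows additively by $\Omega(\eta\delta_i^2/n)$ per step. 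This should bring $\ell(a^i_s)$ below $1/(8\eta)$ within the remaining $192/\gamma^2$ steps, combined with the $\gamma^2$ branch.

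\textbf{Conclusion and main obstacle.} At the end of the window, all $i$ then satisfy $\ell(a^i_s)\le 1/(8\eta)$, so $F(\vw_s)\le 1/8\eta$. This contradicts $s<\tau$ and proves the lemma. The main obstacle is the second phase: getting the final-stage count to be $O(1/\gamma^2)$ rather than $O(\log\eta/\gamma^2)$. I would first try the coupled use of both branches of \eqref{eq:a_recursion} via the potential $e^{a^i_s}$. If that fails, I would exploit that $G(\vw_s)\ge \frac1n\sum_{i\in D^-}1/(e^{a^i_s}+1)$ is itself large while points remain unfit, which yields a self-reinforcing recursion.
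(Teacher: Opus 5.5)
Your first phase is the paper's. Constant sign of $\tilde w_s$ freezes $D_\pm$, the first branch of Lemma~\ref{lem:a_recursion} gives increments $\ge \eta\|\vx_i\|^2/(4n)$ while $a_s^i<0$, and $a^i\ge -\eta\|\vx_i\|$ bounds this phase by $4n/\gamma$ steps.

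\textbf{The gap is in the second phase.} Your main mechanism there is per-point, and the per-point bounds you have do not give $O(1/\gamma^2)$.
\begin{itemize}
\item \emph{First branch alone.} It makes $e^{a_s^i}$ grow additively by only $\eta\|\vx_i\|^2/(4n)$ per step. Reaching $e^{a}\approx 8\eta$ then takes about $32n/\|\vx_i\|^2$ steps, which is $\Theta(n/\gamma^2)$ when $\|\vx_i\|\approx\gamma$.
\item \emph{Switching to the second branch.} Using $G(\vw_s)\ge 1/(16\eta)$ gives increments of $\gamma^2/32$, which beat the first branch once $e^{a}= 8\eta\|\vx_i\|^2/(n\gamma^2)$. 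That leaves a remaining distance of $\log(n\gamma^2/\|\vx_i\|^2)$, i.e.\ about $32\log n/\gamma^2$ more steps. This does not fit in $4n/\gamma+192/\gamma^2$ when $n$ is large and $n\gamma$ is small.
\item \emph{Averaged target.} Your target ``$\ell(a_s^i)\le 1/8\eta$ for every $i$'' is stronger than needed, but relaxing it does not help. Your per-point bounds only guarantee $\frac1n\sum_{i\in D_-}e^{-a_s^i}\le |D_-|/(8\eta)$. Closing that factor of order $|D_-|$ with $\gamma^2/32$ increments again costs $\Theta(\log|D_-|/\gamma^2)$ steps.
\end{itemize}

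\textbf{The missing idea is the one you list only as a fallback, and it is what the paper does.} Work with the aggregate $G(\vw_s)$, and use the second branch with the \emph{current} value of $G(\vw_s)$. This shifts every $a^i$, $i\in D_-$, by the same amount $\frac12\eta\gamma^2 G(\vw_s)$. Combined with Lemma~\ref{lem:active_pot}, which says the sum over $B_{s+1}\subset D_-$ captures $G(\vw_{s+1})$ up to a factor $1+O(\eta^{-3})$, this gives a recursion $G(\vw_{s+1})\le\rho_s G(\vw_s)$.

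Getting $\rho_s<1$ at the right rate needs care your sketch does not address:
\begin{itemize}
\item The crude comparison $e^{-a}\le 2/(e^a+1)$ leaves a factor near $2$. That kills contraction when $\eta\gamma^2 G(\vw_s)$ is small.
\item So the paper uses this comparison only once, maximizing $z e^{-bz}$, to get $G(\vw_{t_0+1})\le 3/(2\eta\gamma^2)$.
\item It then uses $a_s^i\ge 0$ and $C=e^{\eta\gamma^2 G(\vw_s)/2}\le 3$ to show $\frac{1}{Ce^{a}+1}\le C^{-1/4}\frac{1}{e^{a}+1}$.
\item With $G(\vw_s)\ge 1/(16\eta)$ absorbing the $O(\eta^{-3})$ error, this yields $G(\vw_{s+1})\le\bigl(1-\frac{1}{12}\eta\gamma^2 G(\vw_s)\bigr)G(\vw_s)$.
\end{itemize}
Hence $1/G(\vw_s)$ grows by $\eta\gamma^2/12$ per step, so $G\le 1/(16\eta)$ after $192/\gamma^2$ further steps. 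Then $F\le 2G\le 1/(8\eta)$ by Lemma~\ref{lem:obj_pot_lb}, since all margins are nonnegative after the first phase. This aggregate contraction is exactly where the $192/\gamma^2$ term comes from; your per-point increments lose at least a $\log n$ factor.
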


The last piece of the puzzle before bounding $\tau$ is to handle the steps between
oscillations; the following lemma shows that $\hat{w}_t$ will grow exponentially not
only on iterations where oscillations occur, but at any iteration before an oscillation
(proof in Appendix \ref{app:upper_bound}).

\begin{restatable}{lemma}{leminbetween} \label{lem:in_between}
For any $t$, if there exists an oscillation $t_k > t$ and $\eta \geq \eta_0$, then
$\hat{w}_{t+1} \geq (1 + \gamma^2/2) \hat{w}_t$.
\end{restatable}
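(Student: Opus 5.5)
If $t=0$, then $\hat{w}_0=0$ and the claim follows from the monotonicity in Lemma \ref{lem:max_margin}. So assume $t \geq 1$, and let $t_k$ be the \emph{first} oscillation with $t_k \geq t$. If $t_k = t$, Lemma \ref{lem:oscillation_movement} gives the stronger factor $1+\gamma^2$.

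Otherwise $t < t_k$, and I claim $\tilde{w}_s$ has constant sign for $s \in [t, t_k]$. Since $t_k + 1 < \tau$, every $s \leq t_k+1$ has $F(\vw_s) > 1/8\eta$. Every $s \geq 1$ has $\hat{w}_s \geq 8\tilde{\lambda} \geq \lambda$ by Lemma \ref{lem:max_margin}. Lemma \ref{lem:small_complement} gives $|\tilde{w}_s| \geq \gamma \hat{w}_s/2 > 0$, so $\tilde{w}_s$ is never zero. Hence any sign change between consecutive steps in $[t, t_k)$ would itself be an oscillation, contradicting minimality of $t_k$. Consequently the partition is frozen: $D_s^- = D_t^-$ and $D_s^+ = D_t^+$ for all $s \in [t, t_k]$.

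The main idea is to transfer the large jump at $t_k$ back to time $t$ through the gradient potential. By the argument of Lemma \ref{lem:oscillate_bounded_margin},
\begin{equation}
\gamma \hat{w}_{t_k} \leq |\tilde{w}_{t_k+1} - \tilde{w}_{t_k}| \leq \|\vw_{t_k+1}-\vw_{t_k}\| \leq \eta G(\vw_{t_k}).
\end{equation}
I then split $G = G^- + G^+$ into the contributions of $D^-$ and $D^+$. For $i \in D^+$ we have $a_s^i \geq \gamma \hat{w}_s \geq 8\gamma\tilde{\lambda} = 8\log(8\eta)$. So each such term is at most $(8\eta)^{-8}$, and $G^+(\vw_s) \leq (8\eta)^{-8}$. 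For $i \in D^-$, Lemma \ref{lem:a_recursion} shows $a_s^i$ is nondecreasing for $s \in [t, t_k]$, since the set $D^-$ is unchanged. So each term $1/(\exp(a_s^i)+1)$ is nonincreasing, and $G^-(\vw_t) \geq G^-(\vw_{t_k})$. Therefore
\begin{equation}
\eta G(\vw_t) \geq \eta G(\vw_{t_k}) - \eta(8\eta)^{-8} \geq \gamma \hat{w}_{t_k} - (8\eta)^{-7} \geq \gamma \hat{w}_t - (8\eta)^{-7},
\end{equation}
using $\hat{w}_{t_k} \geq \hat{w}_t$.

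Finally, \Eqref{eq:margin_pot} gives $\hat{w}_{t+1} - \hat{w}_t \geq \eta\gamma G(\vw_t) \geq \gamma^2 \hat{w}_t - \gamma(8\eta)^{-7}$. Since $\hat{w}_t \geq \gamma\eta/2$, the error term is far below $\gamma^2\hat{w}_t/2$ for $\eta \geq \eta_0$. This yields $\hat{w}_{t+1} \geq (1+\gamma^2/2)\hat{w}_t$.

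The main obstacle is the monotonicity step: $G(\vw_t)$ must dominate $G(\vw_{t_k})$ up to negligible error. This rests on two facts: the $D^\pm$ partition stays frozen between oscillations, and $D^+$ points contribute negligibly to $G$ at every step, not only in the aggregate. If Lemma \ref{lem:a_recursion} needs the partition at the \emph{current} step, the frozen-sign argument supplies exactly that.
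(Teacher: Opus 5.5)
Your proof is correct and follows essentially the paper's route: freeze the $D^\pm$ partition up to the first oscillation $t_k$, use Lemma~\ref{lem:a_recursion} to get $a_t^i \leq a_{t_k}^i$ on $D^-$, discard the negligible $D^+$ terms, and pull the large jump at $t_k$ back to time $t$. The paper transfers the weighted increment $\hat{w}_{t_k+1}-\hat{w}_{t_k}$ and invokes Lemma~\ref{lem:oscillation_movement}, while you transfer $G$ via $\gamma \hat{w}_{t_k} \leq \eta G(\vw_{t_k})$, which is the same estimate unpacked. Your separate handling of $t=0$ and of $t$ itself being an oscillation is a small improvement, since the paper's claim that the partition is constant on $\{t,\ldots,t_k-1\}$ silently fails in those cases.
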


Finally, we can prove our key lemma: a tight upper bound on the transition time $\tau$.

\begin{lemma} \label{lem:transition_time}
If $\eta \geq \eta_0$, then $\tau \leq 2 + 4n/\gamma + 280 \log(2/\gamma^2)/\gamma^2$.
\end{lemma}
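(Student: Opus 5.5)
The plan is to combine the exponential growth of $\hat{w}_t$ before the last oscillation with the cap on $\hat{w}_t$ at oscillation times, and then to use the frequency bound on oscillations to control the tail after the last one. Suppose $\tau > 2 + 4n/\gamma + 280\log(2/\gamma^2)/\gamma^2$, aiming for a contradiction; the case where $\tau$ is small is trivial.

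First I would bound the index of the last oscillation. Let $t_K$ be the last iteration before $\tau$ at which an oscillation occurs. If there are none, skip this step. Every $t < t_K$ has an oscillation after it, so Lemma \ref{lem:in_between} gives $\hat{w}_{t+1} \geq (1+\gamma^2/2)\hat{w}_t$ for all $1 \leq t < t_K$. Lemma \ref{lem:oscillate_bounded_margin} gives $\hat{w}_{t_K} \leq \eta/\gamma$. Lemma \ref{lem:max_margin} gives $\hat{w}_1 \geq \gamma\eta/2$. Chaining these,
\begin{equation*}
\frac{\gamma\eta}{2}\left(1+\frac{\gamma^2}{2}\right)^{t_K-1} \leq \hat{w}_{t_K} \leq \frac{\eta}{\gamma},
\end{equation*}
so $(t_K - 1)\log(1+\gamma^2/2) \leq \log(2/\gamma^2)$. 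Since $\log(1+x) \geq x/2$ for $x \leq 1$, this yields $t_K \leq 1 + 4\log(2/\gamma^2)/\gamma^2$. The key point is that $\eta$ cancels.

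Second, I would handle the tail after $t_K$. Apply Lemma \ref{lem:time_to_oscillate} at time $t = t_K + 1$, or at $t = 0$ if there was no oscillation. We need $t < \tau$, which holds because an oscillation at $t_K$ forces $t_K + 1 < \tau$. The lemma gives some $s \leq t + 1 + 4n/\gamma + 192/\gamma^2$ such that either $s = \tau$ or an oscillation occurs at $s$. An oscillation at $s > t_K$ contradicts the maximality of $t_K$, so $s = \tau$. Hence
\begin{equation*}
\tau \leq t_K + 2 + \frac{4n}{\gamma} + \frac{192}{\gamma^2} \leq 3 + \frac{4n}{\gamma} + \frac{4\log(2/\gamma^2) + 192}{\gamma^2}.
\end{equation*}
Since $\gamma < 1$, we have $\log(2/\gamma^2) \geq \log 2$, so $192 \leq 277\log(2/\gamma^2)$ and the $\gamma^{-2}$ terms fit within $280\log(2/\gamma^2)/\gamma^2$.

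The main obstacle is the bookkeeping at the boundaries. This means checking the indexing (starting from $t = 1$ versus $t = 0$), and making sure the hypotheses of Lemmas \ref{lem:in_between} and \ref{lem:time_to_oscillate} hold at the chosen times, for instance that $t_K + 1 < \tau$. It also includes absorbing the stray additive constant $3$ versus $2$ into the slack of the $\log(2/\gamma^2)/\gamma^2$ term. That slack is about $\log(2)/\gamma^2 > 1$, which easily covers an extra $1$. Alternatively, I would apply Lemma \ref{lem:time_to_oscillate} directly at $t = t_K$ and observe that the oscillation it returns must lie strictly after $t_K$, provided the lemma's proof guarantees $s > t$.
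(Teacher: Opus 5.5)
Your argument follows the paper's proof. You chain Lemma~\ref{lem:in_between} from $\hat{w}_1 \ge \gamma\eta/2$ (Lemma~\ref{lem:max_margin}) up to the cap $\hat{w}_{t_k} \le \eta/\gamma$ of Lemma~\ref{lem:oscillate_bounded_margin}, so $\eta$ cancels. You then apply Lemma~\ref{lem:time_to_oscillate} just after the last oscillation. Three smaller points on this part:
\begin{itemize}
\item Run the exponential-growth bound for an arbitrary oscillation before speaking of ``the last'' one, as the paper does. That is what shows a last one exists, e.g.\ if $\tau = \infty$.
\item Your offset is correct. Applying Lemma~\ref{lem:time_to_oscillate} at $t = t_K+1$ gives $\tau \le t_K + 2 + 4n/\gamma + 192/\gamma^2$, whereas the paper writes $t_N + 1$.
\item Your fallback of applying it at $t = t_K$ is not backed by that lemma's proof. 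The proof needs $\text{sign}(\tilde{w}_s)$ to be constant from $s = t$ onward, which is false when $t$ is itself an oscillation.
\end{itemize}

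The step that does not go through is the final constant check. Let $L = \log(2/\gamma^2)$. Your bound is $\tau \le 3 + 4n/\gamma + (4L+192)/\gamma^2$, and this is at most $2 + 4n/\gamma + 280L/\gamma^2$ if and only if $\gamma^2 + 192 \le 276L$. As $\gamma \uparrow 1$ we have $L \downarrow \log 2$ and $276\log 2 \approx 191.3$, so the check fails for $\gamma$ close to $1$. From $192 \le 277L$ you only get $4L + 192 \le 281L$, not $280L$. The slack, $(276L - 192 - \gamma^2)/\gamma^2$, is negative there rather than $\approx \log(2)/\gamma^2$.

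The paper's sharper inequality $\log(1+z) \ge z/(1+z)$ gives $3L$ instead of $4L$, but it does not rescue the check once the $+2$ is kept. One then needs $\gamma^2 + 192 \le 277L$, while $277\log 2 \approx 192.0$. The paper's own check closes only because of its $t_N+1$.

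A clean repair is a case split.
\begin{itemize}
\item If $\gamma^2 \ge 2(\sqrt{2}-1)$: we have $\tilde{w}_1 = \frac{\eta}{2n}\sum_i \tilde{x}_i$ and $|\tilde{x}_i| \le \sqrt{1-\gamma^2}$. Hence $|\tilde{w}_1| \le \frac{\eta}{2}\sqrt{1-\gamma^2} \le \frac{\eta\gamma^2}{4} \le \frac{\gamma}{2}\hat{w}_1$, and Lemma~\ref{lem:small_complement} yields $\tau \le 1$.
\item Otherwise $\gamma^2 < 0.83$, so $L > \log(2/0.83) > 0.87$. Then $276L - 192 - \gamma^2 > 240 - 193 > 0$, and your bound fits.
\end{itemize}
This is purely a matter of constants; the structure of your proof and the asymptotic bound are sound.
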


\begin{proof}
First, we sketch the argument from a high-level. We know from Lemma \ref{lem:in_between}
that $\hat{w}_t$ grows exponentially until the last oscillation occurs. This means that
no oscillations can occur after the first $\widetilde{\mathcal{O}}(1/\gamma^2)$
iterations, since, if an oscillation were to occur after step $t =
\widetilde{\Theta}(1/\gamma^2))$ then $\hat{w}_t \geq \eta/\gamma$ (by repeated
applications of Lemma \ref{lem:in_between}) at which point Lemma
\ref{lem:oscillate_bounded_margin} implies that no further oscillations can occur. Then
by Lemma \ref{lem:time_to_oscillate}, it will be no more than
$\widetilde{\mathcal{O}}(n/\gamma + 1/\gamma^2)$ iterations until the stable transition
happens. We execute this argument below.

Let $t_0, t_1, \ldots$ be the iterations where oscillations occur. We want to show that
this list is finite, and to bound the last iteration $t_N$ where an oscillation happens.
For any oscillation $t_k$, and iteration $t < t_k$ we know from Lemma
\ref{lem:in_between} that $\hat{w}_{t+1} \geq (1 + \gamma^2/2) \hat{w}_t$, so
\begin{equation}
    \hat{w}_{t_k} \geq (1 + \gamma^2/2)^{t_k-1} \hat{w}_1 \Eqmark{i}{\geq} \frac{1}{2} (1 + \gamma^2/2)^{t_k-1} \eta \gamma, \label{eq:max_margin_oscillate_lb}
\end{equation}
where $(i)$ uses Lemma \ref{lem:max_margin}. We also know from Lemma
\ref{lem:oscillate_bounded_margin} that $\hat{w}_{t_k} \leq \eta/\gamma$, so
\begin{equation}
    \frac{1}{2} (1 + \gamma^2/2)^{t_k-1} \eta \gamma \leq \frac{\eta}{\gamma},
\end{equation}
or
\begin{equation}
    t_k \leq 1 + \frac{\log(2/\gamma^2)}{\log \left( 1 + \gamma^2/2 \right)} \Eqmark{i}{\leq} 1 + \frac{1+\gamma^2/2}{\gamma^2/2} \log(2/\gamma^2) \leq 1 + \frac{3 \log(2/\gamma^2)}{\gamma^2},
\end{equation}
where $(i)$ uses $\log(1+z) \geq z/(1+z)$ for all $z \geq 0$. Therefore, there are a
finite number $N$ of oscillations, and $t_N \leq 1 + 3 \log(2/\gamma^2)/\gamma^2$.

We can now apply Lemma \ref{lem:time_to_oscillate} with $t=t_N+1$, which implies that,
for some $s$ with $t_N < s \leq t_N + 1 + 4n/\gamma + 192/\gamma^2$, either $s$ is an
oscillation or $s = \tau$. However, $s$ cannot be an oscillation, since $t_N$ is the
last oscillation and $s > t_N$, so $s = \tau$, and
\begin{equation}
    \tau \leq t_N + 1 + \frac{4n}{\gamma} + \frac{192}{\gamma^2} \leq 2 + \frac{4n}{\gamma} + \frac{192 + 3 \log(2/\gamma^2)}{\gamma^2} \leq 2 + \frac{4n}{\gamma} + \frac{280 \log(2/\gamma^2)}{\gamma^2}.
\end{equation}
\end{proof}

To prove Theorem \ref{thm:upper_bound}, it only remains to upper bound $F(\vw_t)$ for $t
> \tau$. Rather than using the split comparator technique of \citet{wu2024large}, we
follow the analysis of \citet{crawshaw2025constant}, which explicitly bounds $\|\nabla^2
F(\vw_t)\|$ along the trajectory, then uses essentially classical descent arguments for
smooth objectives. This allows us to eliminate $\log$ terms from the final rate.

\section{Lower Bounding the Stable Transition Time} \label{sec:lower_bound}
In this section, we present a lower bound of $\tau$ that matches our upper bound up to
factors logarithmic in $1/\gamma$, implying that our analysis of GD's stable transition
time from Section \ref{sec:upper_bound} is tight.

\begin{restatable}{theorem}{thmlowerbound} \label{thm:lower_bound}
If $\gamma \leq 1/6, n \geq 2$, and $\eta \geq \eta_1 := \max \left\{ n, 32/\gamma^2
\log(3/\gamma) \right\}$, then there exists a dataset satisfying Assumption
\ref{ass:dataset} such that the transition time $\tau := \min \{t \geq 0 : F(\vw_t) \leq
1/8\eta\}$ of GD for \Eqref{eq:opt_prob} satisfies $\tau \geq \Omega \left( n/\gamma +
1/\gamma^2 \right)$.
\end{restatable}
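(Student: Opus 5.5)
The lower bound is $\max$ of two terms, so I would build two separate hard instances and take whichever is worse for the given $(n,\gamma)$. One instance forces $\tau \geq \Omega(n/\gamma)$ and the other forces $\tau \geq \Omega(1/\gamma^2)$. Since $\max(A,B)\geq (A+B)/2$, this gives $\Omega(n/\gamma + 1/\gamma^2)$. A single dataset can then be selected depending on which term dominates.

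\textbf{Instance 1 ($n/\gamma$).} Following \citet{tyurin2025logistic}, I would use $n-1$ copies (or near-copies) of a point $\vx_1 = (\gamma, \sqrt{1-\gamma^2})$ and one point $\vx_n = (\gamma, -\sqrt{1-\gamma^2})$, or a small perturbation of this, so that $\vw_* = \ve_1$ and the margin is exactly $\gamma$.

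The first step from $\vw_0=\vzero$ is $\vw_1 = \frac{\eta}{2n}\sum_i \vx_i$. It is dominated by the majority direction, so $\tilde w_1 \approx \eta/2$, and $\vx_n$ ends up badly misclassified with $a_1^n \approx -\eta/2 \cdot(1-O(1/n))$. Thereafter the majority points have loss $\approx e^{-\Theta(\eta)}$, which is negligible because $\eta \geq n$ and $\eta\gtrsim \log(1/\gamma)/\gamma^2$. The iterate is then driven only by $\vx_n$, through a step of size at most $\eta/(2n)$ per iteration as long as $a_t^n \leq 0$.

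From these per-step bounds I would show by induction that $a_{t+1}^n - a_t^n \leq \eta/n \cdot \|\vx_n\|^2 \cdot |\ell'(a_t^n)| + (\text{negligible majority contribution})$. I also need $a_{t}^n$ to stay $\leq 0$ for $t \lesssim n/\gamma$. Tracking only $a_t^n$ is too crude: the factor $1/\gamma$ has to come from the geometry. Moving $\tilde w$ down by $\eta/2$ requires about $n$ steps, but each step also increases $\hat w$, and the majority points' contribution keeps pushing $\tilde w$ back up once their loss becomes non-negligible.

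To get this right I would track $(\hat w_t,\tilde w_t)$ directly. The claim is that the majority points' loss becomes non-negligible before $\vx_n$ is correctly classified, and that the trajectory then balances along the boundary, with net progress in $a^n$ of only $O(\gamma\eta/n)$ per step. That yields $\Omega(n/\gamma)$ steps with $\vx_n$ misclassified. While it is misclassified, $F(\vw_t)\geq \log 2/n > 1/8\eta$, so these steps all lie before $\tau$.

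\textbf{Instance 2 ($1/\gamma^2$).} I would take two symmetric points $\vx_{1,2}=(\gamma,\pm\sqrt{1-\gamma^2})$ (padding with duplicates if $n>2$). By symmetry $\tilde w_t = 0$ for all $t$, which is uninformative, so I would instead break the symmetry slightly so that GD oscillates across $\vw_*$.

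The strategy is to reverse the upper-bound argument. Lemma~\ref{lem:oscillation_movement} gives growth of $\hat w_t$ by a factor of at least $1+\gamma^2$ per oscillation. I would prove a matching upper bound: per step, $\hat w_{t+1}\leq (1+O(\gamma^2))\hat w_t$ during oscillations, since the update has norm $\approx 2|\tilde w_t|\approx 2\gamma\hat w_t$ and only a $\gamma$ fraction of it lies along $\vw_*$.

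Starting from $\hat w_1 = \Theta(\gamma\eta)$, reaching the sublevel set $F\le 1/8\eta$ requires $\hat w \gtrsim$ roughly $\eta/\gamma$, since the oscillation keeps $|\tilde w_t|\approx\gamma\hat w_t$ until $\hat w$ reaches the scale $\eta/\gamma$ of Lemma~\ref{lem:oscillate_bounded_margin}. That takes $\log(1/\gamma^2)/\log(1+O(\gamma^2)) = \Omega(1/\gamma^2)$ steps.

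\textbf{Main obstacle.} The hard step is showing that the oscillation persists, i.e.\ that $|\tilde w_t|$ stays $\gtrsim \gamma \hat w_t$ rather than collapsing onto the axis. I would try to prove an invariant of the form $\tilde w_{t+1} \approx -\tilde w_t(1+O(\gamma^2))$ using the explicit two-point gradient, where one point has negligible loss and the other has loss $\approx e^{-a}$ with $a\approx \gamma\hat w - \sqrt{1-\gamma^2}|\tilde w|$.
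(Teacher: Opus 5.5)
Your two-instance structure with $\max(A,B)\ge(A+B)/2$ matches the paper's. However, Instance~1 as specified cannot produce the $n/\gamma$ term. With $n-1$ copies of $\vx_1=(\gamma,\sqrt{1-\gamma^2})$ and one $\vx_n=(\gamma,-\sqrt{1-\gamma^2})$, you have
$a_t^1+a_t^n=\langle\vw_t,\vx_1+\vx_n\rangle=2\gamma\hat{w}_t\ge 2\gamma\hat{w}_1=\eta\gamma^2$.
So whenever $\vx_n$ is misclassified, the majority points have margin at least $\eta\gamma^2$, and each has gradient weight at most $e^{-\eta\gamma^2}\le 1/(4n)$ under $\eta\ge\eta_1$.

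The ``balancing along the boundary'' you hypothesize therefore never happens. While $a_t^n\le 0$,
$a_{t+1}^n-a_t^n\ge \frac{\eta}{n}\cdot\frac{\|\vx_n\|^2}{\exp(a_t^n)+1}-\eta e^{-\eta\gamma^2}\ge \eta/(4n)$.
Since $a_1^n\ge-\eta/2$, the point $\vx_n$ is correctly classified within $O(n)$ steps, and your argument gives only $\tau\ge\Omega(n)$.

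The missing idea is to give the minority point norm $\Theta(\gamma)$; the paper takes $\vx_1=(\gamma,-\gamma)$.
\begin{itemize}
\item The first step still leaves $a_1^1\le-\eta\gamma/8$.
\item The self-interaction is now only $\|\vx_1\|^2=2\gamma^2$.
\item The majority contribution to $a_{t+1}^1-a_t^1$ is nonpositive, because $\langle\vx_j,\vx_1\rangle=\gamma^2-\gamma\sqrt{1-\gamma^2}<0$.
\end{itemize}
Hence $a_{t+1}^1-a_t^1\le 2\eta\gamma^2/n$, and tracking $a_t^1$ alone gives $n/(16\gamma)$ misclassified steps. The factor $1/\gamma$ is simply the ratio of a deficit $\Theta(\eta\gamma)$ to a per-step gain $O(\eta\gamma^2/n)$. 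No joint analysis of $(\hat{w}_t,\tilde{w}_t)$ is needed.

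Instance~2 is not yet an argument. Everything rests on the invariant $|\tilde{w}_t|\approx\gamma\hat{w}_t$, which you leave open, and the surrounding steps presuppose it.
\begin{itemize}
\item Lemma~\ref{lem:oscillate_bounded_margin} is a necessary condition for an oscillation, not a barrier to stability. The set $\{F\le 1/8\eta\}$ contains $\tilde{\lambda}\vw_*$ with $\tilde{\lambda}=\log(8\eta)/\gamma\ll\eta/\gamma$. So nothing stops GD from landing in it long before $\hat{w}_t$ reaches $\eta/\gamma$; you would have to show that, for your specific dataset, every jump lands just outside it.
\item The growth bound $\hat{w}_{t+1}\le(1+O(\gamma^2))\hat{w}_t$ is not automatic either. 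At any step where one of your unit-norm points is misclassified, $\hat{w}_{t+1}-\hat{w}_t\ge\eta\gamma G(\vw_t)\ge\eta\gamma/(2n)$, which is a constant fraction of $\hat{w}_1=\eta\gamma/2$ for bounded $n$.
\end{itemize}

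The paper avoids oscillations altogether. It places $k=\lceil n/2\rceil$ points at $\vv_1=(\gamma,-\delta)$ and the rest at $\vv_2=(\gamma,\sqrt{1-\gamma^2})$. It chooses $\delta=\Theta(\gamma^2)$ as a root of a quadratic so that $\langle\vw_1,\vv_1\rangle=\lambda-1$, where $\ell(\lambda)=2n/(k\eta)$ is the threshold below which $F>2/\eta$.
\begin{itemize}
\item The $\vv_2$ points then have negligible loss, and since $\langle\vv_1,\vv_2\rangle\le 0$ they only push $\langle\vw_t,\vv_1\rangle$ down.
\item The margin $\langle\vw_t,\vv_1\rangle$ increases monotonically, but by at most $\frac{k\eta}{n}\cdot\frac{\|\vv_1\|^2}{\exp(\lambda-1)+1}\le 59\gamma^2$ per step.
\item Closing the gap of $1$ therefore takes at least $1/(59\gamma^2)$ steps.
\end{itemize}
In both of the paper's instances the slowdown has the source your constructions lack: the points carrying the loss have norm $\Theta(\gamma)$. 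Their self-interaction $\Theta(\gamma^2)$ throttles how fast their own margin can grow.
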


The lower bound of \citet{zhang2025gradient} shows that any first-order optimization
algorithm for minimizing the logistic loss requires $\Omega(\min(\log n, 1/\gamma^2))$
iterations to find a linear separator. Theorem \ref{thm:lower_bound} implies that GD
requires $\Omega(n/\gamma)$ iterations for the same task\footnote{Although Theorem
\ref{thm:lower_bound} is stated in terms of finding a point with small loss, the hard
dataset in Lemma \ref{lem:classify_lb} provides a lower bound for finding a linear
separator.}, so GD is suboptimal among first-order algorithms by a factor of $n \gamma$
for large $n$.

A similar suboptimality conclusion was reached by \citet{tyurin2025logistic} for the
perceptron algorithm, although they did not consider the dependence on $\gamma$.
Specifically, the lower bound of Tyurin shows that the number of steps required by the
perceptron algorithm (or GD with $\eta \rightarrow \infty$) to find a linear separator
is $\Omega(n)$. This is worse than $\mathcal{O}(1/\gamma^2)$ required by adaptive GD to
find a separator \citep{zhang2025gradient} when $n \gg 1/\gamma^2$. We show an improved
lower bound of $\Omega((1+n\gamma)/\gamma^2)$ for large stepsize GD, implying that GD is suboptimal
when $n \gg 1/\gamma$.

Theorem \ref{thm:lower_bound} applies for sufficiently large stepsizes, and the
threshold $\eta_1$ matches that of our upper bound up to constant factors. Both hard
datasets have $d=2$, however they can be trivially generalized to any $d \geq 2$ by
embedding them into a $2$-dimensional subspace of $\mathbb{R}^d$. The conditions $n \geq
2$ and $\gamma \leq 1/6$ are to some extent unavoidable: If $n=1$, then a sufficiently
large $\eta$ will make the loss arbitrarily small in one GD step. Similarly, if $\gamma
\geq 1/\sqrt{2}$ then for every pair of data points $\vx_i, \vx_j$,
\begin{align}
    \langle \vx_i, \vx_j \rangle &= \langle \vx_i, \vw_* \rangle \langle \vx_j, \vw_* \rangle + \langle (\mI - \vw_* \vw_*^\top) \vx_i, (\mI - \vw_* \vw_*^\top) \vx_j \rangle \\
    &\geq \gamma^2 - \left\| (\mI - \vw_* \vw_*^\top) \vx_i \right\| \left\| (\mI - \vw_* \vw_*^\top) \vx_j \right\| \\
    &\geq \gamma^2 - (1 - \gamma^2) = 2 \gamma^2 - 1 \geq 0,
\end{align}
and this pairwise alignment among the dataset again implies that the loss can be made
arbitrarily small in a single GD step.

\paragraph{Proof Sketch for Theorem \ref{thm:lower_bound}} Here we informally describe
the construction of two hard datasets corresponding to the two terms of the lower bound
from Theorem \ref{thm:lower_bound}. The two datasets yield $\tau \geq \Omega(n/\gamma)$
and $\tau \geq \Omega(1/\gamma^2)$, respectively, and Theorem \ref{thm:lower_bound}
follows from $\tau \geq \Omega(\max(n/\gamma, 1/\gamma^2)) \geq \Omega(n/\gamma +
1/\gamma^2)$. The complete proof can be found in Appendix \ref{app:lower_bound}.

\begin{restatable}{lemma}{lemclassifylb}[Time until Classification] \label{lem:classify_lb}
Suppose that $\gamma \leq 1/6$, $n \geq 6$, and $\eta \geq \eta_1$. Then there exists
some dataset satisfying Assumption \ref{ass:dataset}, such that for every $t \leq n/(16
\gamma)$, there exists $i \in [n]$ with $\langle \vw_t, \vx_i \rangle < 0$.
\end{restatable}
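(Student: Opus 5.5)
The plan is to adapt the perceptron lower bound of \citet{tyurin2025logistic} and track the margin-dependence explicitly. With $\eta \geq \eta_1$ every misclassified point receives a gradient weight $1/(\exp(a_t^i)+1) \geq 1/2$. Every point classified with margin $a_t^i \gtrsim \log(8\eta)$ receives weight $\leq 1/8\eta$, which is negligible. So GD behaves essentially like a perceptron with per-point step $\eta/n$.

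\textbf{Construction.} I take $d=2$ with $\vw_* = \ve_1$, and put every point at horizontal coordinate exactly $\gamma$ with vertical coordinates $\tilde{x}_i$ of norm $\approx 1$: $\vx_i = (\gamma, \tilde x_i)$.
\begin{itemize}[label=$\circ$]
\item One heavy ``down'' point $\vx_1 = (\gamma, -\sqrt{1-\gamma^2})$.
\item The remaining $n-1$ points lie on the upper side, $\vx_j = (\gamma, c_j)$ with $c_j>0$, and form a graded family (e.g.\ geometrically spaced slopes).
\end{itemize}
The feasible cone $\{\vw : \langle \vw,\vx_i\rangle>0\ \forall i\}$ then consists of the $\vw=(\hat w,\tilde w)$ with
\[
-\gamma \hat w/\max_j c_j < \tilde w < \gamma \hat w/\sqrt{1-\gamma^2}.
\]
This is a wedge of angular width $\Theta(\gamma)$, and one must check that the margin really equals $\gamma$, i.e.\ that $\ve_1$ is the max-margin direction. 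The graded $c_j$ are chosen so that when $\tilde w_t$ is negative, only about one upper point is misclassified at a time.

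\textbf{Dynamics.} I then track $(\hat w_t,\tilde w_t)$ in two regimes.
\begin{itemize}[label=$\circ$]
\item \emph{When $\tilde w_t>0$ is too large,} $\vx_1$ is misclassified and pushes $\tilde w$ down by roughly $\eta/(2n)$ times its weight. Since that weight grows as $a_t^1$ becomes very negative, the correction overshoots into the region $\tilde w<0$ once the wedge width $\gamma\hat w_t$ is small compared with the jump.
\item \emph{When $\tilde w_t<0$,} an upper point is misclassified and pushes back.
\end{itemize}
The first invariant to prove, by induction on $t \leq n/(16\gamma)$, is
\[
\hat w_t \leq \hat w_1 + \frac{2\eta\gamma t}{n} \qquad\text{with } \hat w_1 = \Theta(\eta\gamma).
\]
It follows from \Eqref{eq:margin_pot} with equality, since all $\langle\vx_i,\vw_*\rangle=\gamma$, together with the fact that at most $O(1)$ points carry non-negligible weight each step. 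Hence for $t\le n/(16\gamma)$ we get $\hat w_t \le O(\eta\gamma)$, and the wedge has width $O(\eta\gamma^2)$ in the $\tilde w$-coordinate.

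The second invariant is that $|\tilde w_t - \tilde w_{t+1}|$ is at least the wedge width $\gamma\hat w_t$ plus the incoming violation, so the iterate never lands inside the wedge. The mechanism is that a point violated by amount $a<0$ gets weight about $1$, and the update $\eta\vx_i/n$ changes $a_t^i$ by about $\eta/n \geq 1$.

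\textbf{Main obstacle.} The hard step is this ``no landing'' invariant. Each step's overshoot depends on the current violation $a_t^i$, and that violation is set by the previous step, so I need a potential that is preserved across the oscillation. My first attempt will be to choose the $c_j$ so that the update is exactly a fixed reflection: for instance, make all upper points collinear-symmetric to $\vx_1$ but at slightly different slopes, so that the map $\tilde w_t\mapsto\tilde w_{t+1}$ is a contraction toward the wedge of rate $1-O(\gamma/n)$ per step. Then $\Omega(n/\gamma)$ steps are needed before the violation shrinks below the wedge width. If exact control of the sigmoid weights is too messy, I will fall back to sandwiching them between $1/2$ (misclassified) and $1/8\eta$ (classified with margin $\log 8\eta$). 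The threshold $\eta_1 \ge 32\gamma^{-2}\log(3/\gamma)$ is what guarantees that correctly classified points are in the latter regime, since it gives $\gamma\hat w_t \ge \log(8\eta)$.
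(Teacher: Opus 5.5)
\paragraph{Gap in the construction.}
There is a genuine gap, and it lies in the construction itself, not only in the ``no landing'' step you flagged. If all points have norm $\Theta(1)$ (in particular $\|\vx_1\|=1$), then for $\eta\ge\eta_0$ the classification time is provably $O(n+\log(1/\gamma)/\gamma^2)$. That is below $n/(16\gamma)$ once $n\gg\log(1/\gamma)/\gamma$, so no choice of the slopes $c_j$ can work. The paper's upper-bound machinery shows this in three steps.
\begin{itemize}
\item While $\tilde w_t$ keeps its sign, a misclassified $i\in D_t^-$ gains $a_{t+1}^i-a_t^i\ge\eta\|\vx_i\|^2/(4n)$ per step (Lemma~\ref{lem:a_recursion}). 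It starts from $a_t^i\ge-\eta\|\vx_i\|$ (Lemma~\ref{lem:complement_ub}), so it is corrected within $4n/\|\vx_i\|\approx 4n$ steps.
\item Sign changes of $\tilde w_t$ before $\tau$ force $\hat w_t$ to grow by a factor $1+\gamma^2/2$ at every earlier step (Lemma~\ref{lem:in_between}). They cannot happen once $\hat w_t>\eta/\gamma$ (Lemma~\ref{lem:oscillate_bounded_margin}). Hence all of them occur within the first $1+3\log(2/\gamma^2)/\gamma^2$ steps.
\item Rerunning Lemma~\ref{lem:time_to_oscillate} with $\|\vx_i\|\approx 1$ in place of $\|\vx_i\|\ge\gamma$ shows the following for your dataset. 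The first time $t_c$ at which all $\langle\vw_t,\vx_i\rangle\ge 0$ satisfies $t_c\le\tau\le O(n+\log(1/\gamma)/\gamma^2)$.
\end{itemize}
In particular, the contraction at rate $1-O(\gamma/n)$ you hope to engineer cannot exist, since it would sustain bouncing for $\widetilde{\Omega}(n/\gamma)$ steps.

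Your own heuristics already show the failure when $n\gtrsim 2/\gamma^2$. While $\tilde w_t>0$, only $\vx_1$ carries non-negligible weight, so the step has length at most about $\eta/n$. Meanwhile the band $|\tilde w|\le\gamma\hat w_t/2$ lies inside the correctly classified region (Lemma~\ref{lem:small_complement}), and its width is at least $\eta\gamma^2/2$ because $\hat w_t\ge\eta\gamma/2$. So no overshoot is possible, and $\vx_1$ is fixed in $O(n)$ steps. That recovers the $\Omega(n)$ perceptron bound of \citet{tyurin2025logistic}, not $\Omega(n/\gamma)$.

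\paragraph{Missing idea.}
The factor $1/\gamma$ comes from the \emph{norm} of the lagging point, not from oscillation. The paper takes $\vx_1=(\gamma,-\gamma)$, of norm $\sqrt2\gamma$, and places the other $n-1$ points at $(\gamma,\sqrt{1-\gamma^2})$.
\begin{itemize}
\item The exact first iterate $\vw_1=\frac{\eta}{2n}\sum_i\vx_i$ gives $a_1^1\le-\eta\gamma/8$.
\item Every later step raises $a_t^1$ by at most $\eta\|\vx_1\|^2/n=2\eta\gamma^2/n$. This is because $\langle\vx_j,\vx_1\rangle=\gamma^2-\gamma\sqrt{1-\gamma^2}<0$ for $j\ge 2$, so the other points can only push $a_t^1$ down.
\end{itemize}
Dividing the initial violation by the per-step gain, $\vx_1$ stays misclassified for at least $n/(16\gamma)$ steps. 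This is a purely one-sided bound: the iterate drifts slowly toward the feasible wedge along $\vx_1$. No oscillation potential and no two-sided control of the sigmoid weights of the other points is needed.
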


We define the hard dataset as
\begin{align}
    \vx_i &= \begin{cases}
        (\gamma, -\gamma) & i = 1 \\
        (\gamma, \sqrt{1-\gamma^2}) & i \in \{2, \ldots, n\}
    \end{cases}.
\end{align}
Recall that the GD update can be decomposed into contributions in the direction of each
$\vx_i$:
\begin{equation}
    \vw_{t+1} = \vw_t - \frac{\eta}{n} \sum_{i=1}^n \frac{1}{\exp(\langle \vw_t, \vx_i \rangle) + 1} \vx_i.
\end{equation}
The trajectory of GD on this dataset is easy to imagine: $\vx_2, \ldots, \vx_n$ are in
agreement and have norm $1$, so together their contribution to the first update
overpowers that of $\vx_1$ (whose norm is only $\mathcal{O}(\gamma)$). After the first
step, all data points except $\vx_1$ have very low loss, so the gradient is dominated by
$\vx_1$; until $\vx_1$ is correctly classified, the GD trajectory approximately moves on
a line from $\vw_1$ in the direction of $\vx_1$. Using this, we get a recurrent upper
bound on $\langle \vw_t, \vx_i \rangle$, and we can lower bound the time until $\langle
\vw_t, \vx_i \rangle \geq 0$. Note that this construction is a slight generalization of
a lower bound for the perceptron algorithm from \citet{tyurin2025logistic}.

\begin{restatable}{lemma}{lemstablelb}[Time until Stability] \label{lem:stable_lb}
Suppose that $\gamma \leq 1/6, n \geq 2$, and $\eta \geq \eta_1$. Then there exists some
dataset satisfying Assumption \ref{ass:dataset} such that $F(\vw_t) > 2/\eta$ for all
$t \leq 1 + 1/(59 \gamma^2)$.
\end{restatable}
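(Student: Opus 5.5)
We will use an explicit two-dimensional dataset in which GD keeps oscillating across the $\vw_*$ axis. The target is an invariant showing that the ratio $r_t := |\tilde w_t|/\hat w_t$ can shrink by at most a factor $1-\mathcal{O}(\gamma^2)$ per step, while $F(\vw_t)\le 2/\eta$ forces $r_t$ below a threshold of order $\gamma$. The symmetric choice $\vx_1=(\gamma,\sqrt{1-\gamma^2})$, $\vx_2=(\gamma,-\sqrt{1-\gamma^2})$ does not work. There $\vw_1=(\eta\gamma/2,0)$ lies on the $\vw_*$ axis, and since $\eta\gamma^2/2\gg\log\eta$ the loss is already below $2/\eta$ after one step. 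So the construction must break the symmetry in the orthogonal direction. I would take $\vx_1=(\gamma,\sqrt{1-\gamma^2})$ together with $n-1$ copies of a point $(\gamma,-\sqrt{1-\gamma^2})$, or vary the norms if needed. Then $\vw_*=\ve_1$, the margin is exactly $\gamma$, and $\vw_1=-\frac{\eta}{2n}\sum_i\vx_i$ has $|\tilde w_1|=\Theta(\eta)$ and $\hat w_1=\eta\gamma/2$, so $r_1=\Theta(1/\gamma)$ for $n\ge 3$. For $n=2$ I would instead use $\vx_2=(\gamma,-\beta)$ with $\beta$ chosen so that $r_1\gtrsim 1$.

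\textbf{Step 1 (what small loss requires).} If $F(\vw_t)\le 2/\eta$, then every point satisfies $\ell(a_t^i)\le 2n/\eta$, hence $a_t^i\ge \log(\eta/(2n))-\mathcal{O}(1)\ge 0$, using $\eta\ge\eta_1\ge n$ (up to adjusting constants). For the two directions $\vx^{\pm}=(\gamma,\pm\sqrt{1-\gamma^2})$, requiring $a_t^{\pm}\ge 0$ gives $|\tilde w_t|\sqrt{1-\gamma^2}\le\gamma\hat w_t$, i.e. $r_t\le 2\gamma$. It therefore suffices to show that $r_t>2\gamma$ for all $t\le 1+1/(59\gamma^2)$.

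\textbf{Step 2 (one-step dynamics of $r_t$).} While $r_t>2\gamma$, the point on the misclassified side has $a_t^i\le-\gamma\hat w_t\cdot\Omega(1)$, which is hugely negative since $\hat w_t\ge\eta\gamma/2$ by Lemma \ref{lem:max_margin}. The point on the other side has $a_t^j\ge\gamma\hat w_t$, so its gradient weight is at most $e^{-\gamma\hat w_t}$, which is negligible. The update is therefore essentially $\frac{\eta c_i}{n}\vx_i$ with $c_i\approx 1$. Writing $\delta_t=\frac{\eta c_i}{n}$, this gives $\hat w_{t+1}=\hat w_t+\gamma\delta_t$ and $\tilde w_{t+1}=\tilde w_t\pm\sqrt{1-\gamma^2}\,\delta_t$, moving toward and possibly across zero. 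The jump $\delta_t$ is fixed, about $\eta/n$ or $\eta/(2n)$ depending on multiplicity, whereas $|\tilde w_t|$ can be of order $\hat w_t\ge\eta\gamma/2$. I would split into two cases according to whether $\tilde w$ crosses zero. If it does not cross, $|\tilde w|$ decreases by $\sqrt{1-\gamma^2}\delta_t$ and $\hat w$ increases by $\gamma\delta_t$. Using $\hat w$ as the potential, I would then show that the ratio of the signed quantities $\sqrt{1-\gamma^2}|\tilde w_t|-\gamma\hat w_t$, which is proportional to $-a_t^i$, behaves like a slowly drifting walk. If it does cross, the landing point $|\tilde w_{t+1}|=\sqrt{1-\gamma^2}\delta_t-|\tilde w_t|$ must be controlled.

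\textbf{Main obstacle.} The hard step is establishing a clean per-step lower bound such as $r_{t+1}\ge(1-C\gamma^2)r_t$, or $\hat w_{t+1}\le(1+C\gamma^2)\hat w_t$ together with $|\tilde w_t|$ staying $\Omega(\hat w_t)$, uniformly in both cases, since landing points after a crossing can in principle be arbitrary. My first attempt would be to choose $\beta$ (the $n=2$ case) or the copy multiplicities so that the jump sizes $\delta$ on the two sides are commensurate with the geometry. That should make the orbit of $\tilde w_t$ essentially periodic, bouncing between $\pm\Theta(\eta)$, while $\hat w_t$ grows by only $\mathcal{O}(\gamma\eta/n)$ per step. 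Then $\hat w_t\le\eta\gamma/2+\mathcal{O}(\gamma\eta t)$, and $r_t>2\gamma$ persists while $\gamma\hat w_t\lesssim|\tilde w_t|$. If this yields only $\Omega(1/\gamma)$ steps, I would fall back on tracking the $\gamma^2$-scale drift of the landing offset across oscillations. That drift is the mechanism by which Lemma \ref{lem:oscillation_movement} is tight, and it should give the required $\Omega(1/\gamma^2)$ steps, which Step 1 then converts into the bound $t\le 1+1/(59\gamma^2)$.
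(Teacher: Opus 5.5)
There is a genuine gap. The step you call the main obstacle is where the whole proof lives, and it is not supplied. For the dataset you propose, the claim is in fact false for some $n$.

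Write $s=\sqrt{1-\gamma^2}$. While $r_t>2\gamma$, the active point has margin below $-(2s-1)\gamma\hat w_t\le-(2s-1)\eta\gamma^2/2$, and the other points have margin at least $\eta\gamma^2/2$. So the gradient weights are $1$ and $0$ up to $e^{-\Omega(\eta\gamma^2)}$ errors, and $\tilde w_t$ just performs a rotation by $\eta s/n$ on a circle of length $\eta s$. Whether this orbit avoids the cone $|\tilde w|\le 2\gamma\hat w$ is then an arithmetic property of $n$.

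For $n=4$ you get $\vw_1=(\eta\gamma/2,\,-\eta s/4)$ and $a_1^1\approx-\eta/4$, so $\vw_2\approx\vw_1+\frac{\eta}{4}\vx_1=(3\eta\gamma/4,\,0)$. Every margin is then about $3\eta\gamma^2/4\ge 24\log(3/\gamma)$, and $F(\vw_2)\ll 2/\eta$. Yet $t=2\le 1+1/(59\gamma^2)$ whenever $\gamma^2\le 1/59$. More generally, for every even $n$ with $4\le n\le 2+2/(59\gamma^2)$, the orbit lands at $\tilde w\approx 0$ at time $t=n/2$, inside the window.

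For odd $n$ the idealized orbit never comes closer to $0$ than $\eta s/(2n)$, so the ping-pong mechanism is not hopeless. But it has to be tuned to $n$, and your $n=2$ choice of $\beta$ is left unspecified, whereas the lemma is claimed for every $n\ge 2$ with an explicit constant.

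A smaller point: in Step 1 you cannot ``adjust constants'' in $\eta_1$. At $\eta=n$, the condition $F\le 2/\eta$ only gives $\ell(a_t^i)\le 2$, not $a_t^i\ge 0$. This is repairable, because $\gamma\hat w_t\ge 16\log(3/\gamma)$ absorbs the $\mathcal{O}(1)$ slack.

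The missing idea is that no oscillation is needed at all. The paper takes $k=\lceil n/2\rceil$ copies of $\vv_1=(\gamma,-\delta)$ and $n-k$ copies of $\vv_2=(\gamma,\sqrt{1-\gamma^2})$. It solves a quadratic for $\delta=\delta_*\in[\frac43\gamma^2,7\gamma^2]$ so that $\langle\vw_1,\vv_1\rangle=\lambda-1$, where $\ell(\lambda)=2n/(k\eta)$. Then $\vw_1$ sits just outside the sublevel set, and $\langle\vw_t,\vv_1\rangle<\lambda$ forces $F(\vw_t)>2/\eta$.

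The paper next shows that $\langle\vw_t,\vv_1\rangle$ is increasing before $t_s$, because the $\vv_2$ points have margin at least $\eta/24$ and hence negligible weight. Three facts then bound each step's gain in $\langle\vw_t,\vv_1\rangle$ by at most $59\gamma^2$:
\begin{itemize}
\item $\langle\vv_1,\vv_2\rangle\le 0$;
\item $\|\vv_1\|^2\le 8\gamma^2$;
\item the weight satisfies $1/(e^{\langle\vw_t,\vv_1\rangle}+1)\le e\,(\exp(2n/k\eta)-1)$.
\end{itemize}
Crossing the remaining unit gap therefore takes at least $1/(59\gamma^2)$ steps.

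The slowness comes from an $\mathcal{O}(1/\eta)$ gradient weight acting on vectors of norm $\mathcal{O}(\gamma)$ near the boundary of the sublevel set, not from sustained oscillation. That is why no control of landing points is needed and the construction works uniformly for every $n\ge 2$.
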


Denoting $k = \lceil n/2 \rceil$, we define the hard dataset as
\begin{equation}
    \vx_i = \begin{cases}
        (\gamma, -\delta) & i \leq k \\
        (\gamma, \sqrt{1-\gamma^2}) & i > k
    \end{cases},
\end{equation}
where $\delta \in [0, \sqrt{1-\gamma^2}]$. The idea is to choose $\delta$ small enough
that after the first step, the loss for $\vx_i$ is negligibly small for $i > k$, but
large enough that the loss for $\vx_i$ with $i \leq k$ is only slightly larger than
$2n/k\eta$. This would imply $F(\vw_1)$ is slightly larger than $2/\eta$; so after one
step, the GD trajectory is close to the sublevel set $F(\vw) \leq 2/\eta$, but the loss
(and the gradient norm) are small enough that it takes time to actually enter that
sublevel set. Indeed, for $i > k$, $\vx_i$ contributes negligibly to each update from
$t=1$ to $t=\tau$, so for these steps the updates to $\vw_t$ are dominated by $\vx_i$
with $i \leq k$. And, since the loss for $\vx_i$ with $i \leq k$ is quite small (only
slightly larger than $2n/k\eta$), the update size $\eta \|\nabla F(\vw_t)\|$ can be
bounded. It then requires $1/\gamma^2$ iterations until the loss for $\vx_i$ with $i
\leq k$ is smaller than $2n/k\eta$.

\section{Possible Extension to Higher Dimensions}
The most important limitation of our work is the restriction to two dimensions for the
upper bound. In this section, we discuss whether our results could be extended to higher
dimensions, and some challenges of generalizing our oscillation-based analysis for $d
\geq 2$.

\begin{figure}[t]
\centering
\subfigure[Maximum transition time as a function of $\eta$.]{
\includegraphics[width=0.48\linewidth]{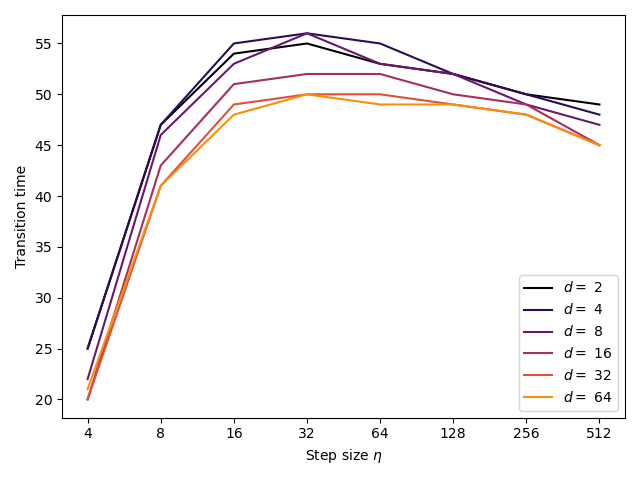}
\label{fig:tau_vs_eta}
}
\subfigure[A disconnected set of $\vw$.]{
\includegraphics[width=0.48\linewidth]{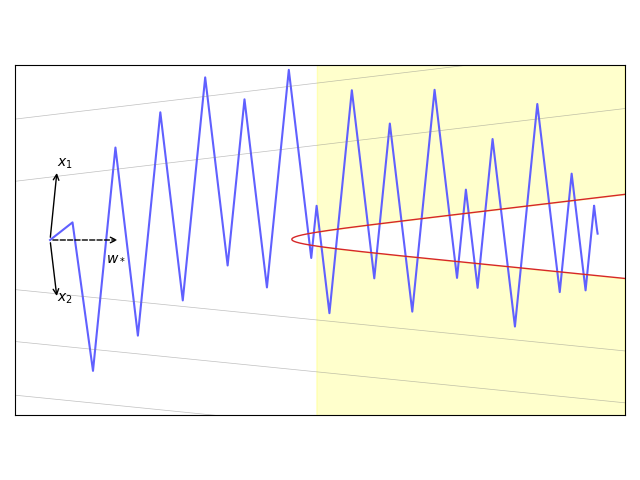}
\label{fig:disconnected}
}
\caption{\textbf{Left:} Numerical evidence suggesting that our results might hold in
higher dimensions. Over a random search of datasets, the worst-case transition time
$\tau$ does not increase past $60$ even as the learning rate $\eta$ increases
exponentially. \textbf{Right:} An illustration of a property helping our two-dimensional
analysis: the set $\{ \vw \in \mathbb{R}^d: \langle \vw, \vw_* \rangle \geq \lambda
\text{ and } \|(\mI - \vw_* \vw_*^\top) \vw\| \geq \gamma \langle \vw, \vw_* \rangle /
2\}$ is disconnected when $d=2$ (shown in yellow), which implies that the GD trajectory
must make large jumps to traverse between connected components. This set is connected
for $d > 2$.}
\label{fig:higher_d}
\end{figure}

First, we present numerical evidence suggesting that our conclusions when $d=2$ might
also hold for $d > 2$. We can see from Figure \ref{fig:tau_vs_eta} that even as $\eta$
grows exponentially, the stable transition time $\tau$ appears bounded over a random
search of many datasets with $d \geq 2$. For a given $d$, we generate a dataset of $n =
d$ samples as follows: for each $i \in [n]$ we set the max-margin component $\langle
\vx_i, \vw_* \rangle$ equal to $\gamma$, then sample the orthogonal complement
$\tilde{\vx}_i = (\mI - \vw_* \vw_*^\top) \vx_i$ uniformly over a $d-1$ dimensional ball
centered at $\vzero$ with radius $\sqrt{1-\gamma^2}$. This procedure enforces
separability with a margin of $\gamma$ and $\|\vx_i\| \leq 1$. For every $d \in \{2^1,
\ldots, 2^7\}$ and every $\eta \in \{2^2, \ldots, 2^9\}$, we generate 4096 datasets, and
compute the maximum time (over datasets) that it takes to achieve $F(\vw_t) \leq
2/\eta$. In Figure \ref{fig:tau_vs_eta}, for each $d$ we plot the worst-case $\tau$ as a
function of $\eta$. While this random search is certainly not exhaustive (especially in
very high dimensions), these results suggest that our conclusion that $\tau$ is bounded
independently of $\eta$ might extend to higher dimensions.

Indeed, some parts of our analysis (e.g. Lemma \ref{lem:max_margin}) do not rely on low
dimension. However, there are some difficulties in characterizing the oscillatory
behavior of GD's trajectory when the orthogonal component to $\vw_*$ is not a scalar,
which we discuss below. For datasets with $d \geq 2$, we denote the orthogonal component
as $\tilde{\vw}_t = (\mI - \vw_* \vw_*^\top) \vw_t$. We keep the definition $\hat{w}_t =
\langle \vw_t, \vw_* \rangle$.

First, we need a definition of oscillation that does not rely on $d$. One possibility is
to replace the condition $\tilde{w}_{t+1} \tilde{w}_t < 0$ with $\langle
\tilde{\vw}_{t+1}, \tilde{\vw}_t \rangle < 0$. With this generalization, many of our key
lemmas extend for $d \geq 2$, such as Lemma \ref{lem:oscillation_movement}, which says
$\hat{w}_t$ increases by a factor of $1 + \Omega(\gamma^2)$ when an oscillation happens,
and Lemma \ref{lem:oscillate_bounded_margin}, which says oscillations can only happen
when $\hat{w}_t \leq \eta/\gamma$.

The main obstacle is extending Lemma \ref{lem:time_to_oscillate}, that is, to show that
$\langle \tilde{\vw}_{t+1}, \tilde{\vw}_t \rangle < 0$ must happen frequently along GD
trajectories that avoid the sublevel set where $F(\vw) \leq 2/\eta$. In our
two-dimensional analysis, we showed this by implicitly leveraging a nice topological
property: for $1 \leq t < \tau$, we know that $\hat{w}_t \geq \lambda$ and
$|\tilde{w}_t| \geq \frac{1}{2} \gamma \hat{w}_t$, and the set of such $\vw$ is
\textit{disconnected} when $d=2$ (see Figure \ref{fig:disconnected}). By showing that
the trajectory cannot stay in one connected component for too long, we know that the
trajectory must ``jump" between components, which we call an oscillation. However, in
higher dimensions, the set of $\vw$ satisfying $\langle \vw, \vw_* \rangle \geq \lambda$
and $\|(\mI - \vw_* \vw_*^\top) \vw\| \geq \frac{1}{2} \gamma \langle \vw, \vw_*
\rangle$ is \textit{connected} (it is a half-space minus a cone), so it is a priori
possible for the trajectory to avoid the sublevel set without making any large jumps.
Due to this difficulty, it is unclear whether our oscillation-based analysis can
directly generalize for $d > 2$, but based on the numerical evidence in Figure
\ref{fig:tau_vs_eta}, we conjecture that $\tau \leq \widetilde{\mathcal{O}}(n/\gamma +
1/\gamma^2)$ still holds for any $d \geq 2$. We leave this problem of tightly
characterizing the transition time in general dimension open for future work.

\acks{We thank Blake Woodworth for many helpful conversations in the early stages of
this project. Michael Crawshaw is supported by the Doctoral Research Scholarship of
George Mason University. Mingrui Liu is supported by NSF grants \#2436217, \#2425687.}

\bibliography{references}

\newpage
\appendix
\section{Deferred Proofs from Section \ref{sec:upper_bound}} \label{app:upper_bound}

\lemmaxmargin*

\begin{proof}
For any $t \geq 0$,
\begin{align}
    \hat{w}_{t+1} - \hat{w}_t &= \langle \vw_{t+1} - \vw_t, \vw_* \rangle \\
    &= -\eta \left\langle \nabla F(\vw_t), \vw_* \right\rangle \\
    &= \frac{\eta}{n} \sum_{i=1}^n \frac{1}{\exp(a_t^i) + 1} \langle \vx_i, \vw_* \rangle \\
    &\geq \frac{\eta \gamma}{n} \sum_{i=1}^n \frac{1}{\exp(a_t^i) + 1} \label{eq:max_margin_inter_2} \\
    &> 0,
\end{align}
so $\hat{w}_t$ is increasing.

For $t=1$, we can bound $\hat{w}_t$ directly as
\begin{align}
    \hat{w}_1 &= \langle \vw_1, \vw_* \rangle \\
    &= \eta \langle -\nabla F(\vw_0), \vw_* \rangle \\
    &= \frac{\eta}{n} \sum_{i=1}^n \frac{\langle \vx_i, \vw_* \rangle}{\exp(a_0^i) + 1} \\
    &= \frac{\eta}{2n} \sum_{i=1}^n \langle \vx_i, \vw_* \rangle \\
    &\geq \frac{\eta \gamma}{2}.
\end{align}

For $1 < t < \tau$, we know $F(\vw_t) \geq 1/8\eta$. Therefore, by
\Eqref{eq:max_margin_inter_2} and Lemma \ref{lem:pot_lb} (together with the condition
$\eta \geq \eta_0 \geq n$),
\begin{equation}
    \hat{w}_{t+1} - \hat{w}_t \geq \eta \gamma G(\vw_t) \geq \eta \gamma \frac{1}{16 \eta} = \frac{\gamma}{16}.
\end{equation}
Unrolling back to $t=1$ yields the result.

To get $\hat{w}_t \geq 8 \tilde{\lambda}$, it suffices that $\eta \gamma/2 \geq 8
\tilde{\lambda}$, since $\hat{w}_t \geq \hat{w}_1 \geq \eta \gamma/2$. So we want
\begin{equation}
    \frac{8 \log (8\eta)}{\gamma} \leq \frac{1}{2} \eta \gamma,
\end{equation}
or equivalently
\begin{equation}
    \log (8 \eta) \leq \frac{\eta \gamma^2}{16}.
\end{equation}
By concavity of $\log$,
\begin{align}
    \log (8\eta) \leq \log \left( \frac{256}{\gamma^2} \right) + \frac{\gamma^2}{256} \left( 8\eta - \frac{256}{\gamma^2} \right) \leq \log \left( \frac{256}{\gamma^2} \right) + \frac{\eta \gamma^2}{32} \leq \frac{\eta \gamma^2}{16},
\end{align}
where the last inequality uses $\eta \geq \eta_0 \geq (32/\gamma^2) \log(128/\gamma^2)$.
\end{proof}

\begin{lemma} \label{lem:small_complement}
For $t \geq 1$, if $\eta \geq \eta_0$ and
\begin{equation}
    |\tilde{w}_t| \leq \frac{1}{2} \gamma \hat{w}_t,
\end{equation}
then $F(\vw_t) \leq 1/8\eta$.
\end{lemma}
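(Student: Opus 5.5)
We bound each data point's loss separately, using the lower bound on $a_t^i$ in terms of $\hat{w}_t$ and the assumed smallness of $|\tilde{w}_t|$. Fix $t \geq 1$ with $|\tilde{w}_t| \leq \frac{1}{2}\gamma \hat{w}_t$. For every $i \in [n]$ we have $|\tilde{x}_i| \leq \|\vx_i\| \leq 1$, so
\begin{equation}
    a_t^i \geq \gamma \hat{w}_t + \tilde{w}_t \tilde{x}_i \geq \gamma \hat{w}_t - |\tilde{w}_t| \geq \frac{1}{2}\gamma \hat{w}_t .
\end{equation}
This step uses only the decomposition $a_t^i = \hat{w}_t\langle \vw_*, \vx_i\rangle + \tilde{w}_t \tilde{x}_i$ together with $\langle \vw_*, \vx_i \rangle \geq \gamma$, both stated in the notation paragraph.

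Next we invoke Lemma \ref{lem:max_margin}: since $t \geq 1$ and $\eta \geq \eta_0$, it gives $\hat{w}_t \geq 8\tilde{\lambda} = 8\log(8\eta)/\gamma$. Hence
\begin{equation}
    a_t^i \geq 4\log(8\eta) \geq \log(8\eta)
\end{equation}
for every $i$. Because $\ell$ is decreasing and $\ell(z) \leq e^{-z}$, we get $\ell(a_t^i) \leq 1/(8\eta)$ for all $i$. Averaging over $i$ yields $F(\vw_t) \leq 1/8\eta$. One could equally compare against $\ell(\gamma\tilde{\lambda}) \leq 1/8\eta$, which the paper notes from the definition of $\tilde{\lambda}$.

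No step is a real obstacle: the only nontrivial input is $\hat{w}_t \geq 8\tilde{\lambda}$ from Lemma \ref{lem:max_margin}, which is where $\eta \geq \eta_0$ and $t \geq 1$ are needed. The factor-$8$ slack means the constant $\frac{1}{2}$ in the hypothesis is far from tight, so the estimate $|\tilde{x}_i| \leq 1$ can be used without any refinement.
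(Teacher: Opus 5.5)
Your proposal is correct and follows essentially the same route as the paper: the pointwise bound $a_t^i \geq \gamma \hat{w}_t - |\tilde{w}_t| \geq \frac{1}{2}\gamma \hat{w}_t$, combined with $\hat{w}_t \geq 8\tilde{\lambda}$ from Lemma~\ref{lem:max_margin}. The only cosmetic difference is at the last step: you use $\ell(z) \leq e^{-z}$ at $z = \log(8\eta)$, whereas the paper passes through $\gamma\tilde{\lambda} \geq \gamma\lambda$ and uses $\ell(\gamma\lambda) = 1/8\eta$.
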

  
\begin{proof} 
Recall that $\vw_t = \hat{w}_t \vw_* + \tilde{w}_t \vv_*$. So for each $i \in [n]$,
\begin{align} 
    a_t^i &= \langle \vw_t, \vx_i \rangle \\
    &= \hat{w}_t \langle \vw_*, \vx_i \rangle + \tilde{w}_t \langle \vv_*, \vx_i \rangle \\
    &\geq \gamma \hat{w}_t - |\tilde{w}_t| \|\vx_i\| \\
    &\Eqmark{i}{\geq} \gamma \hat{w}_t - \frac{1}{2} \gamma \hat{w}_t \\
    &= \frac{1}{2} \gamma \hat{w}_t \\
    &\Eqmark{ii}{\geq} \gamma \tilde{\lambda} \\
    &\geq \gamma \lambda,
\end{align}
where $(i)$ uses $\|\vx_i\| \leq 1$ and $|\tilde{w}_t| \leq \gamma \hat{w}_t/2$ and
$(ii)$ uses $\hat{w}_t \geq 8 \tilde{\lambda}$ from Lemma \ref{lem:max_margin}.
Therefore
\begin{align}
    F(\vw_t) = \frac{1}{n} \sum_{i=1}^n \log(1 + \exp(-a_t^i)) \leq \frac{1}{n} \sum_{i=1}^n \log(1 + \exp(-\gamma \lambda)) = 1/8\eta,
\end{align}
where the last equality follows from the definition of $\lambda$.
\end{proof}

\begin{lemma} \label{lem:active_pot}
Define $B_t = \{i \in [n]: \tilde{x}_i \tilde{w}_t \leq -\frac{1}{2} \gamma
\hat{w}_t\}$. If $\eta \geq \eta_0$, then for $t < \tau$,
\begin{equation} \label{eq:active_pot}
    \left( 1 - \frac{16}{\eta^3} \right) G(\vw_t) \leq \frac{1}{n} \sum_{i \in B_t} \frac{1}{\exp(a_t^i) + 1} \leq G(\vw_t).
\end{equation}
\end{lemma}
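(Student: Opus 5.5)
The upper bound $\frac{1}{n}\sum_{i\in B_t} \frac{1}{\exp(a_t^i)+1} \leq G(\vw_t)$ is immediate, since every term of $G$ is nonnegative and $B_t \subseteq [n]$. For the lower bound, the plan is to show that the terms with $i \notin B_t$ are negligible compared with $G(\vw_t)$. First I will lower bound $G(\vw_t)$ using $t<\tau$: since $F(\vw_t) > 1/8\eta$, Lemma \ref{lem:pot_lb} (as already used in the proof of Lemma \ref{lem:max_margin}, with $\eta\ge n$) gives $G(\vw_t) \geq 1/(16\eta)$. It then suffices to show
\[
\frac{1}{n}\sum_{i\notin B_t} \frac{1}{\exp(a_t^i)+1} \leq \frac{1}{\eta^4},
\]
because then the complement contributes at most $\frac{16}{\eta^3}\cdot\frac{1}{16\eta} \le \frac{16}{\eta^3}G(\vw_t)$.

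For $i\notin B_t$ we have $\tilde{x}_i\tilde{w}_t > -\frac12\gamma\hat{w}_t$. Hence $a_t^i \geq \gamma\hat{w}_t + \tilde{w}_t\tilde{x}_i > \frac12\gamma\hat{w}_t$. For $t\ge 1$, Lemma \ref{lem:max_margin} gives $\hat{w}_t \geq 8\tilde\lambda = 8\log(8\eta)/\gamma$, so $a_t^i \ge 4\log(8\eta)$. Each such term is then at most $\exp(-a_t^i) \le (8\eta)^{-4} \le \eta^{-4}$, and averaging over at most $n$ indices preserves this bound. This gives the claim for $1\le t<\tau$.

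The case $t=0$ is the main obstacle. There $\vw_0=\vzero$, so $\hat{w}_0=\tilde{w}_0=0$, $B_0=[n]$ (the condition reads $0\le 0$), and the inequality holds trivially. I would also check the boundary conventions, namely that $t<\tau$ is exactly what supplies $F(\vw_t)>1/8\eta$ for Lemma \ref{lem:pot_lb}, and that the constant in that lemma is indeed $1/(16\eta)$ under $\eta\ge n$. The only other delicate point is the constant bookkeeping: the bound $(8\eta)^{-4}$ leaves ample slack relative to $16/\eta^3$.
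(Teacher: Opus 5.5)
Your proof is correct and follows essentially the same route as the paper. Both arguments bound each term with $i \notin B_t$ by $\exp(-\frac{1}{2}\gamma\hat{w}_t) \le (8\eta)^{-4} \le \eta^{-4}$, using $\hat{w}_t \ge 8\tilde{\lambda}$ from Lemma \ref{lem:max_margin}, and then convert this to $\frac{16}{\eta^3}G(\vw_t)$ via $G(\vw_t) \ge 1/(16\eta)$ from Lemma \ref{lem:pot_lb}. Your explicit handling of $t=0$, where $B_0=[n]$ and the complement is empty, is a small gain in rigor, since the paper covers that case only implicitly (vacuously).
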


\begin{proof}
The second desired inequality
\begin{equation}
    \frac{1}{n} \sum_{i \in B_t} \frac{1}{\exp(a_t^i) + 1} \leq G(\vw_t)
\end{equation}
is obvious from the definition of $G(\vw_t)$, so we focus on the first desired
inequality.

For $i \notin B_t$,
\begin{equation} \label{eq:complement_inter_2}
    \langle \vw_t, \vx_i \rangle = \hat{w}_t \langle \vw_*, \vx_i \rangle + \langle \tilde{\vw}_t, \tilde{\vx}_i \rangle \geq \gamma \hat{w}_t - \frac{1}{2} \gamma \hat{w}_t = \frac{1}{2} \gamma \hat{w}_t \geq \gamma \lambda,
\end{equation}
(where the last inequality uses $\hat{w}_t \geq 8 \tilde{\lambda}$ from Lemma
\ref{lem:max_margin}), so
\begin{equation}
    \ell(\langle \vw_t, \vx_i \rangle) \leq \ell(\gamma \lambda) = 1/8\eta,
\end{equation}
and therefore
\begin{equation}
    \frac{1}{n - |B_t|} \sum_{i \notin B_t} \ell(\langle \vw_t, \vx_i \rangle) \leq 1/8\eta.
\end{equation}
Combining with the fact that $F(\vw_t) > 1/8\eta$ (since $t < \tau$), this means
$B_t$ is not empty. Essentially, $B_t$ consists of the data points that have
non-negligible loss. Now, note that for $i \notin B_t$,
\begin{align} \label{eq:active_pot_inter}
    \frac{1}{\exp(\langle \vw_t, \vx_i \rangle) + 1} &\leq \exp(-\langle \vw_t, \vx_i \rangle) \Eqmark{i}{\leq} \exp \left( -\frac{1}{2} \gamma \hat{w}_t \right) \\
    & \Eqmark{ii}{\leq} \exp(-4 \gamma \tilde{\lambda}) = 1/\eta^4 \Eqmark{iii}{\leq} 16 G(\vw_t)/\eta^3,
\end{align}
where $(i)$ uses \Eqref{eq:complement_inter_2}, $(ii)$ uses $\hat{w}_t \geq 8
\tilde{\lambda}$ from Lemma \ref{lem:max_margin}, and $(iii)$ uses $F(\vw_t) \geq
1/8\eta \implies G(\vw_t) \geq 1/(16 \eta)$ from Lemma \ref{lem:pot_lb}. Therefore
\begin{align}
    G(\vw_t) &= \frac{1}{n} \sum_{i \in B_t} \frac{1}{\exp(\langle \vw_t, \vx_i \rangle) + 1} + \frac{1}{n} \sum_{i \notin B_t} \frac{1}{\exp(\langle \vw_t, \vx_i \rangle) + 1} \\
    G(\vw_t) &\leq \frac{1}{n} \sum_{i \in B_t} \frac{1}{\exp(\langle \vw_t, \vx_i \rangle) + 1} + \frac{n - |B_t|}{n} \frac{16 G(\vw_t)}{\eta^3} \\
    G(\vw_t) &\leq \frac{1}{n} \sum_{i \in B_t} \frac{1}{\exp(\langle \vw_t, \vx_i \rangle) + 1} + \frac{16 G(\vw_t)}{\eta^3} \\
    \left( 1 - \frac{16}{\eta^3} \right) G(\vw_t) &\leq \frac{1}{n} \sum_{i \in B_t} \frac{1}{\exp(\langle \vw_t, \vx_i \rangle) + 1}
\end{align}
which proves the first inequality in \Eqref{eq:active_pot}.
\end{proof}

\lemarecursion*

\begin{proof}
The idea is that the data points in $D_t^+$ all have low loss, so they don't contribute
much to each gradient update, while the contributions of each point in $D_t^-$ are all
``aligned" in that each pair has positive dot product.

To make this concrete, denote $m = \argmin_{i \in [n]} a_t^i$, and notice that $m \in
D_t^-$, since for every $i \in D_t^+$:
\begin{equation}
    a_t^i = \langle \vw_t, \vx_i \rangle = \hat{w}_t \langle \vw_*, \vx_i \rangle + \tilde{w}_t \langle \vv_*, \vx_i \rangle \Eqmark{i}{\geq} \hat{w}_t \langle \vw_*, \vx_i \rangle \geq \gamma \hat{w}_t \geq \gamma \tilde{\lambda},
\end{equation}
(where $(i)$ uses $\tilde{w}_t \langle \vv_*, \vx_i \rangle \geq 0$ from the definition
of $D_t^+$) and we know that $\min_{i \in [n]} a_t^i < \gamma \tilde{\lambda}$, since
otherwise $F(\vw_t) \leq \ell(\gamma \tilde{\lambda}) \leq 1/8\eta$. For each $i \in
D_t^-$, we can now bound the change in $a_t^i$ as
follows:
\begin{align}
    a_{t+1}^i - a_t^i &= \langle \vw_{t+1} - \vw_t, \vx_i \rangle \\
    &= \frac{\eta}{n} \sum_{j=1}^n \frac{\langle \vx_j, \vx_i \rangle}{\exp(a_t^j) + 1} \\
    &= \eta \Bigg( \underbrace{\frac{1}{n} \sum_{j \in D_t^-} \frac{\langle \vx_j, \vx_i \rangle}{\exp(a_t^j) + 1}}_{A_1} + \underbrace{\frac{1}{n} \sum_{j \in D_t^+} \frac{\langle \vx_j, \vx_i \rangle}{\exp(a_t^j) + 1}}_{A_2} \Bigg).
\end{align}
We will show that $A_1$ dominates $A_2$. Note that
\begin{align}
    A_1 &\Eqmark{i}{\geq} \frac{\gamma^2}{n} \sum_{j \in D_t^-} \frac{1}{\exp(a_t^j) + 1} \Eqmark{ii}{\geq} \frac{\gamma^2}{n} \sum_{j \in B_t} \frac{1}{\exp(a_t^j) + 1} \Eqmark{iii}{\geq} \gamma^2 \left( 1 - \frac{16}{\eta^3} \right) G(\vw_t) \Eqmark{iv}{\geq} \frac{9}{10} \gamma^2 G(\vw_t),
\end{align}
where $(i)$ uses that for $i, j \in D_t^-$,
\begin{align} \label{eq:common_dot}
    \langle \vx_i, \vx_j \rangle = \langle \vx_i, \vw_* \rangle \langle \vx_j, \vw_* \rangle + \langle \vx_i, \vv_* \rangle \langle \vx_j, \vv_* \rangle \geq \langle \vx_i, \vw_* \rangle \langle \vx_j, \vw_* \rangle \geq \gamma^2,
\end{align}
$(ii)$ uses that $B_t \subset D_t^-$, $(iii)$ uses Lemma \ref{lem:active_pot}, and
$(iv)$ uses $\eta \geq \eta_0 \geq 32 \log(256)$. For $A_2$,
\begin{align}
    |A_2| \leq \frac{1}{n} \sum_{j \in D_t^+} \frac{1}{\exp(a_t^j) + 1} \Eqmark{i}{\leq} \frac{1}{n} \sum_{j \notin B_t} \frac{1}{\exp(a_t^j) + 1} \Eqmark{ii}{\leq} \frac{16}{\eta^3} G(\vw_t),
\end{align}
where $(i)$ uses $B_t \subset D_t^- \implies D_t^+ \subset [n] \backslash B_t$ and
$(ii)$ uses Lemma \ref{lem:active_pot}. Since $\eta \geq \eta_0 \geq 32
\log(256)/\gamma^2 \geq 128/\gamma^2$, this means
\begin{align}
    |A_2| \leq \frac{1}{2^{17}} \gamma^2 G(\vw_t) \leq \frac{1}{2^{16}} A_1,
\end{align}
so
\begin{align}
    a_{t+1}^i - a_t^i = \eta (A_1 + A_2) \geq \frac{2}{3} \eta A_1 = \frac{2 \eta}{3n} \sum_{j \in D_t^-} \frac{\langle \vx_j, \vx_i \rangle}{\exp(a_t^j) + 1}.
\end{align}
Now we can derive the two desired bounds. First, recall that $i \in D_t^-$, so
\begin{align}
    a_{t+1}^i - a_t^i &\geq \frac{2 \eta}{3n} \frac{\|\vx_i\|^2}{\exp(a_t^i) + 1} \geq \frac{\eta \|\vx_i\|^2}{2n (\exp(a_t^i) + 1)}.
\end{align}
Second,
\begin{align}
    a_{t+1}^i - a_t^i &\geq \frac{2 \eta}{3n} \sum_{j \in B_t} \frac{\langle \vx_j, \vx_i \rangle}{\exp(a_t^j) + 1} \Eqmark{i}{\geq} \frac{2 \eta \gamma^2}{3n} \sum_{j \in B_t} \frac{1}{\exp(a_t^j) + 1} \\
    &\Eqmark{ii}{\geq} \frac{2}{3} \left( 1 - \frac{16}{\eta^3} \right) \eta \gamma^2 G(\vw_t) \Eqmark{iii}{\geq} \frac{1}{2} \eta \gamma^2 G(\vw_t).
\end{align}
where $(i)$ uses $\langle \vx_j, \vx_i \rangle \geq \gamma^2$ as in
\Eqref{eq:common_dot}, $(ii)$ uses Lemma \ref{lem:active_pot}, and $(iii)$ uses $\eta
\geq \eta_0 \geq 32 \log(256)$.
\end{proof}

\lemcomplementub*

\begin{proof}
Since $\vw_0 = \vzero$, it clearly holds for $t=0$, and it also holds for $t=1$ since
\begin{equation}
    |\tilde{w}_1| \leq \|\vw_1\| = \|\vw_0 - \eta \nabla F(\vw_0)\| = \eta \|\nabla F(\vzero)\| \leq \eta,
\end{equation}
using that $F$ is $1$-Lipschitz.

Now suppose $|\tilde{w}_t| \leq \eta$ for some $t \geq 1$. We consider two cases. If an
oscillation happens at step $t$, then
\begin{equation}
    |\tilde{w}_{t+1}| \leq |\tilde{w}_{t+1}| + |\tilde{w}_t| \Eqmark{i}{=} |\tilde{w}_{t+1} - \tilde{w}_t| = \eta |\langle \nabla F(\vw_t), \vv_* \rangle| \leq \eta \|\nabla F(\vw_t)\| \Eqmark{ii}{\leq} \eta,
\end{equation}
where $(i)$ uses that $\tilde{w}_{t+1}$ and $\tilde{w}_t$ have opposite sign, and $(ii)$
again uses that $F$ is $1$-Lipschitz.

Now suppose an oscillation does not happen at step $t$, so that $\tilde{w}_{t+1}$ and
$\tilde{w}_t$ have the same sign. First, we claim that $\tilde{w}_{t+1} - \tilde{w}_t$
has opposite sign from $\tilde{w}_t$. To see why, recall the definition of $B_t = \{i
\in [n]: \tilde{x}_i \tilde{w}_t \leq -\frac{1}{2} \gamma \hat{w}_t\}$ from Lemma
\ref{lem:active_pot}, and notice
\begin{align}
    \tilde{w}_t (\tilde{w}_{t+1} - \tilde{w}_t) &= \frac{\eta}{n} \sum_{i=1}^n \frac{\tilde{w}_t \tilde{x}_i}{\exp(a_t^i) + 1} \\
    &= \eta \bigg( \underbrace{\frac{1}{n} \sum_{i \in B_t} \frac{\tilde{w}_t \tilde{x}_i}{\exp(a_t^i) + 1}}_{A_1} + \underbrace{\frac{1}{n} \sum_{i \notin B_t} \frac{\tilde{w}_t \tilde{x}_i}{\exp(a_t^i) + 1}}_{A_2} \bigg).
\end{align}
We can bound $A_1$ as
\begin{align}
    A_1 &= \frac{1}{n} \sum_{i \in B_t} \frac{\tilde{w}_t \tilde{x}_i}{\exp(a_t^i) + 1} \leq -\frac{\gamma \hat{w}_t}{2n} \sum_{i \in B_t} \frac{1}{\exp(a_t^i) + 1} \Eqmark{i}{\leq} -\frac{1}{2} \gamma \hat{w}_t \left( 1 - \frac{16}{\eta^3} \right) G(\vw_t) \\
    &\Eqmark{ii}{\leq} -\frac{1}{4} \eta \gamma^2 \left( 1 - \frac{16}{\eta^3} \right) G(\vw_t) \Eqmark{iii}{\leq} -38 G(\vw_t),
\end{align}
where $(i)$ uses Lemma \ref{lem:active_pot}, $(ii)$ uses $\hat{w}_t \geq \hat{w}_1 \geq
\eta \gamma/2$ from Lemma \ref{lem:max_margin}, and $(iii)$ uses $\eta \geq \eta_0 \geq
32 \log(256)/\gamma^2$. Similarly, we can bound $A_2$ as
\begin{align}
    A_2 &= \frac{1}{n} \sum_{i \notin B_t} \frac{\tilde{w}_t \tilde{x}_i}{\exp(a_t^i) + 1} \Eqmark{i}{\leq} \frac{\eta}{n} \sum_{i \notin B_t} \frac{1}{\exp(a_t^i) + 1} \Eqmark{ii}{\leq} \frac{16}{\eta^2} G(\vw_t) \Eqmark{iii}{\leq} G(\vw_t),
\end{align}
where $(i)$ uses $\tilde{w}_t \tilde{x}_i \leq |\tilde{w}_t| \leq \eta$, $(ii)$ uses
Lemma \ref{lem:active_pot}, and $(iii)$ uses $\eta \geq \eta_0 \geq 16$. So
\begin{equation}
    \tilde{w}_t (\tilde{w}_{t+1} - \tilde{w}_t) = \eta(A_1 + A_2) < 0,
\end{equation}
which proves the claim. So $\text{sign}(\tilde{w}_t) = \text{sign}(\tilde{w}_{t+1})
= -\text{sign}(\tilde{w}_{t+1} - \tilde{w}_t)$, and
\begin{align}
    |\tilde{w}_{t+1} - (\tilde{w}_{t+1} - \tilde{w}_t)| &= |\tilde{w}_{t+1}| + |\tilde{w}_{t+1} - \tilde{w}_t| \\
    |\tilde{w}_t| &= |\tilde{w}_{t+1}| + |\tilde{w}_{t+1} - \tilde{w}_t| \\
    |\tilde{w}_{t+1}| &= |\tilde{w}_t| - |\tilde{w}_{t+1} - \tilde{w}_t| \\
    |\tilde{w}_{t+1}| &\leq |\tilde{w}_t| \leq \eta.
\end{align}
\end{proof}

\lemtimetooscillate*

\begin{proof}
The idea is to analyze $a_t^i$ in two phases: first we show that $a_s^i \geq 0$ for all
$i$ after $\mathcal{O}(n/\gamma)$ iterations, then we derive a recurrence relation on
$G(\vw_s)$ showing that $G(\vw_s) \leq 1/\eta$ after $1/\gamma^2$ iterations, unless an
oscillation happens first.

Let $t_{\text{osc}}$ be the first iteration after $t$ such that either an oscillation
happens at step $t_{\text{osc}}$ or $\tau = t_{\text{osc}}$. From the definition of an
oscillation, we know that $\langle \vw_s, \vv_* \rangle$ does not change sign for $s \in
\{t, \ldots, t_{\text{osc}}\}$, so $D_t^+ = D_{t+1}^+ = \ldots = D_{t_{\text{osc}}}^+$
and $D_t^- = D_{t+1}^- = \ldots = D_{t_{\text{osc}}}^-$. Accordingly, we denote $D_+ =
D_t^+$ and $D_- = D_t^-$.

First, we bound the number of steps until $a_s^i \geq 0$ for every $i$. Note that $a_s^i
\geq 0$ for every $i \in D_+$, since
\begin{equation} \label{eq:oscillate_inter}
    a_s^i = \langle \vw_s, \vx_i \rangle = \hat{w}_s \langle \vw_*, \vx_i \rangle + \tilde{w}_s \langle \vv_*, \vx_i \rangle \Eqmark{i}{\geq} \gamma \hat{w}_s \Eqmark{ii}{\geq} 0,
\end{equation}
where $(i)$ uses $\tilde{w}_s \langle \vv_*, \vx_i \rangle \geq 0$ from the definition
of $D_s^+$ and $(ii)$ uses $\hat{w}_s \geq \hat{w}_0 = 0$ from Lemma
\ref{lem:max_margin}. So we only need to worry about $a_s^i$ for $i \in D_-$.

Suppose for some $i \in D_-$ that $a_t^i < 0$, and let $s \geq t$ such that $a_r^i < 0$
for all $r$ with $t \leq r \leq s$. Then according to Lemma \ref{lem:a_recursion},
\begin{align}
    a_{s+1}^i - a_s^i &\geq \frac{\eta \|\vx_i\|^2}{2n (\exp(a_s^i) + 1)} \geq \frac{\eta \|\vx_i\|^2}{4n},
\end{align}
so that
\begin{align}
    a_s^i - a_t^i &\geq \frac{\eta \|\vx_i\|^2 (s-t)}{4n}.
\end{align}
This means that $a_s^i \geq 0$ when
\begin{align}
    s - t \geq \frac{4n \max(0, -a_t^i)}{\eta \|\vx_i\|^2}.
\end{align}
Note that
\begin{align}
    a_t^i = \langle \vw_t, \vx_i \rangle = \hat{w}_t \langle \vw_*, \vx_i \rangle + \tilde{w}_t \langle \vv_*, \vx_i \rangle \geq -|\tilde{w}_t| \|\vv_*\| \|\vx_i\| \geq -\eta \|\vx_i\|,
\end{align}
where the last inequality uses $|\tilde{w}_t| \leq \eta$ from Lemma
\ref{lem:complement_ub} and $\|\vv_*\| = 1$.  Therefore $a_s^i \geq 0$ when $s -
t \geq 4n/\|\vx_i\|$. Since $\|\vx_i\| \geq \langle \vx_i, \vw_* \rangle \geq
\gamma$, it suffices that $s - t \geq 4n/\gamma$.

Let $t_0$ be the smallest $s \geq t$ such that $a_s^i \geq 0$ for all $i$, so that by
the above $t_0 \leq t + 4n/\gamma$. Note also that $a_s^i \geq 0$ for all $s \in \{t_0,
\ldots, t_{\text{osc}} - 1\}$, since \Eqref{eq:oscillate_inter} holds for all such $s,
i$ and $a_s^i$ is increasing for $i \in D_-$ by Lemma \ref{lem:a_recursion}.

Now we consider $G(\vw_s)$ for the second phase. For all $s \geq t_0$,
\begin{align}
    G(\vw_{s+1}) &= \frac{1}{n} \sum_{i=1}^n \frac{1}{\exp(a_{s+1}^i) + 1} \label{eq:phase2_inter_1} \\
    &\Eqmark{i}{\leq} \frac{\eta^3}{\eta^3 - 16} \frac{1}{n} \sum_{i \in D_-} \frac{1}{\exp(a_{s+1}^i) + 1} \\
    &\leq \frac{\eta^3}{\eta^3 - 16} \frac{1}{n} \sum_{i \in D_-} \exp(-a_{s+1}^i) \\
    &\Eqmark{ii}{\leq} \exp \left( -\frac{1}{2} \eta \gamma^2 G(\vw_s) \right) \frac{\eta^3}{\eta^3 - 16} \frac{1}{n} \sum_{i \in D_-} \exp(-a_s^i) \\
    &\Eqmark{iii}{\leq} \exp \left( -\frac{1}{2} \eta \gamma^2 G(\vw_s) \right) \frac{2 \eta^3}{\eta^3 - 16} \frac{1}{n} \sum_{i \in D_-} \frac{1}{\exp(a_s^i) + 1} \\
    &\Eqmark{iv}{\leq} \frac{129}{64} \exp \left( -\frac{1}{2} \eta \gamma^2 G(\vw_s) \right) \frac{1}{n} \sum_{i \in D_-} \frac{1}{\exp(a_s^i) + 1} \\
    &\leq \frac{129}{64} \exp \left( -\frac{1}{2} \eta \gamma^2 G(\vw_s) \right) G(\vw_s), \label{eq:phase2_inter_2}
\end{align}
where $(i)$ uses Lemma \ref{lem:active_pot} together with $B_{s+1} \subset D_s^- = D_-$, $(ii)$
uses Lemma \ref{lem:a_recursion}, $(iii)$ uses $a_s^i \geq 0$, and $(iv)$ uses $\eta
\geq \eta_0 \geq 32 \log(256)$. We can first bound $G(\vw_{t_0+1})$ as follows. Denoting
$b = \frac{1}{2} \eta \gamma^2$ and $T(z) = 129/64 \exp(-bz) z$, we have $G(\vw_{s+1})
\leq T(G(\vw_s))$. $T$ has a global maximum at $z = 1/b$, so that
\begin{equation} \label{eq:phase2_first_step}
    G(\vw_{t_0+1}) \leq T(1/b) = \frac{129}{64} \exp(-1)/b = \frac{129}{32e \eta \gamma^2} \leq \frac{3}{2 \eta \gamma^2}.
\end{equation}
Knowing this, we perform a similar, slightly stronger derivation as in Equations
\ref{eq:phase2_inter_1} through \ref{eq:phase2_inter_2}. For $s \geq t_0 + 1$,
\begin{align}
    G(\vw_{s+1}) &= \frac{1}{n} \sum_{i=1}^n \frac{1}{\exp(a_{s+1}^i) + 1} \\
    &\Eqmark{i}{\leq} \left( 1 + \frac{16}{\eta^3 - 16} \right) \frac{1}{n} \sum_{i \in B_{s+1}} \frac{1}{\exp(a_{s+1}^i) + 1} \\
    &\Eqmark{ii}{\leq} \left( 1 + \frac{16}{\eta^3 - 16} \right) \frac{1}{n} \sum_{i \in B_{s+1}} \frac{1}{\exp(a_s^i) \exp(\eta \gamma^2 G(\vw_s)/2) + 1} \\
    &\Eqmark{iii}{\leq} \left( 1 + \frac{16}{\eta^3 - 16} \right) \exp \left( -\frac{1}{8} \eta \gamma^2 G(\vw_s) \right) \frac{1}{n} \sum_{i \in B_{s+1}} \frac{1}{\exp(a_s^i) + 1} \\
    &\leq \left( 1 + \frac{16}{\eta^3 - 16} \right) \exp \left( -\frac{1}{8} \eta \gamma^2 G(\vw_s) \right) G(\vw_s), \label{eq:phase2_inter}
\end{align}
where $(i)$ uses Lemma \ref{lem:active_pot}, $(ii)$ uses Lemma \ref{lem:a_recursion}
together with $B_{s+1} \subset D_s^- = D_-$, and $(iii)$ can be justified as follows:
denoting $C = \exp(\eta \gamma^2 G(\vw_s)/2)$, we want to show that
\begin{equation}
    C \exp(a_s^i) + 1 \geq C^{1/4} (\exp(a_s^i) + 1),
\end{equation}
or equivalently,
\begin{align}
    C^{1/4} &\leq \frac{C \exp(a_s^i) + 1}{\exp(a_s^i) + 1} \\
    (\iff) \quad \frac{1}{4} \log C &\leq \log \left( \frac{C \exp(a_s^i) + 1}{\exp(a_s^i) + 1} \right) \\
    (\iff) \quad \frac{1}{4} &\leq \log \left( 1 + \frac{(C-1) \exp(a_s^i)}{\exp(a_s^i) + 1} \right) / \log C \\
    (\impliedby) \quad \frac{1}{4} &\Eqmark{iv}{\leq} \log \left( 1 + \frac{(C-1) \exp(a_s^i)}{\exp(a_s^i) + 1} \right) / (C - 1) \\
    (\impliedby) \quad \frac{1}{4} &\Eqmark{v}{\leq} \frac{(C-1) \exp(a_s^i)}{\exp(a_s^i) + 1} \frac{\exp(a_s^i) + 1}{C \exp(a_s^i) + 1} \frac{1}{C-1} \\
    (\iff) \quad \frac{1}{4} &\leq \frac{1}{C + \exp(-a_s^i)} \\
    (\iff) \quad 4 &\geq C + \exp(-a_s^i),
\end{align}
where $(iv)$ uses $\log(1 + z) \leq z$ and $(v)$ uses $\log(1 + z) \geq z/(1+z)$. The
final condition is satisfied since $a_s^i \geq 0$ and $G(\vw_s) \leq 3/(2 \eta
\gamma^2)$ from \Eqref{eq:phase2_first_step} implies $C \leq \exp(3/4) \leq 3$. This
justifies $(iii)$.

From \Eqref{eq:phase2_inter},
\begin{align}
    G(\vw_{s+1}) &\leq \left( 1 + \frac{16}{\eta^3 - 16} \right) \exp \left( -\frac{1}{8} \eta \gamma^2 G(\vw_s) \right) G(\vw_s) \\
    &\Eqmark{i}{\leq} \left( 1 + \frac{16}{\eta^3 - 16} \right) \left( 1 - \frac{\eta \gamma^2 G(\vw_s)/8}{1 + \eta \gamma^2 G(\vw_s)/8} \right) G(\vw_s) \\
    &\Eqmark{ii}{\leq} \left( 1 + \frac{16}{\eta^3 - 16} \right) \left( 1 - \frac{1}{10} \eta \gamma^2 G(\vw_s) \right) G(\vw_s) \\
    &\leq \left( 1 - \frac{1}{10} \eta \gamma^2 G(\vw_s) + \frac{16}{\eta^3 - 16} \right) G(\vw_s) \\
    &\Eqmark{iii}\leq \left( 1 - \frac{1}{10} \eta \gamma^2 G(\vw_s) + \frac{32}{\eta^3} \right) G(\vw_s) \\
    &= \left( 1 - \left( \frac{1}{10} - \frac{32}{\eta^4 \gamma^2 G(\vw_s)} \right) \eta \gamma^2 G(\vw_s) \right) G(\vw_s) \\
    &\Eqmark{iv}{\leq} \left( 1 - \left( \frac{1}{10} - \frac{512}{\eta^3 \gamma^2} \right) \eta \gamma^2 G(\vw_s) \right) G(\vw_s) \\
    &\Eqmark{v}{\leq} \left( 1 - \frac{1}{12} \eta \gamma^2 G(\vw_s) \right) G(\vw_s)
\end{align}
where $(i)$ uses $\exp(z) \leq 1/(1-z) = 1 + z/(1-z)$, $(ii)$ uses $G(\vw_s)
\leq 3/(2 \eta \gamma^2)$ from \Eqref{eq:phase2_first_step}, $(iii)$ uses $\eta
\geq \eta_0 \geq 32$, $(iv)$ uses $F(\vw_s) > 1/(8 \eta) \implies G(\vw_s) \geq
1/(16\eta)$ by Lemma \ref{lem:pot_lb}, and $(v)$ uses $\eta \geq \eta_0 \geq 32
\log(128)/\gamma^2$.  Finally, we can unroll this recurrence over $s$ after some
manipulation:
\begin{align}
    \frac{1}{G(\vw_s)} &\leq \frac{1}{G(\vw_{s+1})} - \frac{1}{12} \eta \gamma^2 \frac{G(\vw_s)}{G(\vw_{s+1})} \\
    \frac{1}{G(\vw_s)} &\leq \frac{1}{G(\vw_{s+1})} - \frac{1}{12} \eta \gamma^2 \\
    \frac{1}{G(\vw_{s+1})} &\geq \frac{1}{G(\vw_s)} + \frac{1}{12} \eta \gamma^2 \\
    \frac{1}{G(\vw_s)} &\geq \frac{1}{G(\vw_{t_0+1})} + \frac{1}{12} \eta \gamma^2 (s - (t_0 + 1)) \\
    \frac{1}{G(\vw_s)} &\geq \frac{1}{12} \eta \gamma^2 (s - (t_0 + 1)) \\
    G(\vw_s) &\leq \frac{12}{\eta \gamma^2 (s - (t_0 + 1))}.
\end{align}
Now define $t_1 = 1 + t_0 + 192/\gamma^2$. If $\tau \geq t_1$ and no oscillation happens
between $t$ and $t_1$, then $G(\vw_{t_1}) \leq 1/(16 \eta)$, which implies $F(\vw_{t_1})
\leq 1/(8\eta)$ by Lemma \ref{lem:obj_pot_lb}, so that $\tau = t_1$.
\end{proof}

\leminbetween*

\begin{proof}
Without loss of generality, assume that $t_k$ is the earliest iteration where an
oscillation occurs after $t$.

Similarly to the proof of Lemma \ref{lem:time_to_oscillate}, we can partition the
dataset into $D_+$ and $D_-$, where $D_+ = D_s^+$ and $D_- = D_s^-$ for all $s \in \{t,
\ldots, t_k-1\}$. We can then apply Lemma \ref{lem:a_recursion} for each iteration from
$t$ to $t_k$ and conclude that $a_t^i \leq a_{t_k}^i$ for $i \in D_-$. Therefore
\begin{align}
    \hat{w}_{t+1} - \hat{w}_t &= \eta \left\langle \vw_*, -\nabla F(\vw_t) \right\rangle \\
    &= \frac{\eta}{n} \sum_{i=1}^n \frac{\langle \vw_*, \vx_i \rangle}{\exp(a_t^i) + 1} \\
    &\geq \frac{\eta}{n} \sum_{i \in D_-} \frac{\langle \vw_*, \vx_i \rangle}{\exp(a_t^i) + 1} \\
    &\geq \frac{\eta}{n} \sum_{i \in D_-} \frac{\langle \vw_*, \vx_i \rangle}{\exp(a_{t_k}^i) + 1} \\
    &= \eta \Bigg( \underbrace{\frac{1}{n} \sum_{i=1}^n \frac{\langle \vw_*, \vx_i \rangle}{\exp(a_{t_k}^i) + 1}}_{A_1} - \underbrace{\frac{1}{n} \sum_{i \in D_+} \frac{\langle \vw_*, \vx_i \rangle}{\exp(a_{t_k}^i) + 1}}_{A_2} \Bigg). \label{eq:inbetween_inter}
\end{align}
Note that
\begin{equation}
    A_1 \geq \frac{\gamma}{n} \sum_{i=1}^n \frac{1}{\exp(a_{t_k}^i)+1} = \gamma G(\vw_{t_k}),
\end{equation}
and
\begin{equation}
    |A_2| \leq \frac{1}{n} \sum_{i \in D_+} \frac{1}{\exp(a_{t_k}^i) + 1} \Eqmark{i}{\leq} \frac{1}{n} \sum_{i \notin B_{t_k}} \frac{1}{\exp(a_{t_k}^i) + 1} \Eqmark{ii}{\leq} \frac{16}{\eta^3} G(\vw_{t_k}),
\end{equation}
where $(i)$ uses $B_{t_k} \subset D_-$ and $(ii)$ uses Lemma \ref{lem:active_pot}. Using
$\eta \geq \eta_0 \geq 32/\gamma^2$, this means
\begin{equation}
    |A_2| \leq \frac{16}{\eta^3} G(\vw_{t_k}) \leq \frac{1}{2 \eta^2} \gamma^2 G(\vw_{t_k}) \leq \frac{1}{2} \gamma G(\vw_{t_k}) \leq \frac{1}{2} A_1.
\end{equation}
Plugging back to \Eqref{eq:inbetween_inter} yields
\begin{align}
    \hat{w}_{t+1} - \hat{w}_t &\geq \frac{1}{2} \eta A_1 \\
    &\geq \frac{\eta}{2n} \sum_{i=1}^n \frac{\langle \vw_*, \vx_i \rangle}{\exp(a_{t_k}^i) + 1} \\
    &= \frac{\eta}{2} \langle \vw_*, -\nabla F(\vw_{t_k}) \rangle \\
    &= \frac{1}{2} (\hat{w}_{t_k+1} - \hat{w}_{t_k}) \\
    &\Eqmark{i}{\geq} \frac{1}{2} \gamma^2 \hat{w}_{t_k} \\
    &\Eqmark{ii}{\geq} \frac{1}{2} \gamma^2 \hat{w}_t,
\end{align}
where $(i)$ uses Lemma \ref{lem:oscillation_movement} on iteration $t_k$ and $(ii)$ uses
that $\hat{w}_t$ is strictly increasing (Lemma \ref{lem:max_margin}).
\end{proof}

\thmupperbound*

\begin{proof}
The upper bound of $\tau$ was shown in Lemma \ref{lem:transition_time}. For the upper
bound of $F(\vw_t)$, we first show that $F(\vw_t) \leq 1/8\eta$ for all $t \geq \tau$.
This is essentially a repetition of parts of the analysis from \citet{wu2024large} and
\citet{crawshaw2025constant}; we include it here for completeness.

We already know that $F(\vw_{\tau}) \leq 1/8\eta$, so assume that $F(\vw_t) \leq
1/8\eta$ for $t \geq \tau$. We can show that $F(\vw_{t+1}) \leq F(\vw_t)$ by applying
the modified descent inequality of Lemma \ref{lem:stable_descent}, but to do so we need
to verify the condition $\|\vw_{t+1} - \vw_t\| \leq 1$:
\begin{equation}
    \|\vw_{t+1} - \vw_t\| = \eta \|\nabla F(\vw_t)\| \Eqmark{i}{\leq} \eta F(\vw_t) \Eqmark{ii}{\leq} 1/8,
\end{equation}
where $(i)$ uses $\|\nabla F(\vw_t)\| \leq F(\vw_t)$ from Lemma \ref{lem:obj_grad_ub}
and $(ii)$ uses the inductive hypothesis. So we can apply Lemma
\ref{lem:stable_descent}:
\begin{align}
    F(\vw_{t+1}) - F(\vw_t) &\leq \langle \nabla F(\vw_t), \vw_{t+1} - \vw_t \rangle + 4 F(\vw_t) \|\vw_{t+1} - \vw_t\|^2 \\
    &= - \eta \|\nabla F(\vw_t)\|^2 + 4 \eta^2 F(\vw_t) \|\nabla F(\vw_t)\|^2 \\
    &= - \eta \left( 1 - 4 \eta F(\vw_t) \right) \|\nabla F(\vw_t)\|^2 \\
    &\Eqmark{i}{\leq} - \frac{1}{2} \eta \|\nabla F(\vw_t)\|^2 \\
    &\Eqmark{ii}{\leq} - \frac{1}{8} \eta \gamma^2 F(\vw_t)^2, \label{eq:stable_inter_1}
\end{align}
where $(i)$ uses the inductive hypothesis, and $(ii)$ uses Lemma \ref{lem:obj_grad_lb}.
Note that the condition of Lemma \ref{lem:obj_grad_lb} is satisfied here, since
$F(\vw_t) \leq 1/8\eta \leq \log(2)/n$, which implies for every $i \in [n]$ that
\begin{align}
    \log(1 + \exp(-a_t^i)) &= n \cdot \frac{1}{n} \log(1 + \exp(-a_t^i)) \leq n \cdot \frac{1}{n} \sum_{j=1}^n \log(1 + \exp(-a_t^j)) \\
    &= n F(\vw_t) \Eqmark{i}{\leq} n/(8 \eta) \Eqmark{ii}{\leq} \log 2,
\end{align}
where $(i)$ uses the inductive hypothesis and $(ii)$ uses $\eta \geq \eta_0 \geq n$, and
therefore $a_t^i \geq 0$.

By \Eqref{eq:stable_inter_1}, we have $F(\vw_{t+1}) < F(\vw_t)$, which completes the
induction. We can then use \Eqref{eq:stable_inter_1} to get a convergence rate of
$F(\vw_t)$:
\begin{align}
    F(\vw_{t+1}) - F(\vw_t) &\leq -\frac{1}{8} \eta \gamma^2 F(\vw_t)^2 \\
    \frac{1}{F(\vw_t)} - \frac{1}{F(\vw_{t+1})} &\leq -\frac{1}{8} \eta \gamma^2 \frac{F(\vw_t)}{F(\vw_{t+1})} \\
    \frac{1}{F(\vw_{t+1})} &\geq \frac{1}{F(\vw_t)} + \frac{1}{8} \eta \gamma^2 \frac{F(\vw_t)}{F(\vw_{t+1})} \\
    \frac{1}{F(\vw_{t+1})} &\geq \frac{1}{F(\vw_t)} + \frac{1}{8} \eta \gamma^2,
\end{align}
and unrolling back to $t = \tau$,
\begin{align}
    \frac{1}{F(\vw_t)} &\geq \frac{1}{F(\vw_{\tau})} + \frac{1}{8} \eta \gamma^2 (t-\tau) \\
    F(\vw_t) &\leq \frac{8}{1/F(\vw_{\tau}) + \eta \gamma^2 (t-\tau)} \\
    F(\vw_t) &\leq \frac{8}{\eta \gamma^2 (t-\tau)}
\end{align}
\end{proof}

\section{Deferred Proofs from Section \ref{sec:lower_bound}} \label{app:lower_bound}

Recall the definition $\eta_1 := \max \left\{ n, 32/\gamma^2 \log(3/\gamma)
\right\}$. Throughout the proof of Theorem \ref{thm:lower_bound}, we require
$\eta \geq \eta_1$.

\lemclassifylb*

\begin{proof}
Recall the dataset definition:
\begin{align}
    \vx_1 &= (\gamma, -\gamma) \\
    \vx_i &= (\gamma, \sqrt{1-\gamma^2}), \quad \text{for } i \in \{2, \ldots, n\},
\end{align}
The iterate $\vw_1$ after the first step can be computed exactly:
\begin{align}
    \vw_1 &= \vw_0 - \eta \nabla F(\vw_0) \\
    &= \vw_0 + \frac{\eta}{n} \sum_{i=1}^n \frac{\vx_i}{\exp(\langle \vw_0, \vx_i \rangle)+1} \\
    &= \frac{\eta}{2n} \sum_{i=1}^n \vx_i \\
    &= \frac{1}{2} \eta \gamma \ve_1 + \frac{1}{2} \eta \left( \left( 1 - \frac{1}{n} \right) \sqrt{1-\gamma^2} - \frac{\gamma}{n} \right) \ve_2.
\end{align}
The loss for each data point is determined by $\langle \vw_t, \vx_i \rangle$, which for $t=1$ can be computed exactly:
\begin{align}
    a_1^1 &= \langle \vw_1, \vx_1 \rangle \\
    &= \frac{1}{2} \eta \gamma^2 - \frac{1}{2} \eta \gamma \left( \left( 1 - \frac{1}{n} \right) \sqrt{1-\gamma^2} - \frac{\gamma}{n} \right),
\end{align}
and for $i \geq 2$,
\begin{align}
    a_1^i &= \langle \vw_1, \vx_i \rangle \\
    &= \frac{1}{2} \eta \gamma^2 + \frac{1}{2} \eta \sqrt{1-\gamma^2} \left( \left( 1 - \frac{1}{n} \right) \sqrt{1-\gamma^2} - \frac{\gamma}{n} \right).
\end{align}
Note that
\begin{align}
    \left( \left( 1 - \frac{1}{n} \right) \sqrt{1-\gamma^2} - \frac{\gamma}{n} \right) &\geq \frac{5}{6} \sqrt{1-(1/4)^2} - \frac{1}{6} \geq \frac{1}{2}.
\end{align}
since $\gamma \leq 1/4$ and $n \geq 6$. So
\begin{equation} \label{eq:classify_lb_inter}
    a_1^1 \leq \frac{1}{2} \eta \gamma^2 - \frac{1}{4} \eta \gamma \leq -\frac{1}{8} \eta \gamma.
\end{equation}

Let $t_c$ be the first timestep where $a_t^1 > 0$. Then for all $t$ with $1 \leq t <
t_c$,
\begin{align}
    a_{t+1}^1 - a_t^1 &= \langle \vw_{t+1} - \vw_t, \vx_1 \rangle \\
    &= \frac{\eta}{n} \sum_{j=1}^n \frac{\langle \vx_j, \vx_1 \rangle}{\exp(a_t^j) + 1} \\
    &= \frac{\eta}{n} \frac{\|\vx_1\|^2}{\exp(a_t^1) + 1} + \frac{\eta}{n} \sum_{j \geq 2} \frac{\langle \vx_j, \vx_1 \rangle}{\exp(a_t^j) + 1} \\
    &= \frac{2 \eta \gamma^2}{n} \frac{1}{\exp(a_t^1) + 1} + \frac{\eta (\gamma^2 - \gamma \sqrt{1-\gamma^2})}{n} \sum_{j \geq 2} \frac{1}{\exp(a_t^j) + 1} \\
    &\leq \frac{2 \eta \gamma^2}{n} \frac{1}{\exp(a_t^1) + 1} - \frac{\eta \gamma}{2n} \sum_{j \geq 2} \frac{1}{\exp(a_t^j) + 1} \\
    &\leq \frac{2 \eta \gamma^2}{n} - \frac{\eta \gamma}{2n} \sum_{j \geq 2} \frac{1}{\exp(a_t^j) + 1} \\
    &\leq \frac{2 \eta \gamma^2}{n}. \label{eq:classify_lb_inter_2}
\end{align}
Now we can recurse over $t$ from $1$ to $t_c$:
\begin{align}
    a_{t_c}^1 &= a_1^1 + \sum_{t=1}^{t_c-1} (a_{t+1}^1 - a_t^1) \\
    0 &\Eqmark{i}{<} -\eta \gamma/8 + \sum_{t=1}^{t_c-1} (a_{t+1}^1 - a_t^1) \\
    \eta \gamma/8 &\Eqmark{ii}{\leq} 2(t_c-1) \eta \gamma^2/n \\
    t_c &\geq 1 + \frac{n}{16 \gamma},
\end{align}
where $(i)$ uses \Eqref{eq:classify_lb_inter}, and $(ii)$ uses
\Eqref{eq:classify_lb_inter_2}.
\end{proof}

\lemstablelb*

\begin{proof}
Denote $F_i(\vw) = \log(1 + \exp(-\langle \vw, \vx_i \rangle))$, so that $F =
\frac{1}{n} \sum_{i=1}^n F_i$.

\paragraph{Step 1: Constructing the dataset} Let $\vv_1 = (\gamma, -\delta)$ and $\vv_2
= (\gamma, \sqrt{1-\gamma^2})$, where $\delta \in [0, \sqrt{1-\gamma^2}]$. We define a
dataset as
\begin{equation}
    \vx_i = \begin{cases}
        \vv_1 & i \leq k := \lceil n/2 \rceil \\
        \vv_2 & i > k
    \end{cases}
\end{equation}
It is easy to verify that this dataset satisfies Assumption \ref{ass:dataset} and has
maximum margin $\gamma$ with $\vw_* = (1, 0)$. Denoting
\begin{equation}
    \lambda = \log \left( \frac{1}{\exp(2n/k\eta) - 1} \right),
\end{equation}
we want to choose $\delta$ so that $\langle \vw_1, \vv_1 \rangle = \lambda - 1$ and
$\langle \vw_1, \vv_2 \rangle \geq \Omega(\eta)$. Notice that $\log(1+\exp(-\lambda)) =
2n/k\eta$, so if for some $\vw$ we have $\langle \vw, \vv_1 \rangle < \lambda$, then
\begin{align}
    F(\vw) &= \frac{1}{n} \sum_{i=1}^n \log(1 + \exp(-\langle \vw, \vx_i \rangle)) \\
    &= \frac{k}{n} \log(1 + \exp(-\langle \vw, \vv_1 \rangle)) + \frac{n-k}{n} \log(1 + \exp(-\langle \vw, \vv_2 \rangle)) \\
    &> \frac{k}{n} \log(1 + \exp(-\langle \vw, \vv_1 \rangle)) \\
    &> \frac{k}{n} \log(1 + \exp(-\lambda)) \\
    &> \frac{k}{n} \frac{2n}{k \eta} = 2/\eta. \label{eq:lb_obj_cond}
\end{align}

We can find a $\delta$ that satisfies these conditions with a simple derivation. First,
we write $\vw_1$ in closed form:
\begin{equation}
    \vw_1 = - \eta \nabla F(\vzero) = \frac{\eta}{2n} \sum_{i=1}^n \vx_i = \frac{\eta k}{2n} \vv_1 + \frac{\eta (n-k)}{2n} \vv_2
\end{equation}
We want $\delta$ to satisfy:
\begin{align}
    \langle \vw_1, \vv_1 \rangle &= \lambda - 1 \\
    \left\langle \frac{\eta k}{2n} \vv_1 + \frac{\eta (n-k)}{2n} \vv_2, \vv_1 \right\rangle &= \lambda - 1 \\
    \frac{\eta k}{2n} \left( \|\vv_1\|^2 + (n/k-1) \langle \vv_1, \vv_2 \rangle \right) &= \lambda - 1 \\
    \frac{\eta k}{2n} \left( \gamma^2 + \delta^2 + (n/k-1) (\gamma^2 - \delta \sqrt{1-\gamma^2})) \right) &= \lambda - 1 \\
    \delta^2 - (n/k-1) \sqrt{1-\gamma^2} \delta + (n/k) \gamma^2 - (2n/\eta k)(\lambda - 1) &= 0 \label{eq:delta_quad_cond}
\end{align}
This equation has a solution, since the discriminant is
\begin{align}
    (n/k-1)^2 (1-\gamma^2) - 4((n/k) \gamma^2 - (2n/\eta k)(\lambda-1)) &\Eqmark{i}{\geq} (n/k-1)^2 (1-\gamma^2) - 4(n/k) \gamma^2 \\
    &\Eqmark{ii}{\geq} \frac{1}{4} (1-\gamma^2) - 8 \gamma^2 \\
    &= \frac{1}{4} - \frac{33}{4} \gamma^2 \\
    &\Eqmark{iii}{\geq} \frac{1}{4} \left( 1  - \frac{33}{36} \right) > 0,
\end{align}
where $(i)$ uses that $\lambda > 1$, which comes from
\begin{equation}
    \lambda = \log \left( \frac{1}{\exp(2n/k\eta) - 1} \right) \geq \log \left( \frac{1}{\exp(4/\eta) - 1} \right) \geq \log \left( \frac{1}{\exp(4/1152) - 1} \right) \geq 1,
\end{equation}
where the last inequality uses $\eta \geq \eta_1 \geq 32/\gamma^2 \geq 1152$, $(ii)$
uses $k/n \in [1/2, 2/3]$, and $(iii)$ uses $\gamma \leq 1/6$. We will let $\delta_*$ be
the smaller of the two solutions to \Eqref{eq:delta_quad_cond}, that is,
\begin{align}
    \delta_* &= \frac{1}{2} \left( (n/k-1) \sqrt{1-\gamma^2} - \sqrt{(n/k-1)^2 (1-\gamma^2) - 4((n/k) \gamma^2 - (2n/\eta k) (\lambda - 1)} \right) \\
    &= \frac{1}{2} \frac{(n/k-1)^2 (1-\gamma^2) - \left( (n/k-1)^2 (1-\gamma^2) - 4((n/k) \gamma^2 - (2n/\eta k) (\lambda - 1) \right)}{(n/k-1) \sqrt{1-\gamma^2} + \sqrt{(n/k-1)^2 (1-\gamma^2) - 4((n/k) \gamma^2 - (2n/\eta k) (\lambda - 1)}} \\
    &= 2 \frac{(n/k) \gamma^2 - (2n/\eta k) (\lambda - 1) }{(n/k-1) \sqrt{1-\gamma^2} + \sqrt{(n/k-1)^2 (1-\gamma^2) - 4((n/k) \gamma^2 - (2n/\eta k) (\lambda - 1)}}
\end{align}
Notice that
\begin{equation} \label{eq:delta_ub}
    \delta_* \leq 2 \frac{(n/k) \gamma^2}{(n/k-1) \sqrt{1-\gamma^2}} \Eqmark{i}{\leq} \frac{6 \gamma^2}{\sqrt{1 - \gamma^2}} \Eqmark{ii}{\leq} 7 \gamma^2
\end{equation}
where $(i)$ uses $n/k \geq 3/2$ and $(ii)$ uses $\gamma \leq 1/6$. To lower bound
$\delta_*$, we'll need the following fact:
\begin{align}
    \log(k \eta/2n) &\Eqmark{i}{\leq} \log(\eta/3) \Eqmark{ii}{\leq} \log(9/\gamma^2) + \frac{\gamma^2}{9} \left( \frac{\eta}{3} - \frac{9}{\gamma^2} \right) \\
    &\leq 2 \log(3/\gamma) + \frac{1}{27} \eta \gamma^2 \Eqmark{iii}{\leq} \frac{1}{8} \eta \gamma^2 + \frac{1}{27} \eta \gamma^2 \leq \frac{1}{6} \eta \gamma^2,
\end{align}
where $(i)$ uses $k/n \leq 2/3$, $(ii)$ uses concavity of $\log$, and $(iii)$ uses $\eta
\geq 16/\gamma^2 \log(3/\gamma)$. Now we can upper bound $\lambda$ as
\begin{align} \label{eq:lambda_ub}
    \lambda &= \log \left( \frac{1}{\exp(2n/k\eta) - 1} \right) \Eqmark{i}{\leq} \log(k\eta/2n) \leq \frac{1}{6} \eta \gamma^2,
\end{align}
where $(i)$ uses $\exp(z) \geq 1+z$. Finally, we can lower bound $\delta_*$:
\begin{align} \label{eq:delta_lb}
    \delta_* &\geq 2 \frac{(n/k) \gamma^2 - (2n/\eta k) \lambda }{(n/k-1) \sqrt{1-\gamma^2} + \sqrt{(n/k-1)^2 (1-\gamma^2) - 4((n/k) \gamma^2 - (2n/\eta k) \lambda}} \\
    &\Eqmark{i}{\geq} 2 \frac{(n/k) \gamma^2 - (n/3k) \gamma^2 }{(n/k-1) \sqrt{1-\gamma^2} + \sqrt{(n/k-1)^2 (1-\gamma^2) - 4((n/k) \gamma^2 - (n/3k) \gamma^2}} \\
    &= \frac{4}{3} \frac{(n/k) \gamma^2}{(n/k-1) \sqrt{1-\gamma^2} + \sqrt{(n/k-1)^2 (1-\gamma^2) - (8/3) (n/k) \gamma^2}} \\
    &\geq \frac{2}{3} \frac{(n/k) \gamma^2}{(n/k-1) \sqrt{1-\gamma^2}} \Eqmark{ii}{\geq} \frac{4}{3} \gamma^2,
\end{align}
where $(i)$ uses \Eqref{eq:lambda_ub} and $(ii)$ uses $n/k \geq 3/2$ and $\gamma > 0$.
Together, \Eqref{eq:delta_ub} and \Eqref{eq:delta_lb} show that $\delta_* =
\Theta(\gamma^2)$, which we will use in the following analysis. For the remainder of the
proof, we will choose $\delta = \delta_*$, yielding the dataset $\vv_1 = (\gamma,
-\delta_*), \vv_2 = (\gamma, \sqrt{1-\gamma^2})$.

\paragraph{Step 2: $\langle \vw_t, \vv_1 \rangle$ is increasing}
Let $t_s = \min \{t \geq 0 \;|\; \langle \vw_t, \vv_1 \rangle \geq \lambda\}$. We want
to show that $\langle \vw_t, \vv_1 \rangle$ is increasing from $t=1$ to $t=t_s$. For $t
< t_s$,
\begin{align}
    \langle \vw_{t+1}, \vv_1 \rangle - \langle \vw_t, \vv_1 \rangle &= \frac{\eta}{n} \sum_{i=1}^n \frac{\langle \vx_i, \vv_1 \rangle}{\exp(\langle \vw_t, \vx_i \rangle) + 1} \\
    &= \frac{k \eta}{n} \left( \frac{\|\vv_1\|^2}{\exp(\langle \vw_t, \vv_1 \rangle) + 1} + \frac{(n/k-1) \langle \vv_1, \vv_2 \rangle}{\exp(\langle \vw_t, \vv_2 \rangle) + 1} \right) \\
    &\Eqmark{i}{\geq} \frac{k \eta}{n} \left( \frac{\gamma^2}{\exp(\langle \vw_t, \vv_1 \rangle) + 1} - \frac{6(n/k-1) \gamma^2}{\exp(\langle \vw_t, \vv_2 \rangle) + 1} \right) \\
    &\geq \frac{k \eta \gamma^2}{n} \left( \frac{1}{\exp(\lambda) + 1} - \frac{6}{\exp(\langle \vw_t, \vv_2 \rangle) + 1} \right),
\end{align}
where $(i)$ uses
\begin{equation} \label{eq:lb_negative_dot}
    \langle \vv_1, \vv_2 \rangle = \gamma^2 - \delta_* \sqrt{1-\gamma^2} \geq (1 - 7 \sqrt{1-\gamma^2}) \gamma^2 \geq -6 \gamma^2.
\end{equation}
This means that, to show $\langle \vw_{t+1}, \vv_1 \rangle \geq \langle \vw_t, \vv_1
\rangle$, it suffices that
\begin{align}
    \frac{1}{\exp(\lambda) + 1} &\geq \frac{6}{\exp(\langle \vw_t, \vv_2 \rangle) + 1} \\
    (\iff) \quad \exp(\lambda) + 1 &\leq \frac{1}{6} \left( \exp(\langle \vw_t, \vv_2 \rangle) + 1 \right) \\
    (\impliedby) \quad 2 \exp(\lambda) &\leq \frac{1}{6} \exp(\langle \vw_t, \vv_2 \rangle) \\
    (\iff) \quad \lambda + \log(12) &\leq \langle \vw_t, \vv_2 \rangle. \label{eq:lb_inc_cond}
\end{align}
To lower bound $\langle \vw_t, \vv_2 \rangle$, we use $\langle \vw_t, \vv_1 \rangle \leq
\lambda$ to bound $\langle \vw_t, \ve_2 \rangle$:
\begin{align}
    \langle \vw_t, \vv_1 \rangle &< \lambda \\
    \gamma \langle \vw_t, \ve_1 \rangle - \delta_* \langle \vw_t, \ve_2 \rangle &< \lambda \\
    \langle \vw_t, \ve_2 \rangle &> \frac{1}{\delta_*} \left( \gamma \langle \vw_t, \ve_1 \rangle - \lambda \right) \\
    \langle \vw_t, \ve_2 \rangle &> \frac{1}{\delta_*} \left( \gamma \langle \vw_t, \vw_* \rangle - \lambda \right) \\
    \langle \vw_t, \ve_2 \rangle &\Eqmark{i}{>} \frac{1}{\delta_*} \left( \gamma \langle \vw_1, \vw_* \rangle - \lambda \right) \\
    \langle \vw_t, \ve_2 \rangle &> \frac{1}{\delta_*} \left( \frac{1}{2} \eta \gamma^2 - \lambda \right) \\
    \langle \vw_t, \ve_2 \rangle &\Eqmark{ii}{>} \frac{1}{3 \delta_*} \eta \gamma^2 \\
    \langle \vw_t, \ve_2 \rangle &\Eqmark{iii}{>} \frac{1}{21} \eta,
\end{align}
where $(i)$ uses the monotonicity part of Lemma \ref{lem:max_margin}, $(ii)$ uses
\Eqref{eq:lambda_ub}, and $(iii)$ uses \Eqref{eq:delta_ub}. Now we can lower bound
$\langle \vw_t, \vv_2 \rangle$:
\begin{align}
    \langle \vw_t, \vv_2 \rangle &= \langle \vw_t, \ve_1 \rangle \gamma + \langle \vw_t, \ve_2 \rangle \sqrt{1-\gamma^2} \\
    &\Eqmark{i}{\geq} \frac{1}{21} \eta \sqrt{1-\gamma^2} \\
    &\Eqmark{ii}{\geq} \frac{1}{24} \eta \\
    &= \frac{9}{240} \eta + \frac{1}{240} \eta \\
    &\Eqmark{iii}{\geq} \lambda + \log(12)
\end{align}
where $(i)$ uses $\langle \vw_t, \vw_* \rangle \geq \langle \vw_0, \vw_* \rangle = 0$
from Lemma \ref{lem:max_margin}, $(ii)$ uses $\gamma \leq 1/6$, and $(iii)$ uses
\Eqref{eq:lambda_ub}, $\gamma \leq 1/6$, and $\eta \geq \eta_1 \geq 32/\gamma^2 \geq
1152$. This proves \Eqref{eq:lb_inc_cond}, which implies that $\langle \vw_t, \vv_1
\rangle$ is increasing for $t < t_s$.

\paragraph{Step 3: Trajectory Analysis}
Recall from \Eqref{eq:lb_obj_cond} that $\langle \vw_t, \vv_1 \rangle < \lambda$ implies
$F(\vw_t) > 2/\eta$, so to prove the lemma, it suffices to show that $t_s \geq
\Omega(1/\gamma^2)$. To lower bound $t_s$, we upper bound $\langle \vw_t, \vv_1
\rangle$: for all $1 \leq t < t_s$,
\begin{align}
    \langle \vw_{t+1} , \vv_1 \rangle - \langle \vw_t, \vv_1 \rangle &= \frac{\eta}{n} \sum_{i=1}^n \frac{\langle \vx_i, \vv_1 \rangle}{\exp(\langle \vw_t, \vx_i \rangle) + 1} \\
    &= \frac{k \eta}{n} \left( \frac{\|\vv_1\|^2}{\exp(\langle \vw_t, \vv_1 \rangle) + 1} + \frac{(n/k - 1) \langle \vv_1, \vv_2 \rangle}{\exp(\langle \vw_t, \vv_2 \rangle) + 1} \right) \\
    &\Eqmark{i}{\leq} \frac{k \eta}{n} \frac{\|\vv_1\|^2}{\exp(\langle \vw_t, \vv_1 \rangle) + 1} \\
    &= \frac{k \eta}{n} \frac{\gamma^2 + \delta_*^2}{\exp(\langle \vw_t, \vv_1 \rangle) + 1} \\
    &\Eqmark{ii}{\leq} \frac{8k \eta \gamma^2}{n} \frac{1}{\exp(\langle \vw_t, \vv_1 \rangle) + 1} \\
    &\Eqmark{iii}{\leq} \frac{8k \eta \gamma^2}{n} \frac{1}{\exp(\langle \vw_1, \vv_1 \rangle) + 1} \\
    &= \frac{8k \eta \gamma^2}{n} \frac{1}{\exp(\lambda - 1) + 1} \leq \frac{8e k \eta \gamma^2}{n} (\exp(2n/k\eta) - 1) \\
    &\leq \frac{8e k \eta \gamma^2}{n} (\exp(4/\eta) - 1) \Eqmark{iv}{\leq} \frac{8e k \eta \gamma^2}{n} \frac{4/\eta}{4/1152} (\exp(4/1152) - 1) \\
    &\leq \frac{8e k}{n} 1152 (\exp(4/1152) - 1) \gamma^2 \leq 59 \gamma^2,
\end{align}
where $(i)$ uses $\langle \vv_1, \vv_2 \rangle \leq 0$ from \Eqref{eq:lb_negative_dot},
$(ii)$ uses $\delta_* \leq 7 \gamma^2$ from \Eqref{eq:delta_ub}, $(iii)$ uses that
$\langle \vw_t, \vv_1 \rangle$ is increasing for $t < t_s$, and $(iv)$ uses convexity of
$\exp$ together with $\eta \geq \eta_1 \geq 32/\gamma^2 \geq 1152$. Finally, this means
that
\begin{align}
    \langle \vw_{t_s}, \vv_1 \rangle &\leq \langle \vw_1, \vv_1 \rangle + 59 \gamma^2 (t_s-1) \\
    \lambda &\leq (\lambda - 1) + 59 \gamma^2 (t_s-1) \\
    t_s &\geq 1 + \frac{1}{59 \gamma^2}.
\end{align}
\end{proof}

\thmlowerbound*

\begin{proof}
The result follows more or less immediately from Lemmas \ref{lem:classify_lb} and
\ref{lem:stable_lb}. Recalling
\begin{align}
    t_c &= \min \left\{ t \geq 0 : \langle \vw_t, \vx_i \rangle \geq 0 \text{ for all } i \in [n] \right\} \\
    t_s &= \min \left\{ t \geq 0 : F(\vw_t) \leq 2/\eta \right\},
\end{align}
We know that $\tau \geq t_c$, since
\begin{equation}
    \langle \vw_t, \vx_i \rangle \leq 0 \implies F(\vw_t) \geq \frac{1}{n} \log(1 + \exp(-\langle \vw_t, \vx_i \rangle)) \geq \frac{\log 2}{n} \geq \frac{1}{8 \eta},
\end{equation}
where the last line uses $\eta \geq \eta_1 \geq n$. Also, $\tau \geq t_s$ is
immediate from definitions.

The only detail needing consideration is the condition $n \geq 6$ for Lemma
\ref{lem:classify_lb}. If this condition is not met, then $n \leq 5 \leq
1/\gamma$, so Lemma \ref{lem:stable_lb} implies
\begin{equation}
    \tau \geq t_s \geq \frac{1}{59 \gamma^2} = \frac{1}{118 \gamma^2} + \frac{1}{118 \gamma^2} \geq \frac{1}{118 \gamma^2} + \frac{n}{118 \gamma}.
\end{equation}
If $n \geq 6$, then Lemmas \ref{lem:classify_lb} and \ref{lem:stable_lb} imply
\begin{equation}
    \tau \geq \max \left\{ t_c, t_s \right\} \geq \max \left\{ \frac{n}{16 \gamma}, \frac{1}{59 \gamma^2} \right\} \geq \frac{1}{2} \left( \frac{n}{16 \gamma} + \frac{1}{59 \gamma^2} \right) = \frac{n}{32 \gamma} + \frac{1}{118 \gamma^2}.
\end{equation}
In all cases, we have $\tau \geq (n/\gamma + 1/\gamma^2)/118$.
\end{proof}

\section{Auxiliary Lemmas}

\begin{lemma} \label{lem:pot_lb}
Denote $G(\vw) = \frac{1}{n} \sum_{i=1}^n \frac{1}{\exp(\langle \vw, \vx_i \rangle) +
1}$. If $F(\vw) \geq c$, then $G(\vw) \geq \frac{1 - \exp(-nc)}{n}$. If additionally $c
\leq 1/n$, then $G(\vw) \geq c/2$.
\end{lemma}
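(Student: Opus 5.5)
Write $a_i = \langle \vw, \vx_i\rangle$, $\ell_i = \log(1+\exp(-a_i))$ and $g_i = 1/(\exp(a_i)+1)$. The key observation is that the per-sample quantities are linked exactly:
\begin{equation*}
g_i = 1 - \exp(-\ell_i).
\end{equation*}
This holds because $\exp(-\ell_i) = 1/(1+\exp(-a_i)) = \exp(a_i)/(\exp(a_i)+1)$. The map $\phi(z) = 1 - \exp(-z)$ is concave and increasing on $[0,\infty)$ with $\phi(0)=0$. Hence $G(\vw) = \frac1n \sum_i \phi(\ell_i)$, and the task reduces to lower bounding an average of a concave function given a lower bound on the average of its arguments, $\frac1n\sum_i \ell_i = F(\vw) \ge c$.

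For the first claim, I use subadditivity of $\phi$, which follows from concavity together with $\phi(0)=0$:
\begin{equation*}
\sum_i \phi(\ell_i) \ge \phi\Big(\sum_i \ell_i\Big).
\end{equation*}
Concretely, $1-\prod_i(1-\phi(\ell_i)) \le \sum_i \phi(\ell_i)$ by a union-bound type inequality, and the left side is exactly $1-\exp(-\sum_i \ell_i)$. Since $\sum_i \ell_i = nF(\vw) \ge nc$ and $\phi$ is increasing, this gives $nG(\vw) \ge 1-\exp(-nc)$, which is the first claim. The worst case is genuinely when all the loss sits on one sample, so Jensen's inequality in the other direction would not suffice; subadditivity is the right tool.

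For the second claim, assume $c \le 1/n$, so that $nc \le 1$. I use the elementary bound $1-\exp(-z) \ge z/2$ for $z\in[0,1]$. This holds by concavity: the function lies above its chord from $0$ to $1$, which has slope $1-e^{-1} \ge 1/2$. Applying it with $z = nc$ gives
\begin{equation*}
G(\vw) \ge \frac{1-\exp(-nc)}{n} \ge \frac{nc/2}{n} = \frac{c}{2}.
\end{equation*}

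The only step requiring care is the subadditivity inequality. It follows cleanly from $\prod_i(1-p_i) \ge 1-\sum_i p_i$ for $p_i\in[0,1]$, which is proved by induction on the number of factors. The rest is routine.
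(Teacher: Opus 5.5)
Your proof is correct, but you reach the first bound with a different key lemma than the paper. The paper makes the same change of variables $g_i = 1-\exp(-\ell_i)$, but then treats the bound as an optimization problem. It relaxes to $\ell_i \ge 0$, argues that the constraint $\frac1n\sum_i \ell_i \ge c$ can be taken with equality, and invokes Karamata's inequality: $(nc,0,\ldots,0)$ majorizes every other feasible sorted vector and $z\mapsto e^{-z}$ is convex, so the extremum puts all the loss on a single sample. You instead use subadditivity of $\phi(z)=1-e^{-z}$, via the product bound $\prod_i(1-p_i)\ge 1-\sum_i p_i$ and the identity $\prod_i(1-g_i)=\exp(-\sum_i \ell_i)$. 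You then handle the inequality constraint directly through monotonicity of $\phi$.

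Your route is shorter and self-contained: it needs no external majorization theorem and no reduction to an equality constraint. The paper's framing makes explicit that $\frac{1-\exp(-nc)}{n}$ is the exact value of the relaxed problem, so the bound cannot be improved using only $F(\vw)\ge c$; you note this worst case only informally. The two arguments are close in spirit. Subadditivity of a concave function vanishing at $0$ is exactly Karamata's inequality with majorizing vector $(\sum_i \ell_i,0,\ldots,0)$. Your second part, the chord bound $1-e^{-z}\ge(1-e^{-1})z$ on $[0,1]$, matches the paper's.
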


\begin{proof}
We want to lower bound the solution of the following:
\begin{align}
    \inf_{\vw \in \mathbb{R}^d} &~\frac{1}{n} \sum_{i=1}^n \frac{1}{\exp(\langle \vw, \vx_i \rangle) + 1} \label{eq:pot_lb_inter_1} \\
    \text{s.t.} &~\frac{1}{n} \sum_{i=1}^n \log(1 + \exp(-\langle \vw, \vx_i \rangle)) \geq c.
\end{align}
The solution of \Eqref{eq:pot_lb_inter_1} is lower bounded by the following:
\begin{align}
    \inf_{a_1, \ldots, a_n \in \mathbb{R}} &~\frac{1}{n} \sum_{i=1}^n \frac{1}{\exp(a_i) + 1} \label{eq:pot_lb_inter_2} \\
    \text{s.t.} &~\frac{1}{n} \sum_{i=1}^n \log(1 + \exp(-a_i)) \geq c.
\end{align}
Changing variables to $\ell_i = \log(1 + \exp(-a_i))$, we have
\begin{equation}
    \frac{1}{\exp(a_i) + 1} = 1 - \exp(-\ell_i),
\end{equation}
so \Eqref{eq:pot_lb_inter_2} can be rewritten as
\begin{align}
    \inf_{\ell_1, \ldots, \ell_n \geq 0} &~\frac{1}{n} \sum_{i=1}^n (1 - \exp(-\ell_i)) \\
    \text{s.t.} &~\frac{1}{n} \sum_{i=1}^n \ell_i \geq c,
\end{align}
or $G(\vw) \geq 1 - \phi$, where $\phi$ is the solution of
\begin{align}
    \sup_{\ell_1, \ldots, \ell_n \geq 0} &~\frac{1}{n} \sum_{i=1}^n \exp(-\ell_i) \label{eq:pot_lb_inter_3} \\
    \text{s.t.} &~\frac{1}{n} \sum_{i=1}^n \ell_i \geq c.
\end{align}
Note that \Eqref{eq:pot_lb_inter_3} is equivalent to
\begin{align}
    \sup_{\ell_1, \ldots, \ell_n \geq 0} &~\frac{1}{n} \sum_{i=1}^n \exp(-\ell_i) \label{eq:pot_lb_inter_4} \\
    \text{s.t.} &~\frac{1}{n} \sum_{i=1}^n \ell_i = c,
\end{align}
since the supremum of \Eqref{eq:pot_lb_inter_3} will not be achieved when $\frac{1}{n}
\sum_{i=1}^n \ell_i > c$. Now, the supremum of \Eqref{eq:pot_lb_inter_4} is achieved by
$\ell_1 = cn$ and $\ell_i = 0$ for $i \geq 2$, and this is shown by Karamata's
inequality (Lemma \ref{lem:karamatas}): for any other $\ell_1' \geq \ldots \geq \ell_n'$
with $\frac{1}{n} \sum_{i=1}^n \ell_i' = c$, all conditions of Lemma \ref{lem:karamatas}
are satisfied, so that $\frac{1}{n} \sum_{i=1}^n \exp(-\ell_i) \geq \frac{1}{n}
\sum_{i=1}^n \exp(-\ell_i')$. Therefore $\phi = 1 - \frac{1 - \exp(-cn)}{n}$, and
\begin{equation}
    G(\vw) \geq 1 - \phi = \frac{1 - \exp(-cn)}{n},
\end{equation}
which is exactly the desired conclusion.

If we additionally have $c \leq 1/n$, then
\begin{align}
    G(\vw) \geq \frac{1 - \exp(-cn)}{n} \Eqmark{i}{\geq} cn \frac{1 - \exp(-1)}{n} + (1 - cn) \frac{1 - \exp(0)}{n} = c(1 - 1/e) \geq c/2,
\end{align}
where $(i)$ uses concavity of $-\exp(\cdot)$.
\end{proof}

\begin{lemma}[Karamata's Inequality, Theorem 1 \citep{kadelburg2005inequalities}] \label{lem:karamatas}
If $f: \mathbb{R} \rightarrow \mathbb{R}$ is convex, and $a_1, \ldots, a_n, b_1, \ldots,
b_n \in \mathbb{R}$ are such that
\begin{enumerate}
    \item $a_1 \geq \ldots \geq a_n$ and $b_1 \geq \ldots \geq b_n$,
    \item $a_1 + \ldots + a_i \geq b_1 + \ldots + b_i$ for every $i \leq n$,
    \item $a_1 + \ldots + a_n = b_1 + \ldots + b_n$,
\end{enumerate}
then $f(a_1) + \ldots + f(a_n) \geq f(b_1) + \ldots + f(b_n)$.
\end{lemma}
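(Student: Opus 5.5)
This is Karamata's inequality, cited from the literature. Here is how I would prove it directly, using Abel summation together with monotone slopes of the convex function $f$.

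First, dispose of ties. Wherever $a_i = b_i$, the terms $f(a_i)$ and $f(b_i)$ cancel, so those indices contribute nothing. For every other index define the divided difference
\[
c_i = \frac{f(a_i) - f(b_i)}{a_i - b_i}.
\]
When $a_i = b_i$, set $c_i$ equal to any one-sided derivative of $f$ at $a_i$; such a derivative exists because a convex function on $\mathbb{R}$ has finite one-sided derivatives everywhere. In every case $f(a_i) - f(b_i) = c_i (a_i - b_i)$.

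The key claim is that $c_1 \geq c_2 \geq \cdots \geq c_n$. For convex $f$, the slope of the chord between $x$ and $y$ is nondecreasing in each endpoint. Since $a_i \geq a_{i+1}$ and $b_i \geq b_{i+1}$, the chord defining $c_i$ has both endpoints at least as large as those of the chord defining $c_{i+1}$. Hence $c_i \geq c_{i+1}$. This monotonicity must also be checked in the degenerate cases, where a one-sided derivative stands in for a chord slope. The three-slope inequality for convex functions covers these cases, provided the one-sided derivative is chosen consistently at ties; the right derivative works. I expect this degenerate-case bookkeeping to be the only real obstacle, and it is routine.

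Next, apply Abel summation. Set $A_k = \sum_{i \leq k} a_i$ and $B_k = \sum_{i \leq k} b_i$, with $A_0 = B_0 = 0$. Then
\[
\sum_{i=1}^n \bigl(f(a_i) - f(b_i)\bigr) = \sum_{i=1}^n c_i \bigl((A_i - B_i) - (A_{i-1} - B_{i-1})\bigr) = \sum_{i=1}^{n-1} (c_i - c_{i+1})(A_i - B_i) + c_n (A_n - B_n).
\]
Finally, read off the signs. Condition 3 gives $A_n - B_n = 0$, so the last term vanishes. Condition 2 gives $A_i - B_i \geq 0$ for each $i$, and the slope claim gives $c_i - c_{i+1} \geq 0$. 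Every remaining term is therefore nonnegative, so $\sum_i f(a_i) \geq \sum_i f(b_i)$.
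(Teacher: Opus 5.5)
Your proof is correct, and it is the standard divided-difference plus Abel-summation argument used in the cited source \citet{kadelburg2005inequalities}; the paper gives no proof of its own and simply invokes the result. The one detail to make explicit is the one you flag: at every tie fix $c_i = f'_+(a_i)$ (an arbitrary one-sided choice is not enough), and derive $c_i \geq c_{i+1}$ by passing through the intermediate pair $(a_{i+1}, b_i)$, using that the chord-slope function extended by $f'_+$ on the diagonal is nondecreasing in each argument. This also covers the case where the intermediate pair is degenerate even though neither $c_i$ nor $c_{i+1}$ is.
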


\begin{lemma}[Lemma 4.5 from \citet{crawshaw2025constant}] \label{lem:stable_descent}
For $\vw, \vw' \in \mathbb{R}^d$, if $\|\vw - \vw'\| \leq 1$ then
\begin{equation}
    F(\vw') \leq F(\vw) + \langle \nabla F(\vw), \vw' - \vw \rangle + 4 F(\vw) \|\vw - \vw'\|^2.
\end{equation}
\end{lemma}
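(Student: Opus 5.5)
The plan is a second-order Taylor expansion of $F$ along the segment from $\vw$ to $\vw'$, with the Hessian at every point of the segment bounded by a constant multiple of $F(\vw)$. Write $\vv_s = \vw + s(\vw' - \vw)$ for $s \in [0,1]$ and $\vd = \vw' - \vw$ (a local symbol, not one of the paper's macros). The integral form of Taylor's theorem gives
\begin{equation}
F(\vw') = F(\vw) + \langle \nabla F(\vw), \vd \rangle + \int_0^1 (1-s)\, \vd^\top \nabla^2 F(\vv_s) \vd \, ds .
\end{equation}
It therefore suffices to show that $\vd^\top \nabla^2 F(\vv_s) \vd \leq e\, F(\vw) \|\vd\|^2$ for every $s \in [0,1]$. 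The remainder is then at most $\tfrac{e}{2} F(\vw)\|\vd\|^2 \leq 4 F(\vw) \|\vd\|^2$.

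For the Hessian bound, compute $\nabla^2 F(\vv) = \frac{1}{n}\sum_i \ell''(\langle \vv, \vx_i\rangle)\, \vx_i \vx_i^\top$. With $\|\vx_i\| \leq 1$ this gives $\vd^\top \nabla^2 F(\vv)\vd \leq \frac{1}{n}\sum_i \ell''(\langle \vv, \vx_i\rangle)\|\vd\|^2$. Two scalar facts about $\ell$ then finish the argument.

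\textbf{(a) Curvature is bounded by the loss.} We have $\ell''(z) \leq \ell(z)$ for all $z$. Indeed, $\ell''(z) = \sigma(z)\sigma(-z) \leq \sigma(-z) = 1/(1+e^{z})$. Setting $u = e^{-z}$, this equals $u/(1+u)$, and $u/(1+u) \leq \log(1+u) = \ell(z)$.

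\textbf{(b) The loss is stable under bounded shifts.} We have $\ell(z - \delta) \leq e^{\delta}\ell(z)$ for $\delta \geq 0$. This is $\log(1 + c u) \leq c \log(1+u)$ with $c = e^\delta \geq 1$ and $u = e^{-z}$. That inequality holds because $u \mapsto \log(1+u)$ is concave and vanishes at $0$, so $\log(1+u) = \log\big(1 + \tfrac{1}{c}(cu)\big) \geq \tfrac{1}{c}\log(1+cu)$.

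Applying these at $\vv_s$: we have $|\langle \vv_s, \vx_i\rangle - \langle \vw, \vx_i\rangle| \leq s\|\vd\|\|\vx_i\| \leq 1$. Since $\ell$ is decreasing, the worst case is a downward shift of at most $1$. Combining (a) and (b) gives $\ell''(\langle \vv_s, \vx_i\rangle) \leq \ell(\langle \vv_s, \vx_i\rangle) \leq e\,\ell(\langle \vw, \vx_i\rangle)$. Averaging over $i$ yields $\vd^\top\nabla^2 F(\vv_s)\vd \leq e F(\vw)\|\vd\|^2$, as required. Substituting into the Taylor remainder completes the proof.

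The only step that needs real care is (b), the self-bounding stability of $\ell$ under a bounded shift. This is exactly where the hypothesis $\|\vw - \vw'\| \leq 1$ (together with $\|\vx_i\| \leq 1$) enters. I would prove it via the concavity argument above rather than through derivative estimates. Everything else is routine.
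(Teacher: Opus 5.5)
Your proof is correct and complete. The integral Taylor remainder, the curvature bound $\ell''(z)=\sigma(z)\sigma(-z)\le \sigma(-z)=u/(1+u)\le \log(1+u)=\ell(z)$, and the concavity argument for $\ell(z-\delta)\le e^{\delta}\ell(z)$ all check out, and the shift bound is applied with $\delta\le s\|\vw'-\vw\|\|\vx_i\|\le 1$.

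The paper itself gives no proof of this lemma; it simply imports it as Lemma 4.5 of \citet{crawshaw2025constant}, so there is no in-paper argument to compare against. Your argument is self-contained and in fact gives the sharper constant $e/2$ in place of $4$.
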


\begin{lemma}[Lemma 25 from \citet{crawshaw2025local}] \label{lem:obj_grad_ub}
$\|\nabla F(\vw)\| \leq F(\vw)$ for all $\vw \in \mathbb{R}^d$.
\end{lemma}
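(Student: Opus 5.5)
Lemma~\ref{lem:obj_grad_ub} says $\|\nabla F(\vw)\| \leq F(\vw)$ for all $\vw$. I will prove it per sample and then average using the triangle inequality. The gradient is
\begin{equation}
    \nabla F(\vw) = -\frac{1}{n}\sum_{i=1}^n \frac{\vx_i}{\exp(\langle \vw, \vx_i\rangle)+1},
\end{equation}
so
\begin{equation}
    \|\nabla F(\vw)\| \leq \frac{1}{n}\sum_{i=1}^n \frac{\|\vx_i\|}{\exp(\langle \vw, \vx_i\rangle)+1} \leq \frac{1}{n}\sum_{i=1}^n |\ell'(a_i)|,
\end{equation}
where $a_i = \langle \vw, \vx_i \rangle$ and the second inequality uses Assumption~\ref{ass:dataset}(b). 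This sum is exactly the gradient potential $G(\vw)$. Since $F(\vw) = \frac{1}{n}\sum_i \ell(a_i)$, the claim follows if we show the scalar inequality $|\ell'(z)| \leq \ell(z)$ for every $z \in \mathbb{R}$, and then sum over $i$.

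For the scalar inequality, set $u = \exp(-z) > 0$. Then
\begin{equation}
    |\ell'(z)| = \frac{1}{\exp(z)+1} = \frac{u}{1+u}, \qquad \ell(z) = \log(1+u).
\end{equation}
The bound $\log(1+u) \geq u/(1+u)$ for $u > -1$ is the elementary inequality already used in the paper. One way to see it is that $h(u) = \log(1+u) - u/(1+u)$ satisfies $h(0) = 0$ and $h'(u) = u/(1+u)^2 \geq 0$ for $u \geq 0$. So $|\ell'(z)| \leq \ell(z)$, and averaging over $i$ gives $\|\nabla F(\vw)\| \leq G(\vw) \leq F(\vw)$.

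No step is a real obstacle. The only care needed is to pass through the triangle inequality and the bound $\|\vx_i\| \leq 1$ before comparing to $F$. The comparison itself is a pointwise scalar inequality, so it needs no convexity or dimension argument.
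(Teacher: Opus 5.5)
Your proof is correct. The paper does not prove this lemma itself (it is imported from the cited reference), but your argument is the standard one: the triangle inequality with $\|\vx_i\|\le 1$ gives $\|\nabla F(\vw)\|\le G(\vw)$, exactly as in the proof of Lemma~\ref{lem:oscillation_movement}, and the pointwise bound $|\ell'(z)|\le \ell(z)$, obtained from $\log(1+u)\ge u/(1+u)$, then gives $G(\vw)\le F(\vw)$.
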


\begin{lemma}[Lemma 26 from \citet{crawshaw2025local}] \label{lem:obj_grad_lb}
For all $\vw \in \mathbb{R}^d$, if $\langle \vw, \vx_i \rangle \geq 0$ for all $i \in
[n]$, then
\begin{equation}
    \|\nabla F(\vw)\| \geq \frac{\gamma}{2} F(\vw).
\end{equation}
\end{lemma}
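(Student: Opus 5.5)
The plan is to lower bound the gradient norm by its component along the max-margin direction $\vw_*$, and then compare the gradient potential $G$ with the loss $F$ pointwise, one sample at a time. The nonnegativity hypothesis $\langle \vw, \vx_i \rangle \geq 0$ is needed only in the second step.

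\textbf{Step 1 (projection onto $\vw_*$).} Since $\|\vw_*\| = 1$, Cauchy--Schwarz gives $\|\nabla F(\vw)\| \geq \langle -\nabla F(\vw), \vw_* \rangle$. Writing $a_i = \langle \vw, \vx_i \rangle$, this inner product equals
\begin{equation}
    \frac{1}{n} \sum_{i=1}^n \frac{\langle \vx_i, \vw_* \rangle}{\exp(a_i) + 1},
\end{equation}
which is exactly the computation behind \Eqref{eq:margin_pot}. Every summand has a positive denominator and satisfies $\langle \vx_i, \vw_* \rangle \geq \gamma$ by definition of the margin (Assumption \ref{ass:dataset}(c) sets all labels to $1$). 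Hence $\|\nabla F(\vw)\| \geq \gamma G(\vw)$.

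\textbf{Step 2 (compare $G$ with $F$ termwise).} It then suffices to show $G(\vw) \geq F(\vw)/2$, and I will prove the scalar inequality $\frac{1}{\exp(a)+1} \geq \frac{1}{2}\log(1+\exp(-a))$ for every $a \geq 0$.
\begin{itemize}
    \item Upper bound on the loss: $\log(1+z) \leq z$ gives $\log(1+\exp(-a)) \leq \exp(-a)$.
    \item Lower bound on the potential: $\frac{1}{\exp(a)+1} = \frac{\exp(-a)}{1+\exp(-a)}$, and $a \geq 0$ gives $1+\exp(-a) \leq 2$, so this is at least $\exp(-a)/2$.
\end{itemize}
Combining the two bounds gives the scalar inequality. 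Applying it with $a = a_i \geq 0$ for each $i$ and averaging over $i$ yields $G(\vw) \geq F(\vw)/2$. Chaining with Step 1 gives $\|\nabla F(\vw)\| \geq \gamma G(\vw) \geq \frac{\gamma}{2} F(\vw)$.

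There is no substantive obstacle. The only point requiring care is Step 2, because the inequality $1+\exp(-a) \leq 2$ is exactly where the hypothesis $a_i \geq 0$ enters; for misclassified points the potential saturates at $1$ while the loss grows without bound.
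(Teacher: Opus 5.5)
Your proof is correct. The paper imports this lemma from \citet{crawshaw2025local} without proof, but your argument is exactly the chaining of the paper's own ingredients: Step 1 is the bound $\langle -\nabla F(\vw), \vw_* \rangle \geq \gamma G(\vw)$ behind \Eqref{eq:margin_pot}, and Step 2 is the argument of Lemma \ref{lem:obj_pot_lb}, giving $F(\vw) \leq 2 G(\vw)$ when all $\langle \vw, \vx_i \rangle \geq 0$.
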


\begin{lemma} \label{lem:obj_pot_lb}
If $\langle \vw, \vx_i \rangle \geq 0$ for all $i \in [n]$, then
\begin{equation}
    F(\vw) \leq 2 G(\vw).
\end{equation}
\end{lemma}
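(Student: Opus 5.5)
We want to show $F(\vw) \leq 2G(\vw)$ whenever every margin $a_i := \langle \vw, \vx_i \rangle$ is nonnegative. The plan is to compare the two averages term by term, since both are averages over $i$ of functions of $a_i$ alone:
\begin{equation*}
F(\vw) = \frac{1}{n}\sum_{i=1}^n \ell(a_i), \qquad G(\vw) = \frac{1}{n}\sum_{i=1}^n \frac{1}{e^{a_i}+1}.
\end{equation*}
It therefore suffices to prove the scalar inequality
\begin{equation*}
\log(1+e^{-a}) \leq \frac{2}{e^a+1} \qquad \text{for all } a \geq 0,
\end{equation*}
and then average over $i \in [n]$.

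To prove the scalar inequality, substitute $u = e^{-a} \in (0,1]$. The right-hand side becomes
\begin{equation*}
\frac{2}{e^a+1} = \frac{2u}{1+u},
\end{equation*}
and the left-hand side becomes $\log(1+u)$.

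I would use the elementary bound $\log(1+u) \leq u$. It then suffices to check that $u \leq 2u/(1+u)$. This is equivalent to $1+u \leq 2$, which holds because $u \leq 1$, and that in turn is exactly where the hypothesis $a \geq 0$ enters.

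There is no real obstacle here. The only point needing care is to note explicitly that the sign condition $a_i \geq 0$ is what gives $u \leq 1$, since the inequality fails for very negative $a$: there the left side grows linearly in $|a|$ while the right side stays bounded by $2$.
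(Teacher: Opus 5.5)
Your proof is correct and follows essentially the same route as the paper. Both compare the two averages term by term, bound $\log(1+e^{-a_i}) \leq e^{-a_i}$, and then use $a_i \geq 0$ to get $e^{-a_i} \leq 2/(e^{a_i}+1)$; your step $u \leq 2u/(1+u)$ is this same inequality after the substitution $u = e^{-a_i}$.
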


Note: This lemma appears as an intermediate step in the analysis of \citet{wu2024large}.
We include it here for completeness.

\begin{proof}
\begin{align}
    F(\vw) &= \frac{1}{n} \sum_{i=1}^n \log(1 + \exp(-\langle \vw, \vx_i \rangle)) \leq \frac{1}{n} \sum_{i=1}^n \exp(-\langle \vw, \vx_i \rangle) \\
    &\leq \frac{1}{n} \sum_{i=1}^n \frac{1}{\exp(\langle \vw, \vx_i \rangle)} \Eqmark{i}{\leq} \frac{1}{n} \sum_{i=1}^n \frac{2}{\exp(\langle \vw, \vx_i \rangle) + 1} \\
    &\leq 2 G(\vw),
\end{align}
where $(i)$ uses $\langle \vw, \vx_i \rangle \geq 0$.
\end{proof}

\end{document}